\documentclass{article}

\usepackage[nonatbib, preprint]{neurips_2026}

\usepackage[numbers]{natbib}
\usepackage[utf8]{inputenc} 
\usepackage[T1]{fontenc}    
\usepackage{hyperref}       
\usepackage{url}            
\usepackage{booktabs}       
\usepackage{amsmath}        
\usepackage{amssymb}        
\usepackage{amsthm}         
\usepackage{amsfonts}       
\usepackage{nicefrac}       
\usepackage{microtype}      
\usepackage{xcolor}         
\usepackage{tikz}           
\usepackage{algorithm}      
\usepackage{algpseudocode}  
\algdef{SE}[WITH]{With}{EndWith}[1]{\textbf{with} #1 \textbf{do}}{\textbf{end with}}
\usepackage{bbm}
\usepackage{wrapfig}
\usepackage{subcaption}
\usepackage{pifont}
\usepackage{tcolorbox}
\usepackage{listings}
\tcbuselibrary{listings}
\usetikzlibrary{arrows.meta}

\newcommand{\cmark}{\ding{51}}
\newcommand{\xmark}{\ding{55}}

\newtcblisting{qabox}[1][]{
  title={#1},
  fonttitle=\small,
  fontupper=\small,
  left=2mm,
  right=2mm,
  top=1mm,
  bottom=1mm,
  listing only,
  listing options={
    breaklines=true,
    breakatwhitespace=false,
    breakindent=0pt,
    columns=fullflexible,
    keepspaces=true,
    showstringspaces=false,
    basicstyle=\ttfamily\small
  },
}

\newtheorem{definition}{Definition}
\newtheorem{axiom}{Axiom}
\newtheorem{lemma}{Lemma}
\newtheorem{theorem}{Theorem}
\newtheorem{proposition}{Proposition}

\DeclareMathOperator*{\argmin}{arg\,min}

\title{Generalized Priority-Aware Shapley Value}

\author{
    Kiljae Lee\\
    The Ohio State University\\
    \texttt{lee.10428@osu.edu}
    \And
    Ziqi Liu\\
    Carnegie Mellon University\\
    \texttt{ziqiliu2@andrew.cmu.edu}
    \AND
    Weijing Tang\\
    Carnegie Mellon University\\
    \texttt{weijingt@andrew.cmu.edu}
    \And
    Yuan Zhang\thanks{Corresponding author.}\\
    The Ohio State University\\
    \texttt{yzhanghf@stat.osu.edu}
}

\usepackage{tabu}

\theoremstyle{plain}

\usepackage{accents}

\renewcommand{\hat}{\widehat}
\renewcommand{\tilde}{\widetilde}

\newcommand{\pr}{\mathbb{P}}

\usepackage{xcolor}

\definecolor{mccolor}{rgb}{0.3010, 0.7450, 0.9330}
\definecolor{edgecolor}{rgb}{0, 0.5, 0}
\definecolor{respcolor}{rgb}{0.4940, 0.1840, 0.5560}
\definecolor{subscolor}{rgb}{0.8500, 0.3250, 0.0980}

\usepackage{hyperref}
\hypersetup{
    colorlinks=true,
    citecolor=blue,
    citebordercolor=red
}

\begin{document}

\maketitle

\begin{abstract}
Shapley value and its priority-aware extensions are widely used for valuation in machine learning, but existing methods require pairwise priority to be binary and acyclic, a restriction spectacularly violated in real-data examples such as aggregated human preferences and multi-criterion comparisons.
We introduce the \emph{generalized priority-aware Shapley value} (GPASV), a random order value defined on arbitrary directed weighted priority graphs, in which pairwise edges penalize rather than forbid order violations.
GPASV covers a range of classical models as boundary cases.
We establish GPASV through an axiomatic characterization, develop the associated computational methods, and introduce a priority sweeping diagnostic extending PASV's.
We apply GPASV to LLM ensemble valuation on the cyclic Chatbot Arena preference graph, illustrating that {\bf priority-aware valuation is not a one-button operation}: different balances of pairwise graph priority versus individual soft priority produce substantively different valuations of the same data.
\end{abstract}

\section{Introduction}
\label{sec::intro}

\begin{wrapfigure}{r}{0.32\linewidth}
\centering
\setlength{\intextsep}{2pt}
\setlength{\columnsep}{6pt}
\setlength{\abovecaptionskip}{0pt}
\setlength{\belowcaptionskip}{0pt}
\vspace{-1.0em}
\hspace{-1.6em}
\resizebox{\linewidth}{!}{%
\definecolor{panelA}{HTML}{a1c9f4}
\definecolor{panelB}{HTML}{ffb482}
\definecolor{panelC}{HTML}{8de5a1}
\definecolor{panelD}{HTML}{ff9f9b}
\definecolor{cycleRed}{HTML}{c0392b}

\begin{tikzpicture}[
    >=Stealth,
    player/.style={
        circle,
        draw,
        thick,
        inner sep=0pt,
        minimum size=4.8mm,
        font=\scriptsize,
        fill=blue!10
    },
    smallplayer/.style={player, minimum size=2.5mm},
    medplayer/.style={player, minimum size=4.2mm},
    largeplayer/.style={player, minimum size=6mm},
    xlplayer/.style={player, minimum size=7.5mm},
    edge/.style={->, thick},
    edgedash/.style={->, thick, dashed}
]

\def\xgap{1.2}        
\def\ygap{0.4}        
\def\panelgap{2}    
\def\rowgap{2}      
\def\laby{-0.95}      

\begin{scope}[xshift=0cm, yshift=\rowgap cm, player/.append style={fill=panelA}]
    \node[medplayer] (a1) at (0,\ygap) {1};
    \node[medplayer] (a2) at (\xgap,\ygap) {2};
    \node[medplayer] (a3) at (0,-\ygap) {3};
    \node[medplayer] (a4) at (\xgap,-\ygap) {4};

    \draw[edge] (a1) -- (a2);
    \draw[edge] (a3) -- (a2);
    \draw[edge] (a3) -- (a4);

    \node[font=\scriptsize] at (0.5*\xgap,\laby) {(a) Hard priority};
\end{scope}

\begin{scope}[xshift=\panelgap cm, yshift=\rowgap cm, player/.append style={fill=panelB}]
    \node[medplayer]    (b1) at (0,\ygap) {1};
    \node[xlplayer]     (b2) at (\xgap,\ygap) {2};
    \node[largeplayer]  (b3) at (0,-\ygap) {3};
    \node[smallplayer]  (b4) at (\xgap,-\ygap) {4};

    \draw[edgedash] (b1) -- (b2);
    \draw[edgedash] (b3) -- (b2);
    \draw[edgedash] (b1) -- (b4);
    \draw[edgedash] (b3) -- (b4);

    \node[font=\scriptsize] at (0.5*\xgap,\laby) {(b) Soft priority};
\end{scope}

\begin{scope}[xshift=0cm, yshift=0cm, player/.append style={fill=panelC}]
    \node[medplayer]    (c1) at (0,\ygap) {1};
    \node[xlplayer]     (c2) at (\xgap,\ygap) {2};
    \node[largeplayer]  (c3) at (0,-\ygap) {3};
    \node[smallplayer]  (c4) at (\xgap,-\ygap) {4};

    \draw[edge] (c1) -- (c2);
    \draw[edge] (c3) -- (c2);
    \draw[edge] (c3) -- (c4);

    \node[font=\scriptsize] at (0.5*\xgap,\laby) {(c) PASV};
\end{scope}

\begin{scope}[xshift=\panelgap cm, yshift=0cm, player/.append style={fill=panelD}]
    \node[medplayer]    (d1) at (0,\ygap) {1};
    \node[xlplayer]     (d2) at (\xgap,\ygap) {2};
    \node[largeplayer]  (d3) at (0,-\ygap) {3};
    \node[smallplayer]  (d4) at (\xgap,-\ygap) {4};

    \draw[->, line width=1.5pt] (d1) -- (d2);
    \draw[->, line width=0.3pt] (d3) -- (d2);
    \draw[->, line width=2.3pt, cycleRed] (d3) -- (d4);
    \draw[->, line width=0.5pt, cycleRed] (d1) -- (d3);
    \draw[->, line width=1pt, cycleRed] (d4) -- (d1);

    \node[font=\scriptsize] at (0.5*\xgap,\laby) {(d) GPASV};
\end{scope}

\end{tikzpicture}%
}
\caption{Illustration of priority structures.  Dashed arrows in (b): \cite{nowak1995axiomatizations, zheng2025rethinking} require special precedence structures.}
\label{fig:priority}
\vspace{-1.0em}
\end{wrapfigure}

Shapley value \citep{shapley1953value} and its variants are widely used in machine learning to accredit content contributors, including training examples, data providers, features, or model components \citep{ghorbani2019data,lundberg2017unified,wang2023data}.
Their appeal lies in being both principled (uniquely determined by a small set of axioms) and model-agnostic.
A central axiom is symmetry: two players who contribute identical contents should receive the same credit.
While this is reasonable on its face, its key vulnerability is the {\bf copier attack}: symmetry rewards copiers equally to originators, when rationality says all credit should go to the originators.
Modern AI/ML problems contain many examples where for similar or related reasons, contributors should be valued differently for reasons beyond their contributed contents: data lineage, causal precedence among features, differing trust or cost across data providers.
Recent work has incorporated this contextual information into Shapley-type valuations \citep{weber1988probabilistic,derks2005new}, including precedence Shapley values (PSV) \citep{faigle1992shapley}, weighted Shapley values (WSV) \citep{kalai1987weighted,nowak1995axiomatizations}, machine-learning adaptations \citep{frye2020asymmetric,zheng2025rethinking}, and most recently the priority-aware Shapley value (PASV) \citep{lee2026priority}.

A common assumption across all these works is that the pairwise priority relations can be captured by a {\bf directed acyclic graph (DAG)}.
The meaning of DAG assumption is two-fold:
(i) it has {\bf no cycles} (e.g., $i\to j\to k\to i$);
(ii) precedences are {\bf unweighted};
both aspects are violated in many modern AI/ML applications.
A prominent example of cyclic priority is aggregated human preferences, where \emph{Condorcet cycles} are ubiquitous  \citep{condorcet1785essai,black1958theory,fishburn1977condorcet,liu2025statistical};
multi-criterion model comparison is another example in similar spirits \citep{xu2025investigating}.
Weighted priority arises whenever the strength of evidence varies, encompassing pairwise win counts in tournament analysis, Bradley-Terry log-odds \citep{bradley1952rank}, ELO ratings \citep{glickman1999parameter}, and Plackett-Luce-style worth scores \citep{plackett1975analysis,luce1959individual}; all of these carry magnitude that a DAG discards.
Some applications, such as Chatbot Arena \citep{zheng2023judging,chiang2024chatbot}, present cyclic, weighted priority graphs.

We propose {\bf generalized priority-aware Shapley value (GPASV)}, a valuation method that simultaneously handles cyclic and weighted priority graphs.
Technically speaking, Shapley-style valuation methods all build on permutations of the players (see Section \ref{section::preliminaries}).
Within this language, the DAG assumption amounts to rigidly restricting to admissible permutations.
Our GPASV instead, for each permutation, counts its total amount of precedence violations and inversely weights its probability accordingly, thereby naturally handling cycles and weighted pairwise priorities.

Apart from pairwise priority, represented by the priority graph, GPASV also incorporates the player-level ``\emph{soft priority}'', which encodes individual information such as trust, cost, or compliance risk.
While the idea of this component was inherited from the WSV-to-PASV line of prior work, integrating it with our ``upgraded'' priority-graph component requires rather nontrivial adaptation.

\subsection{Our Contributions}
	\paragraph{Method (core contribution): GPASV handles cyclic and weighted priority graphs while incorporating individual soft priority.}
	GPASV is the first valuation method that simultaneously handles cyclic pairwise priorities, weighted pairwise priorities, and individual soft priorities, while still containing the classical Shapley value, PSV, WSV, and PASV as special cases.

	\paragraph{Theory: a principled, expressive extension of PASV.}
    GPASV admits an axiomatic characterization that generalizes PASV’s, with GSCF fixing the canonical stage-wise form (Section \ref{subsec::method::axioms}). 
    Furthermore, on cyclic graphs GPASV admits limiting distributions that no PASV can express (Section \ref{subsec::method::limit}), marking GPASV as a highly nontrivial extension rather than a wrapper around PASV.

    \paragraph{Scalable computation and acceleration methods.}
    We derive a GPASV-specific local adjacent-swap MH ratio (Proposition \ref{prop:adjacent-swap-ratio}) and design a stage-wise greedy initialization for cyclic graphs (Algorithm \ref{alg:gpasv-init}), then integrate these sampling components with direct Monte Carlo estimation, utility caching (Section \ref{sec:utility-reuse}), SNIS reuse for priority sweeping (Section \ref{sec:snis-reuse}), and a surrogate-assisted estimator adapted from regression-based semi-value methods (Section \ref{sec:two-stage-surrogate}).
    While some acceleration components build on standard Monte Carlo and regression-surrogate ideas, their adaptation to the GPASV permutation distribution and priority-sweeping workflow significantly enhances scalability.

	\paragraph{Diagnostic: priority sweeping under GPASV reveals larger soft-priority effects than under PASV.}
	We extend PASV's priority sweeping diagnostic to GPASV.
    A central finding is that the impact of individual soft priorities $\lambda$ is substantially larger under GPASV than under PASV, due to the softened pairwise priority relations.
    This carries two practical implications: priority sweeping becomes essential rather than optional for interpreting GPASV valuations, and metadata translatable into soft priorities becomes far more consequential to collect.
    
	\paragraph{Validation and application.}
	Extensive simulations validate GPASV's accuracy and confirm our theoretical predictions.
    A large-scale experiment on LLM ensemble valuation on MT-Bench and the cyclic Chatbot Arena preference graph demonstrates GPASV's practicality and scalability.

\section{Preliminaries}
\label{section::preliminaries}

\subsection{From Shapley Value to Random Order Values (ROV)}
\label{subsec::prelim::ROV}

{\bf Shapley value (SV).}
Let $[n]=\{1,\dots,n\}$ be a set of players.
For any $S\subseteq[n]$, let $U(S)$ be the revenue earned by the joint work of $S$.
We call $U$ a \emph{utility function} with $U(\emptyset)=0$.
The Shapley value \citep{shapley1953value}
\begin{equation}
    \nu_i(U)
    :=
    \sum_{S\subseteq[n]\setminus\{i\}}
    \frac{|S|!(n-|S|-1)!}{n!}
    \Bigl\{U(S\cup\{i\})-U(S)\Bigr\}.
    \label{eq:shapley-subset-prelim}
\end{equation}
is a payoff method that uniquely satisfies four widely-desired axioms (see Appendix~\ref{app:rov-axioms}).

{\bf Random order values (ROV).}
Shapley value is prone to {\bf copier attack} -- an attacker $i$ can unfairly split $j$'s pay by simply copying her.
To defend against this attack, it is essential to consider the {\bf order/rank} among players, thus if $i$ consulted $j$'s work, then $j$ should always be present in the prefix $S$ in \eqref{eq:shapley-subset-prelim} when evaluating $i$'s contribution.
This leads to a generic notion: {\bf random order value (ROV)}.
\begin{definition}[{\bf Random Order Value (ROV)}]
\label{def:rov}
    Let $\pi=(\pi_1,\dots,\pi_n)$ be a permutation of $[n]$ and $\pi^i$ be the set of players located before $i$ in $\pi$.
    The random order value (ROV) of $i$ is defined as
    \begin{equation}
        \nu_i^p(U)
        :=
        \mathbb{E}_{\pi\sim p}
        \big[U(\pi^i\cup\{i\})-U(\pi^i)\big],
        \label{eq:random-order-value-prelim}
    \end{equation}
    where $p$ is a distribution over $\Pi$, the set of all permutations of $[n]$.
\end{definition}
Different choices of $p$ yield different priority-aware extensions of Shapley value.
Examples include: 
(i) Shapley value, with $p=\text{Uniform}(\Pi)$;
(ii) precedence Shapley value (PSV) \cite{faigle1992shapley}, with $p$ uniform over a set of ``permitted permutations'' induced by a directed acyclic graph (DAG)  encoding hard, pairwise priority;
(iii) weighted Shapley value (WSV) \cite{kalai1987weighted,nowak1995axiomatizations} incorporates individual-level weights to encode soft priority.
Going forward, we will describe each method by its $p$, without repeating \eqref{eq:random-order-value-prelim}.

\subsection{Priority-Aware Shapley Value (PASV)}
\label{subsec::prelim::PASV}

{\bf Hard vs. soft priority.}
Priority information available to the user typically comes in two forms.

\emph{Hard priority} specifies pairwise constraints that must be respected at all times, e.g., a causal ancestor must precede its descendant.
Hard priority is encoded by a directed acyclic graph (DAG) $G=([n],\mathcal{E})$, where an edge $(i,j)\in {\cal E}$ means that $i$ must appear before $j$ in any $\pi$.

\emph{Soft priority}, in contrast, expresses individual-level preferences without forbidding any order; instead, it adjusts the ordering within the boundary permitted by the hard priority.
Players who are more trusted, less costly, or carry less legal/compliance risk than others should be rewarded by an earlier spot in $\pi$, thus face less peer competition when evaluated.
Soft priority is encoded by individual weights on each player: $\lambda=(\lambda_1,\ldots,\lambda_n)$, where $\lambda_i>0$.

{\bf Priority-aware Shapley value (PASV).}
\cite{lee2026priority} proposed PASV to combine hard and soft priority in one ROV.
For $\pi\in\Pi$ and prefix $S_t:=\{\pi_1,\ldots,\pi_t\}$, let $\max(S_t)$ denote the set of players $i \in S_t$ such that no $j \in S_t$ is a descendant of $i$ in $G$; these are the elements admissible to be $\pi_t$.
PASV is
\begin{equation}
	p^{(\lambda,G)}(\pi)
	\propto
	\mathbbm{1}_{[\pi\in\Pi^G]}\cdot 
	\prod_{t=1}^{n}
	\frac{\lambda_{\pi_t}\,|\max(S_t)|}{\sum_{k\in\max(S_t)}\lambda_k},
	\label{eq:pasv-prelim}
\end{equation}
where $\Pi^G:=\{\pi\in\Pi: \text{$i$ appears before $j$ in $\pi$ for all }(i,j)\in\mathcal{E}\}$ is the set of permutations consistent with $G$.
In \eqref{eq:pasv-prelim}, the indicator $\mathbbm{1}_{[\pi\in\Pi^G]}$ enforces hard priority.
The second part of \eqref{eq:pasv-prelim} uses a \emph{stage-wise} formulation to encode soft priority $\lambda$.
This part can be roughly understood as follows: starting from position $t=n$ down to $t=1$, $\pi_t$ is drawn from the admissible set $\max(S_t)$ with probability proportional to $\lambda_{\pi_t}$; the resulting $\pi$'s probability is then rescaled by the factor $\prod_{t=1}^n |\max(S_t)|$ --- this rescaling is essential for PASV to satisfy its axioms (see Remark 3.3 in \cite{lee2026priority}).
PASV recovers SV, PSV and WSV as special cases under proper choices of $\lambda$ and $G$.

\section{Our Method}
\label{section::method}

\subsection{Limitations of PASV and Motivation for a Generalization}
\label{subsec::method::motivation}

PASV encodes hard priority as a binary DAG, which can be restrictive in modern AI/ML applications.

{\bf Hard priority may not be that ``hard'' in practice.}
In \eqref{eq:pasv-prelim}, every edge of $G$ is enforced as an absolute constraint: any $\pi$ violating even one edge is ruled out.
But real pairwise priority is often a matter of degree rather than black/white.
For example, NeurIPS 2026 draws a line at March 1, 2026: papers appearing before are treated as prior work, those after as concurrent \citep{neurips2026}, even though two papers posted one day before/after carry nearly identical priority.
Similarly, a domain expert may believe feature $i$ causally impacts feature $j$ based on informative but non-determining evidence.
In both cases, PASV discards magnitude information and becomes sensitive to borderline edges.

{\bf Pairwise priority may be cyclic.}
PASV requires $G$ to be acyclic.
But cyclic pairwise signals arise naturally when priority is aggregated across a population or across criteria.
In aggregated human preferences, model $i$ may beat $j$, $j$ beats $k$, and yet $k$ beats $i$ -- the classical Condorcet cycle \citep{black1958theory,fishburn1977condorcet}, shown to occur with high probability in LLM preference networks \citep{liu2025statistical}.
Multi-criterion comparison can be similarly cyclic: $A$ beats $B$ on criterion 1, $B$ beats $C$ on criterion 2, $C$ beats $A$ on criterion 3.
Forcing such data into a DAG requires unfairly favoring certain players or biasing criteria.

\subsection{Generalized Priority-Aware Shapley Value (GPASV)}
\label{subsec::method::GPASV}

To address the two limitations of PASV in Section \ref{subsec::method::motivation}, we first replace the binary DAG $G$ with a weighted directed graph $(\omega_{ij})_{i,j\in[n], i\neq j}$, where each $\omega_{ij}\geq 0$ encodes the strength of the pairwise priority ``$i$ should precede $j$'':
$\omega_{ij}=0$ means no such priority; we set $\omega_{ii}\equiv0$ throughout.
This convention directly accommodates common forms of user data -- pairwise win counts, Bradley-Terry log-odds, ELO differences, or expert-assigned confidence scores -- without ad-hoc transformations.
Note that $\omega_{ij}$ and $\omega_{ji}$ are independent: in encoding a DAG-style precedence ``$i\prec j$'' at most one of them is nonzero, while in encoding pairwise comparison records (e.g., $i$ and $j$'s head-to-head wins), both can be positive.
Unlike PASV, we impose no acyclicity requirement on $\omega$.

For simplicity, we start with a simplified case where $\lambda_i\equiv 1$.
For any $\pi$, its \emph{total violation} against $\omega$ is
$
    V_\omega(\pi)
    :=
    \sum_{i\neq j}\omega_{ij}\cdot\mathbbm{1}_{[\text{$j$ appears before $i$ in $\pi$}]}.
$
A Gibbs-style distribution that penalizes total violation is
\begin{equation}
    p^{\omega}_\beta(\pi)
    \propto
    \exp\{-\beta\cdot V_\omega(\pi)\},
    \label{eq:mallows-form}
\end{equation}
where $\beta\geq 0$ is a \emph{temperature} controlling the overall penalty strength.
When $\omega$ is a DAG and $\beta\to\infty$, we have $p^{\omega}_\beta(\pi)=$~Uniform$(\Pi^G)$, recovering PSV, a special case of PASV when $\lambda_i\equiv$~constant.

Next, we incorporate $\lambda$ stage-wise, following PASV. Define the stage-wise violation cost
$
    V_\omega(k;S_t)
    :=
    \sum_{j\in S_t}\omega_{kj},
$
i.e., the total strength of priorities that $k$ would violate by being placed at position $t$. The {\bf Generalized Priority-Aware Shapley Value (GPASV)} is defined as:
\begin{equation}
    p^{(\lambda,\omega)}(\pi)
    \propto
    \prod_{t=1}^n
    \Bigg[
    \underbrace{
    \frac{\lambda_{\pi_t}\exp\{-\beta\cdot V_\omega(\pi_t;S_t)\}}{\sum_{k\in S_t}\lambda_k\exp\{-\beta\cdot V_\omega(k;S_t)\}}
    }_{\text{(Part 1)}}
    \cdot
    \underbrace{
    \sum_{k\in S_t}\exp\{-\beta\cdot V_\omega(k;S_t)\}
    }_{\text{(Part 2)}}
    \Bigg]
    .
    \label{eq:gpasv}
\end{equation}
Part 1 generalizes PASV's stage-wise softmax: in PASV, every player in $\max(S_t)$ is equally eligible to be $\pi_t$ and competes solely through $\lambda$; in GPASV, the weighted graph $\omega$ no longer issues each player $k$ just a pass/fail ticket, but instead a continuous eligibility score $\exp\{-\beta\cdot V_\omega(k;S_t)\}$ that discounts $k$ by the total amount of pairwise priority violation it would cause if placed at position $t$; we naturally let this score rescale $\lambda_k$.
Part 2 is the soft generalization of $|\max(S_t)|$ in \eqref{eq:pasv-prelim}.

When $\lambda_i\equiv c$, it is not difficult to verify that \eqref{eq:gpasv} reduces to \eqref{eq:mallows-form}.
When $\omega$ encodes a DAG with equal nonzero $\omega$'s and $\beta\to\infty$, \eqref{eq:gpasv} reduces to PASV \eqref{eq:pasv-prelim}; see Section \ref{subsec::method::limit} for formal limiting analysis.

\subsection{Connections to Existing Literature}
\label{subsec::method::connections}

{\bf Gibbs distribution \citep{cantwell2022belief}.}
GPASV adopts a Gibbs-style encoding of pairwise priority weights; but due to the incorporation of $\lambda$, GPASV does not exactly fall into the Gibbs family.

{\bf Mallows model \citep{mallows1957non}.}
The Mallows model specifies a reference permutation $\pi_0\in\Pi$ and a distribution $p_M(\pi)\propto\exp\{-d(\pi,\pi_0)\}$, where $d(\cdot,\cdot)$ is a dissimilarity measure, e.g., Kendall distance.
Mallows is a \emph{narrow special case} of GPASV: it does not consider $\lambda$ and implicitly assumes that the priority graph can be induced by a single reference $\pi_0$ (in particular, $\omega$ must be acyclic).

{\bf Plackett-Luce model \citep{plackett1975analysis,luce1959individual}.}
When $\omega\equiv 0$, \eqref{eq:gpasv} reduces to a Plackett-Luce distribution with ``worths'' $(\lambda_i)$, sampled stage-wise backward from position $t=n$ to $t=1$.

\subsection{Axiomatic Characterization of GPASV}
\label{subsec::method::axioms}

Here, we outline the axiomatization, using some acronyms that will be defined later and in Appendix \ref{app:rov-axioms}.
First, using Weber's axioms \citep{weber1988probabilistic}, we have \hyperref[axiom:e]{E} + \hyperref[axiom:l]{L} + \hyperref[axiom:np]{NP} + \hyperref[axiom:m]{M} $\Rightarrow$ \hyperref[def:rov]{ROV} -- notice that GPASV is supported on the entire $\Pi$ and does not use PASV's Maximal-Support (MS) axiom.
Second, GSCF fixes a canonical stage-wise form for ROV distributions, rather than by itself imposing a new payoff-fairness requirement.
Third, we engage boundary axioms: GWP and PVF to guarantee reduction to important special cases.
Here, PASV's SCF, WP and EWU axioms are generalized to GSCF, GWP and PVF to suit generalized hard priority.
Finally, \hyperref[def:rov]{ROV} + \hyperref[def:gscf]{GSCF} + \hyperref[axiom:gwp]{GWP} + \hyperref[axiom:pvf]{PVF} $\Rightarrow$ \hyperref[eq:gpasv]{GPASV} (see Theorem~\ref{thm:uniqueness}).

Now we carry out the outline.
The first step (E+L+NP+M$\Rightarrow$ROV) is due to \cite{weber1988probabilistic}. 
We start with GSCF.

{\bf Generalized state-choice factorization (GSCF).}
We use GSCF to specify this canonical stage-wise form, compatible with the sampling scheme in Section \ref{subsec::method::GPASV}.

\begin{definition}[GSCF]
	\label{def:gscf}
	A distribution family $p_{\lambda,\omega}$ on $\Pi$ satisfies GSCF if there exist a \emph{state factor} $s_\omega:2^{[n]}\to\mathbb{R}_{\geq 0}$ and a \emph{choice factor} $c_{\lambda,\omega}(\cdot\,;\,\cdot):[n]\times 2^{[n]}\to[0,1]$, s.t.
	\begin{equation}
		p_{\lambda,\omega}(\pi)
		\propto
		\prod_{t=1}^n
		s_\omega(S_t)\cdot c_{\lambda,\omega}(\pi_t;S_t),\quad\pi\in\Pi,
        \ \forall (\lambda,\omega),
		\label{eq:gscf}
	\end{equation}
	with $\sum_{i\in S}c_{\lambda,\omega}(i;S)=1$ for every nonempty $S\subseteq[n]$.
\end{definition}

GSCF is a canonical-form axiom: $s_\omega(S_t)$ is a prefix-dependent rescaling, and $c_{\lambda,\omega}(i;S_t)$ is the conditional probability of picking $i$ as $\pi_t$.
Plackett-Luce is a familiar simple instance ($s\equiv 1$, $c=\lambda$-softmax over $S_t$); GPASV is another, where $\omega$ enters both factors.
GSCF generalizes PASV's SCF \citep[Definition 3.9]{lee2026priority} along two directions that reflect GPASV's weighted-graph nature: the choice factor ranges over the full $S_t$ rather than the admissible subset $\max(S_t)$ (hard priority is no longer absolute), and the state factor depends on $\omega$ rather than on the DAG alone (recall Section \ref{subsec::method::GPASV}).

{\bf Boundary axioms.}
Within the canonical GSCF family, we identify two important boundary cases.

\begin{axiom}[Generalized Weight Proportionality (GWP)]
	\label{axiom:gwp}
	For every nonempty $S\subseteq[n]$,
	\begin{equation}
		\frac{c_{\lambda,\omega}(i;S)}{c_{\lambda,\omega}(j;S)}
		=
		\frac{\lambda_i\exp\{-\beta\cdot V_\omega(i;S)\}}{\lambda_j\exp\{-\beta\cdot V_\omega(j;S)\}},
        \quad 
        \forall\ i,j\in S.
		\label{eq:gwp}
	\end{equation}
\end{axiom}

Equivalently, the log-odds of choosing $i$ over $j$ at stage $t$ depends on $\lambda_i/\lambda_j$ and $V_\omega(i;S)-V_\omega(j;S)$; PASV's Weight Proportionality (WP) axiom only encodes the former -- the two coincide in the same limiting case under which GPASV reduces to PASV (Section \ref{subsec::method::GPASV}).

\begin{axiom}[Pairwise-Violation Factorization (PVF)]
	\label{axiom:pvf}
	If $\lambda_i\equiv 1$, then $p(\pi)\propto\exp\{-\beta\cdot V_\omega(\pi)\}$.
\end{axiom}
That is, when players' soft priority weights $\lambda_i$'s are equal, $p$ recovers the Gibbs-style form \eqref{eq:mallows-form}.

\begin{theorem}[Uniqueness of GPASV]
	\label{thm:uniqueness}
	The only ROV satisfying GSCF + GWP + PVF is GPASV.
\end{theorem}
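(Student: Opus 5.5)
Our plan is to show that GWP determines the choice factor completely, and that PVF, applied at $\lambda\equiv 1$, determines the product of state factors. Once both are fixed, the GSCF product will coincide with \eqref{eq:gpasv}.

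\emph{Step 1 (choice factor).} Fix $(\lambda,\omega)$ and a nonempty $S$. GWP gives $c_{\lambda,\omega}(i;S)\propto\lambda_i\exp\{-\beta V_\omega(i;S)\}$ over $i\in S$. The normalization $\sum_{i\in S}c_{\lambda,\omega}(i;S)=1$ required by GSCF then forces $c_{\lambda,\omega}(i;S)$ to equal Part 1 of \eqref{eq:gpasv}. This step is routine.

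\emph{Step 2 (state factor).} Since $s_\omega$ does not depend on $\lambda$, we can identify it from the case $\lambda\equiv 1$. There Step 1 gives
\[
c_{1,\omega}(\pi_t;S_t)=\frac{e^{-\beta V_\omega(\pi_t;S_t)}}{Z_\omega(S_t)},\qquad Z_\omega(S):=\sum_{k\in S}e^{-\beta V_\omega(k;S)}.
\]
The next ingredient is the identity $\sum_{t}V_\omega(\pi_t;S_t)=V_\omega(\pi)$. It holds because each violated pair $(i,j)$, with $j$ before $i$, is counted exactly once, namely at the stage where $\pi_t=i$ and $j\in S_t$. (When the sum is over $j\in S_t$, the term $j=i$ contributes $\omega_{ii}=0$.) Combining GSCF with PVF therefore gives, for every $\pi\in\Pi$,
\[
\prod_t \frac{s_\omega(S_t)}{Z_\omega(S_t)}\, e^{-\beta V_\omega(\pi)}\;\propto\; e^{-\beta V_\omega(\pi)},
\]
so $\prod_t s_\omega(S_t)/Z_\omega(S_t)$ is constant in $\pi$. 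Note that PVF forces full support, so each $s_\omega(S)>0$.

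\emph{Step 3 (the main obstacle).} Write $g(S):=\log\bigl(s_\omega(S)/Z_\omega(S)\bigr)$. Step 2 says that $\sum_{t=1}^n g(S_t)$ is the same along every maximal chain $\emptyset\subset S_1\subset\cdots\subset S_n=[n]$. We must show that such a chain-invariant $g$ makes the product $\prod_t s_\omega(S_t)$ a constant multiple of $\prod_t Z_\omega(S_t)$. Our first attempt is to argue this directly, which is immediate: the chain sum equals a common constant $C_\omega$, so $\prod_t s_\omega(S_t)=e^{C_\omega}\prod_t Z_\omega(S_t)$ for every $\pi$. Because the family in \eqref{eq:gscf} is defined only up to proportionality in $\pi$, this is exactly what we need, and we do not have to pin down $s_\omega$ pointwise. (Pointwise $s_\omega=Z_\omega$ can fail: $g$ may be any "potential-difference" function, but every such function integrates to a constant along chains.) The care required is mainly to confirm that $s_\omega$'s independence of $\lambda$, which is built into Definition~\ref{def:gscf}, legitimately transfers the $\lambda\equiv1$ conclusion to general $\lambda$.

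\emph{Step 4 (assembly).} For general $\lambda$, substituting Step 1 and Step 3 into \eqref{eq:gscf} gives
\[
p_{\lambda,\omega}(\pi)\propto\prod_t \bigl[\text{Part 1}\bigr]\cdot Z_\omega(S_t),
\]
which is \eqref{eq:gpasv}, i.e., the ROV with $p=p^{(\lambda,\omega)}$. Conversely, GPASV satisfies GSCF with $s_\omega=Z_\omega$ and $c$ equal to Part 1. It satisfies GWP by construction. It satisfies PVF because at $\lambda\equiv1$ the $Z_\omega$ factors cancel and the identity from Step 2 leaves $e^{-\beta V_\omega(\pi)}$. This gives existence, and uniqueness follows from Steps 1--3.
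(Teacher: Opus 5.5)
Your proposal is correct and follows essentially the same route as the paper's proof. GWP plus normalization pins the choice factor to Part~1, and the telescoping identity $\sum_t V_\omega(\pi_t;S_t)=V_\omega(\pi)$ together with PVF at $\lambda\equiv 1$ forces $\prod_t s_\omega(S_t)/Z_\omega(S_t)$ to be a $\pi$-independent constant. The $\lambda$-independence of $s_\omega$ then carries this to general $\lambda$, and the converse is verified directly, just as in the paper. Your remark that PVF forces $s_\omega(S)>0$ is a small point the paper leaves implicit; it is what justifies your logarithm and the paper's constant $K>0$.
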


\subsection{Diagnostic Tool: Priority Sweeping}
\label{subsec::method::sweep}

{\bf Soft priority sweep (inherited from \cite{lee2026priority}).}
As pointed out by \cite{lee2026priority}, soft priority $\lambda$ is often unavailable in practice, since relevant metadata such as trustworthiness, compliance risk, or originality scores are not routinely collected.
Following \cite{lee2026priority}, we equip GPASV with \emph{priority sweeping}: one can vary a single $\lambda_i$ over $(0,\infty)$ to diagnose whether the unknown soft priority materially affects the valuation.
This inherits the \emph{idea} from \cite{lee2026priority} but not the algorithm: since GPASV's stage-wise softmax acts on the full $S_t$ rather than the admissible subset $\max(S_t)$, the computation needs adaptation (see Section \ref{sec:computation-main}).

{\bf Hard priority sweep (new to GPASV).}
GPASV's weighted graph $\omega$ opens a sweeping axis that PASV's binary DAG does not admit.
The two extremes are $\beta=0$ (no pairwise priority) and $\beta\to\infty$ (hard priority; see Section \ref{subsec::method::limit}), and sweeping intermediate $\beta$ traces the transition between them.

Overall, priority sweeping should report $\lambda$-only, $\omega$-only, and joint sweeping: this reveals robustness and flags possible double-counting when $\lambda$ aligns with priority graph;
see Sections \ref{sec:simulation}--\ref{sec:application}.

\subsection{Limiting-Case Analysis}
\label{subsec::method::limit}

Let $G_\omega=([n],\mathcal{E}_\omega)$ with $\mathcal{E}_\omega=\{(i,j):\omega_{ij}>0\}$ denote the directed graph induced by $\omega$.
\begin{theorem}[Hard-penalty limit]
\label{thm:hard-limit}
	For every fixed $\lambda$ and $\omega$, we have
	\begin{align}
		\lim_{\beta\to\infty}
        p^{(\lambda,\omega)}(\pi)
		\propto&~
        \mathbbm{1}_{[\pi\in\widetilde{\Pi}^{G_\omega}]}\cdot\prod_{t=1}^n\frac{\lambda_{\pi_t}\,|M_\omega(S_t)|}{\sum_{k\in M_\omega(S_t)}\lambda_k},
        \label{eqn::gpasv-hard-limit}
	\end{align}

	where recall $V_\omega(\pi)$ and $V_\omega(k;S)$ from Section \ref{subsec::method::GPASV}, and define
	$
		\widetilde{\Pi}^{G_\omega}
		:=\arg\min_{\pi\in\Pi}V_\omega(\pi)
	$
	and
	$
		M_\omega(S):=\arg\min_{k\in S}V_\omega(k;S).
	$
\end{theorem}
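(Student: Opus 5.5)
The plan is to rewrite the unnormalized GPASV weight in \eqref{eq:gpasv} so that the global violation $V_\omega(\pi)$ appears explicitly, times factors that stay bounded as $\beta\to\infty$. Then normalize by $\exp\{-\beta\min_{\pi'}V_\omega(\pi')\}$ and pass to the limit term by term. Since $\Pi$ is finite, all limits can be exchanged with the normalizing sum without any uniformity issues.

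\emph{Step 1 (telescoping the violation).} Write $S_t=\{\pi_1,\dots,\pi_t\}$. Because $\omega_{ii}=0$, we have $V_\omega(\pi_t;S_t)=\sum_{s<t}\omega_{\pi_t\pi_s}$. Each term is the strength of an edge ``$\pi_t$ should precede $\pi_s$'' that is violated because $\pi_s$ comes earlier. Summing over $t$ enumerates every ordered pair $(i,j)$ with $j$ before $i$ exactly once, so
\[
\sum_{t=1}^n V_\omega(\pi_t;S_t)=V_\omega(\pi).
\]
The unnormalized weight therefore factors as
\[
w_\beta(\pi)=e^{-\beta V_\omega(\pi)}\prod_{t=1}^n\lambda_{\pi_t}\prod_{t=1}^n r_\beta(S_t),
\qquad
r_\beta(S):=\frac{\sum_{k\in S}e^{-\beta V_\omega(k;S)}}{\sum_{k\in S}\lambda_k e^{-\beta V_\omega(k;S)}}.
\]
This is the same computation that shows \eqref{eq:gpasv} reduces to \eqref{eq:mallows-form} when $\lambda\equiv c$.

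\emph{Step 2 (limits of the stage factors).} Fix $S$ and let $m(S)=\min_{k\in S}V_\omega(k;S)$. Multiply the numerator and denominator of $r_\beta(S)$ by $e^{\beta m(S)}$. Every $k\notin M_\omega(S)$ then contributes $e^{-\beta(V_\omega(k;S)-m(S))}\to0$, with a strictly positive gap, while every $k\in M_\omega(S)$ contributes exactly $1$. Hence
\[
r_\beta(S)\to \frac{|M_\omega(S)|}{\sum_{k\in M_\omega(S)}\lambda_k}\in(0,\infty).
\]
We also note the uniform bounds $1/\max_k\lambda_k\le r_\beta(S)\le 1/\min_k\lambda_k$.

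\emph{Step 3 (normalization and conclusion).} Let $V^*=\min_{\pi'\in\Pi}V_\omega(\pi')$ and set $\tilde w_\beta(\pi)=e^{\beta V^*}w_\beta(\pi)$.
\begin{itemize}
\item If $V_\omega(\pi)>V^*$, then $\tilde w_\beta(\pi)\le e^{-\beta(V_\omega(\pi)-V^*)}\prod\lambda_{\pi_t}\,(\min_k\lambda_k)^{-n}\to0$.
\item If $\pi\in\widetilde{\Pi}^{G_\omega}$, then $\tilde w_\beta(\pi)\to\prod_t\lambda_{\pi_t}|M_\omega(S_t)|/\sum_{k\in M_\omega(S_t)}\lambda_k>0$.
\end{itemize}
Because $\Pi$ is finite, $\sum_{\pi'}\tilde w_\beta(\pi')$ converges to the sum of these limits. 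That sum is strictly positive since $\widetilde{\Pi}^{G_\omega}\neq\emptyset$. Therefore $p^{(\lambda,\omega)}(\pi)=\tilde w_\beta(\pi)/\sum_{\pi'}\tilde w_\beta(\pi')$ converges to the right-hand side of \eqref{eqn::gpasv-hard-limit}, normalized.

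The main obstacle is Step 1. The $\beta$-dependence has to be split cleanly into one global exponential and bounded stage factors. Without this identity, the minimizing set of each stage $M_\omega(S_t)$ might seem to dictate the support rather than the global argmin. The telescoping shows that the support is governed by $V_\omega(\pi)$, while the stage-wise argmins only enter the limiting weights. It is worth remarking that $\pi_t$ need not lie in $M_\omega(S_t)$ for $\pi$ in the limiting support, and the formula is stated accordingly.
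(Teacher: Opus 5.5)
Your proposal is correct and follows the paper's proof essentially step for step. The three steps match: the telescoping identity $\sum_t V_\omega(\pi_t;S_t)=V_\omega(\pi)$ isolates $e^{-\beta V_\omega(\pi)}$; factoring out $e^{-\beta m_\omega(S_t)}$ gives the stage limits $|M_\omega(S_t)|/\sum_{k\in M_\omega(S_t)}\lambda_k$; and recentering by $e^{\beta V^\star}$ kills non-minimizers while keeping the normalizing mass positive. Your explicit bounds $1/\max_k\lambda_k\le r_\beta(S)\le 1/\min_k\lambda_k$ are a harmless tidy-up of the paper's appeal to convergence for boundedness. Your closing remark is also correct: $\pi_t\notin M_\omega(S_t)$ can occur on the limiting support, e.g.\ $\pi=(5,1,2,3,4)$ at $t=4$ in the appendix example, where $M_\omega(S_4)=\{5\}$.
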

In Theorem \ref{thm:hard-limit},
$\widetilde{\Pi}^{G_\omega}$ collects permutations with minimum total violation; 
while $M_\omega(S)$ is the soft analogue of $\max(S)$, it collects the members of $S$ whose placement at the end of $S$ incurs the least total violation within $S$.
If $G_\omega$ is a DAG: $\widetilde{\Pi}^{G_\omega}=\Pi^{G_\omega}$, $M_\omega(S_t)=\max(S_t)$ for prefix $S_t$, and Theorem \ref{thm:hard-limit} shows that GPASV reduces to PASV \eqref{eq:pasv-prelim} on $G_\omega$.
But if $G_\omega$ is cyclic, the limit in Theorem \ref{thm:hard-limit} generally does not reduce to any PASV.
A simple example is a big cycle (equal edge weights).
Appendix \ref{appendix::section::GPASV-reduce-examples} gives a sharper example: support becomes DAG, but distribution is still not PASV.

\section{Computation}
\label{sec:computation-main}

Recall that GPASV is defined by \eqref{eq:random-order-value-prelim} and \eqref{eq:gpasv}.
Naturally, we discuss three main topics:
(i) how to sample $\pi$ from \eqref{eq:gpasv};
(ii) how to compute \eqref{eq:random-order-value-prelim};
(iii) acceleration tricks.
Due to page limit, here we only present the gist and relegate detailed algorithms, and auxiliary derivations to Appendix~\ref{appendix::computation}.

\paragraph{Sampling $\pi$.}
We draw $\pi$ from $p^{(\lambda,\omega)}$ \eqref{eq:gpasv} using an adjacent-swap Metropolis--Hastings (MH) chain \citep{karzanov1991conductance,bubley1999faster,lee2026priority}. 
At each iteration, propose to swap $(\pi_i,\pi_{i+1})$ with acceptance probability $\min\{1, p^{(\lambda,\omega)}(\pi^{\rm swap})/p^{(\lambda,\omega)}(\pi)\}$. 
It turns out that for GPASV, almost all factors in this ratio cancel and only a local expression remains, making computation efficient (see Proposition~\ref{prop:adjacent-swap-ratio}).
Initialization for this MCMC requires extra care. 
PASV can conveniently start from any valid linear extension, but for GPASV, when $\omega$ is cyclic, no linear extension exists; a cold start can waste many burn-in iterations.
To address this challenge, we devise a dedicated stage-wise greedy method (see Algorithm~\ref{alg:gpasv-init}): 
sample players backward from position $n$, each with probability proportional to a term that turns out to be exactly the GSCF choice factor $c_{\lambda,\omega}$ in \eqref{eq:gscf} instantiated for GPASV, see  Appendix~\ref{sec:greedy-init}.
Mixing time bounds for this chain are generally intractable; we study mixing empirically in Section~\ref{sec:simulation}.

\paragraph{Estimating the expectation in \eqref{eq:random-order-value-prelim}.}
Given sampled permutations, the most straightforward estimator simply replaces the expectation in \eqref{eq:random-order-value-prelim} by a Monte Carlo average, which we use as our default. 
In Appendix~\ref{sec:two-stage-surrogate}, we additionally describe a surrogate-assisted variant adapted from \cite{liu2026first,lundberg2017unified,fumagalli2026polyshap,witter2025regressionadjusted}: first fit a cheap surrogate $\hat h\approx U$, whose GPASV is usually easier to compute; second, bias-correct for using $\hat h$ in lieu of $U$.
This method works when $\hat h$ well-captures $U$; otherwise, the extra surrogate-fitting cost can outweigh its merit.

\paragraph{Further acceleration.}
A simple trick allows substantial cost reduction with minimal implementation effort: we cache $U(S)$ the first time it is evaluated and reuse the cached value thereafter.
A separate acceleration is specific to priority sweeping (Section~\ref{subsec::method::sweep}), where the target $p^{(\lambda,\omega)}$ moves along a trajectory.
Since nearby targets along this trajectory are similar, we may largely reuse the previous permutation sample instead of redrawing, via \emph{self-normalized importance sampling} (SNIS)~\citep{hesterberg1995weighted}, and monitor to ensure sufficient \emph{effective sample size}.
See Appendix~\ref{app:acceleration} for details.

\section{Empirical Validation}
\label{sec:simulation}

\begin{wrapfigure}{r}{0.5\textwidth}
\vspace{-2em}
\centering
\includegraphics[width=0.5\textwidth]{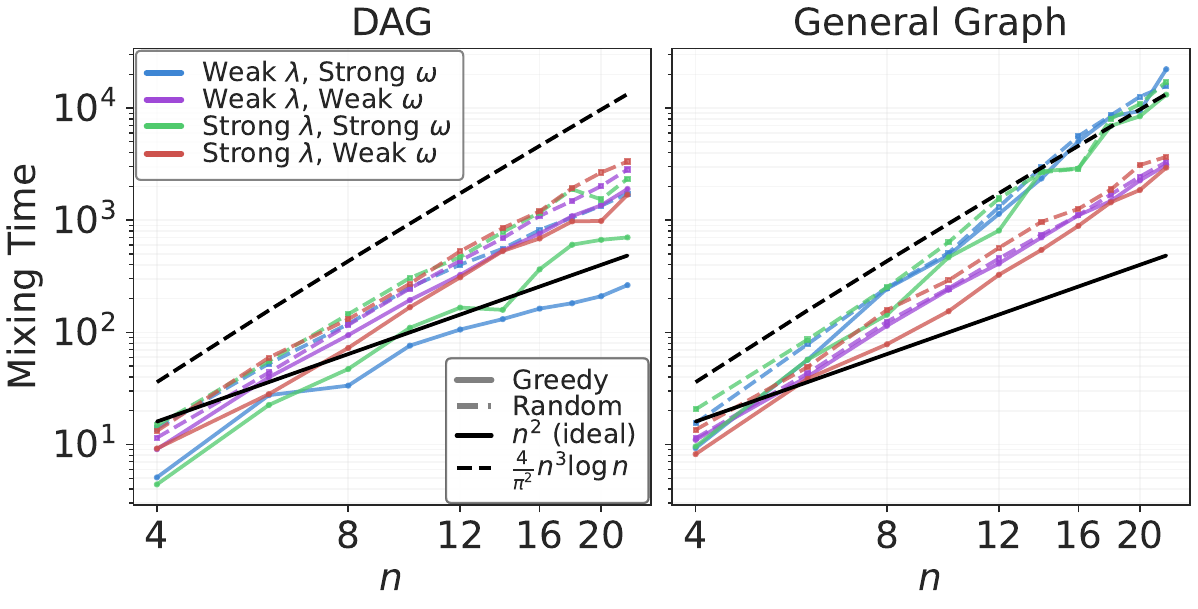}
\caption{Simulation 1: mixing time, greedy (solid) vs random (dashed) initializations.}
\label{fig:short-rmixing}
\vspace{-1.5em}
\end{wrapfigure}

We conduct three simulations to assess the accuracy of our GPASV, test speed-up tricks, and compare its priority sweeping results with its counterpart from the previous work PASV to deepen our understanding.

\paragraph{Simulation 1: mixing time of MCMC.}
Recall from Section \ref{sec:computation-main} that GPASV requires sampling from a non-uniform distribution on permutations; the first step is to burn-in the MH chain until stationarity; the number of iterations needed is called \emph{mixing time}.
We diagnose mixing by the accuracy of pairwise-order probabilities $\pr(\{\text{$i$ appears before $j$}\})$ and tested both DAG and general graphs.
Due to page limit, here we only plot for a representative setting and relegate all details to Appendix \ref{app:sim1-mixing}.
Figure \ref{fig:short-rmixing}  suggests that the mixing speed of our algorithm is competitive compared to literature.
In particular, we find that our greedy initialization method (Algorithm~\ref{alg:gpasv-init}) can significantly speed up mixing.

\begin{figure}[t]
    \centering
      \includegraphics[width=0.47\linewidth]{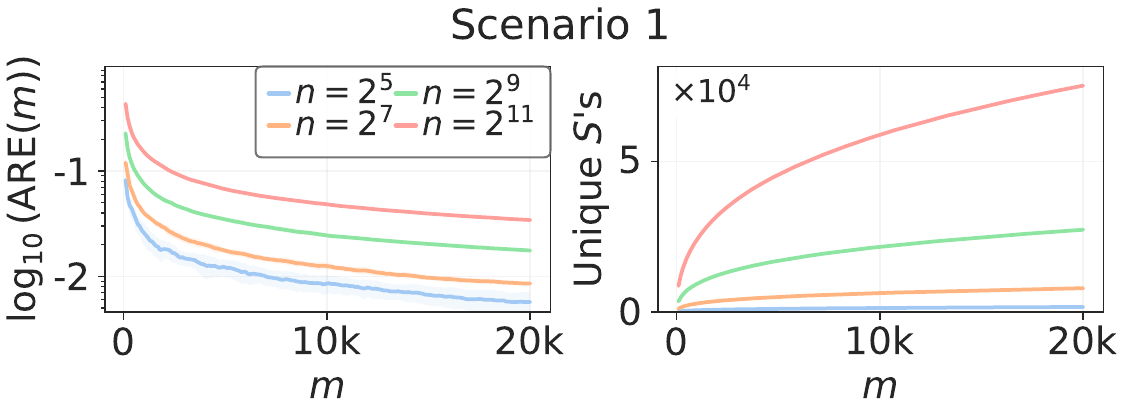}%
      \includegraphics[width=0.47\linewidth]{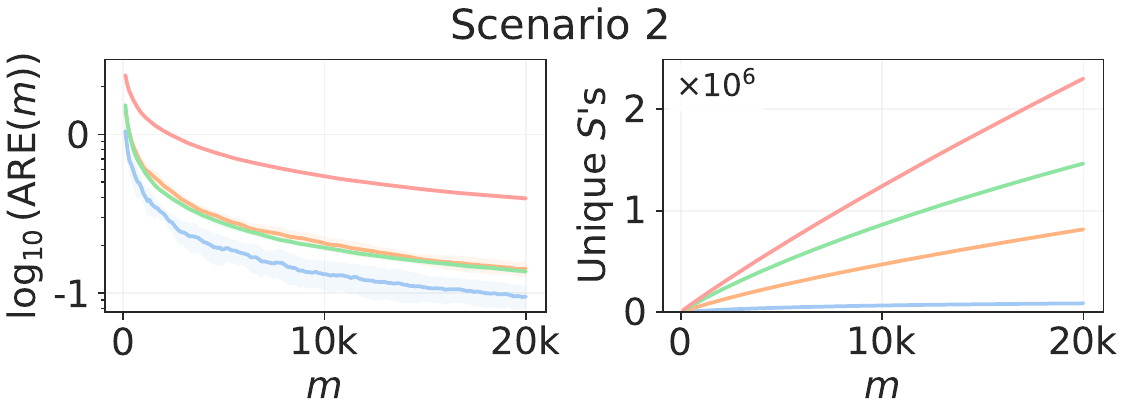}
      \caption{Accuracy and acceleration by caching (case 2); full grids are in Appendix \ref{app:sim2-accuracy}.}
      \label{fig:short-accuracy}
\end{figure}

\begin{table}[t]
    \centering
    \caption{Runtime (seconds) and training memory (GB) under matched utility budgets (case 2). 
    Values are mean (std.dev.). 
    Permutation reports runtime only.  
    Full results are in Appendix \ref{app:sim2-accuracy}.}
    \label{tab:surrogate-short}
    \scriptsize
    \setlength{\tabcolsep}{3.2pt}
    \resizebox{\textwidth}{!}{%
    \begin{tabular}{c|c|cc|cc|c|cc|cc}
    \toprule
    & \multicolumn{5}{c|}{\textbf{Scenario 1}} & \multicolumn{5}{c}{\textbf{Scenario 2}} \\
    \cline{2-6}\cline{7-11}
    \rule{0pt}{2.6ex}& Permutation & \multicolumn{2}{c|}{Linear} & \multicolumn{2}{c|}{Quadratic} & Permutation & \multicolumn{2}{c|}{Linear} & \multicolumn{2}{c}{Quadratic} \\
    $n$ & Time (s) & Time (s) & Mem (GB) & Time (s) & Mem (GB) & Time (s) & Time (s) & Mem (GB) & Time (s) & Mem (GB) \\
    \hline
    \rule{0pt}{2.6ex}64   & 4.94 (0.01)   & 26.51 (0.15)  & 0.001 (0.000) & 26.49 (0.10)  & 0.002 (0.000) & 4.69 (0.02)   & 35.50 (0.38)  & 0.093 (0.000) & 36.08 (0.24)   & 0.290 (0.001) \\
    128  & 8.53 (0.02)   & 46.72 (0.19)  & 0.002 (0.000) & 46.89 (0.26)  & 0.012 (0.000) & 8.15 (0.04)   & 61.27 (0.28)  & 0.194 (0.000) & 63.95 (0.22)   & 0.986 (0.003) \\
    256  & 15.94 (0.03)  & 91.73 (0.17)  & 0.009 (0.000) & 92.30 (0.28)  & 0.081 (0.001) & 15.99 (0.07)  & 120.17 (0.39) & 0.592 (0.000) & 138.65 (0.88)  & 5.620 (0.008) \\
    512  & 36.45 (0.16)  & 191.07 (0.65) & 0.031 (0.000) & 194.82 (0.74) & 0.566 (0.006) & 34.36 (0.06)  & 231.47 (1.18) & 1.158 (0.001) & 317.75 (3.33)  & 21.000 (0.027) \\
    1024 & 195.76 (2.46) & 663.81 (6.91) & 0.106 (0.001) & 700.01 (6.31) & 3.724 (0.031) & 205.55 (0.51) & 725.34 (4.64) & 2.312 (0.002) & 1378.02 (9.58) & 108.276 (0.155) \\
    \bottomrule
    \end{tabular}
    }
\end{table}

\paragraph{Simulation 2: Monte Carlo accuracy and surrogate-assisted acceleration.}
After the MH chain mixes well, the next question is the accuracy of the Monte Carlo estimation of \eqref{eq:random-order-value-prelim}.
To separate sampling error from modeling error, we use two synthetic game families with closed-form GPASV values.
Scenario 1 uses a line DAG: $1\to \cdots\to n$;
Scenario 2 has the structure of a \emph{DAG of cycles}: players are partitioned into blocks, the blocks form a DAG among themselves, and each block is internally a big directed cycle.

\begin{wrapfigure}{r}{0.6\textwidth}
    \centering
          \includegraphics[width=\linewidth]{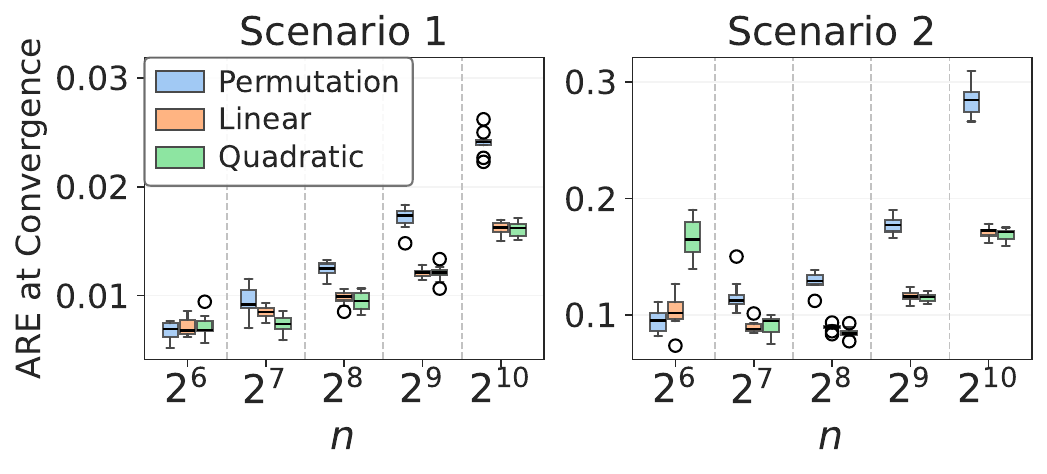}
      \caption{Surrogate-assisted methods (case 2).  Full results are in Appendix \ref{app:sim2-accuracy}.}
      \label{fig:short-surrogate}
    \vspace{-1em}
\end{wrapfigure}
Figure \ref{fig:short-accuracy} shows that the error of direct MC quickly decreases with sample budget; the problem's difficulty increases with $n$ (\# of players), decreases for graphs with stronger priority structures; caching $U(S)$ significantly reduces computational cost.
In Figure \ref{fig:short-surrogate} and Table \ref{tab:surrogate-short}, we compared (i) our direct MC estimator on \eqref{eq:random-order-value-prelim} to two surrogate-assisted estimators introduced in \ref{sec:two-stage-surrogate} using (ii) linear; and (iii) quadratic surrogates -- roughly speaking, (ii) fits $U(S)\approx U(\emptyset)+ \sum_{i\in S} a_i$, while (iii) fits $U(S)\approx U(\emptyset)+\sum_{i\in S} a_i+\sum_{i,j\in S} b_{i,j}$.

{\bf Key takeaway:} while surrogate methods improve estimation accuracy in most cases, they (especially the quadratic method) could introduce remarkable runtime and memory overhead.
These additional costs can be much more significant than PASV, since PASV can zero out $b_{i,j}$'s corresponding to DAG edges, while under GPASV this is impossible.

\begin{wrapfigure}{r}{0.63\textwidth}
    \centering
    \includegraphics[width=0.63\textwidth]{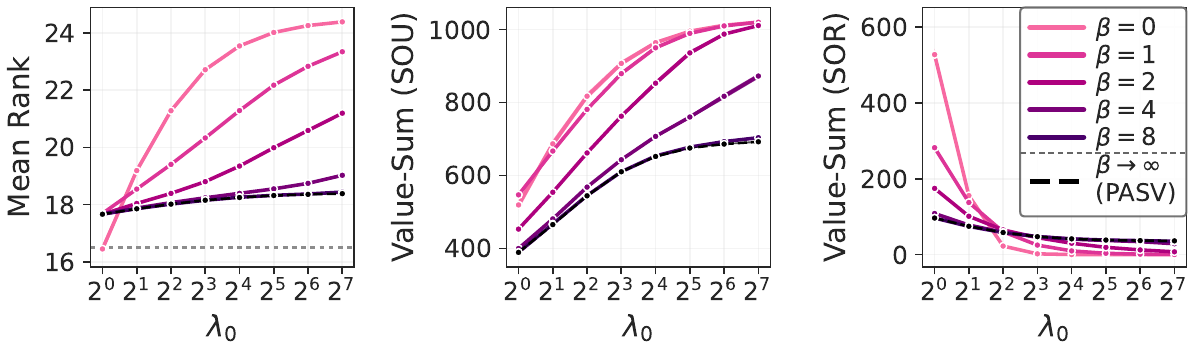}
    \caption{
    Simulation 3: priority sweeping.
    }
    \label{fig:sweep-value}
    \vspace{-1.5em}
\end{wrapfigure}
\paragraph{Simulation 3: priority sweeping.}
This simulation compares the impact of the soft priority $\lambda_i$ under GPASV and PASV.
Recall from \cite{lee2026priority} that under PASV, the impact of $\lambda_i$ is limited by the hard priority.
However, since GPASV has softened the hard priority, any permutation is possible.
To stabilize results, here we pick a subset of size $n/2$ and set their $\lambda_i$'s to a common value $\lambda_0$ and sweep.
Figure \ref{fig:sweep-value} reports the mean rank and valuation of this group.
As expected, when $\beta$ is small, $\lambda_0$ has much more impact on results under GPASV than under PASV (black curve).
When $\beta=0$, GPASV recovers Plackett-Luce, in which scenario  $\lambda$'s impact is maximized.

\section{Application: LLM Evaluation under Human Preference Graphs}
\label{sec:application}

We apply GPASV to LLM ensemble valuation, using Chatbot Arena's \citep{chiang2024chatbot} pairwise human preferences as a cyclic, weighted priority graph that PASV cannot handle.
Aggregated human preferences over LLMs are known to contain cycles with high probability~\citep{liu2025statistical}, so a cycle-aware method is not a corner case but an essential ingredient for this application.
The main takeaway is that valuation should not be treated as a single, best-chosen number automatically read off from the data.
Instead, the user must decide how to weigh soft, individual priorities against pairwise user-preference priorities, and this choice materially affects the eventual valuation.

\paragraph{Data and setup.}
Our goal is to apply GPASV to value each candidate LLM.
Running GPASV on this task requires three inputs, each sourced from a public benchmark: a coalition utility, a pairwise priority graph, and a soft priority.
We restrict attention to the $20$ models that appear in both MT-Bench \citep{zheng2023judging} and Chatbot Arena \citep{chiang2024chatbot}, so each player is one model and a coalition $S$ is a subset of these models.
	\paragraph{Coalition utility.}
    	The coalition utility is supplied by MT-Bench \citep{zheng2023judging}, which consists of $80$ two-turn prompts together with released responses from a large set of models scored by an LLM-as-a-judge protocol on a $1$--$10$ scale.
    	For a coalition $S$ and a prompt $q$, we first pass the first-turn responses of models in $S$ through a third-party LLM aggregator to synthesize a single ensemble answer, then ask a third-party LLM judge to score that answer against the MT-Bench protocol; we repeat the aggregator-judge pipeline for the second turn, conditioned on the first-turn dialogue so that coherence is scored as well.
    	The prompt-level utility is the mean of the two turn scores, and the coalition utility $U_{\mathrm{ens}}(S)$ is the mean over all $80$ prompts, with $U_{\mathrm{ens}}(\emptyset)=0$.
    	Both aggregator and judge are instantiated by \texttt{Qwen3.5-35B-A3B-fp8} \citep{qwen3.5}; further details are relegated to Appendix \ref{app:application-llm-pipeline}; templates are in Appendix \ref{app:application-llm-prompts}.
    
	\paragraph{Pairwise priority graph.}
    	Chatbot Arena \citep{chiang2024chatbot} supplies pairwise human preferences.
    	For each model pair $(i,j)$ with at least $50$ recorded comparisons, let $i$ denote the majority-preferred model and set $\omega_{ij}=\widehat p_{ij}-1/2$ and $\omega_{ji}=0$, where $\widehat p_{ij}$ is the empirical win probability of $i$ over $j$.
        The graph has cycles and is not a DAG \citep{liu2025statistical}.
    	The temperature $\beta$ in \eqref{eq:gpasv} controls how strongly this graph shapes the GPASV order distribution: $\beta=0$ turns off the priority graph entirely, leaving only $\lambda$ (SV if all $\lambda_i$'s are equal); larger $\beta$ enforces the Arena majority directions more strongly.
    \paragraph{Soft priority.}
    	The soft priority encodes a deployment preference for open-source models over paid proprietary APIs.
    	Let $z_i=1$ if model $i$ is open-source and $z_i=0$ otherwise (the paid models in our set are \texttt{gpt-4}, \texttt{gpt-3.5-turbo}, \texttt{claude-v1}, \texttt{claude-instant-v1}, and \texttt{palm-2}).
    	A non-negative temperature $\alpha$ converts this raw preference into the player weights $\lambda_i=\exp(-\alpha z_i)$ that enter GPASV in \eqref{eq:gpasv}, so that $\alpha=0$ recovers $\lambda_i\equiv 1$ (no soft priority).
        Because GPASV samples backward, larger $\alpha$ places open-source models earlier in the sampled permutation.
    	This is not a claim that open-source models are intrinsically better; it simply reflects an evaluator who wants to know how much of the ensemble's performance open-source models are responsible for before reaching for proprietary APIs.

\paragraph{Experimental design.}
Our experiments scan $(\alpha,\beta)$ along three one-dimensional slices: $\alpha$ varies alone (with $\beta=0$), $\beta$ varies alone (with $\alpha=0$), and the two vary together with $\alpha=\beta$.
Along each slice the non-zero temperature takes the values $\{1,2,4,8,16,32\}$, and all three slices share the baseline $(\alpha,\beta)=(0,0)$, at which GPASV reduces to the classical Shapley value with no priority applied.
This gives $19$ settings in total.
The $\alpha$-only and $\beta$-only slices are ablations for diagnosing whether the node and graph priorities encode overlapping signals, while the joint slice studies the evaluator’s deliberate decision to combine them.
Due to space, the main paper presents a few representative $(\alpha,\beta)$ pairs; full results across all $19$ settings are in Appendix \ref{app:application-llm-results}.

\begin{figure}[h!]
    \centering
    \includegraphics[width=\textwidth]{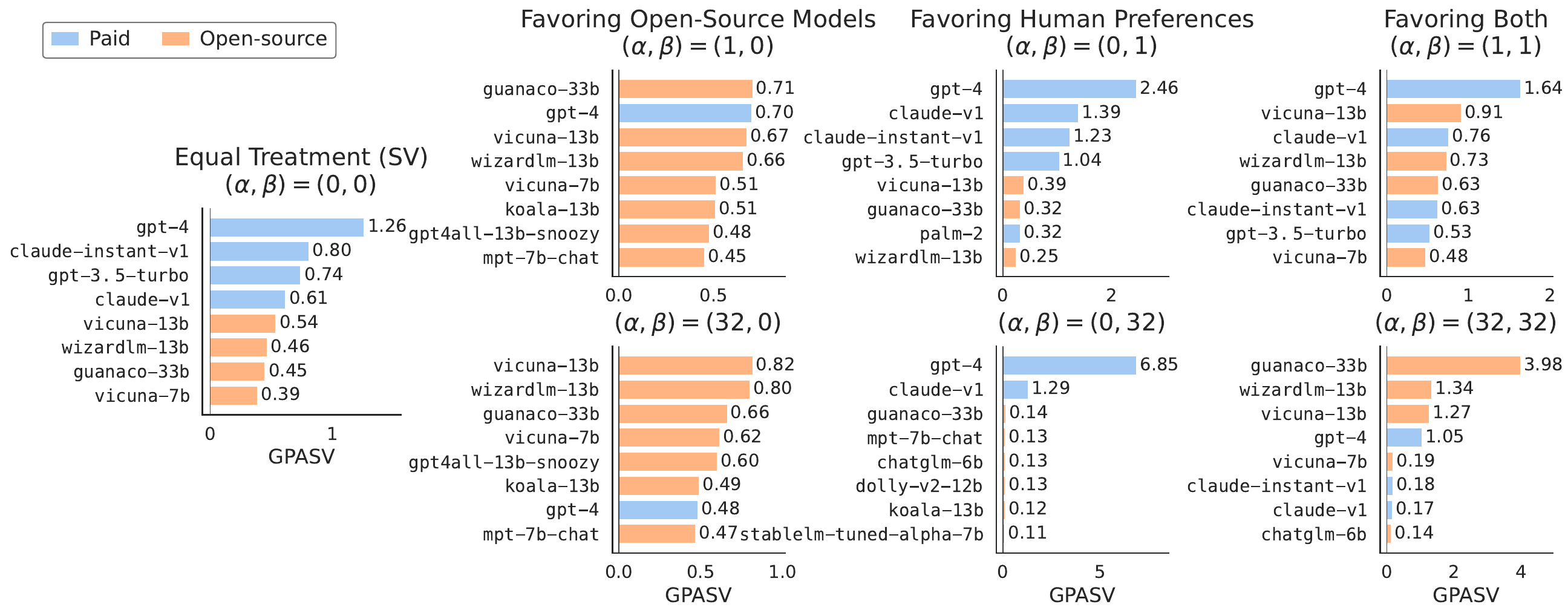}
    \caption{Top-valued models under different priority balances.
    Colors: paid vs open-source.}
    \label{fig:llm-top8}
\end{figure}

\paragraph{Computation.}
The LLM inference cost in utility evaluations dominates that of permutation sampling: one un-cached $U_{\mathrm{ens}}(S)$ evaluation costs $4\times 80=320$ calls.
Naively evaluating every prefix over 19 settings would require $O(10^8)$ LLM calls.
We use the two accelerations from Section \ref{sec:computation-main}: cache each $(S,q)$ aggregator-judge output, and reuse neighboring-setting permutations by SNIS, drawing fresh samples only when effective sample size is low.
Appendix \ref{app:application-llm-sampling} reports a realized 5.4$\times$ reduction in distinct utility evaluations.

\paragraph{Results.}
Figures \ref{fig:llm-top8} and \ref{fig:llm-group} report the scan outcomes: top-$8$ GPASV values under 7 representative $(\alpha,\beta)$ settings, and group-sum GPASV along the three slices.
Four observations.

\textit{1. Paid dominates the no-priority baseline.}
At $(0,0)$, GPASV reduces to the Shapley value, and paid models hold ranks $1$--$4$ led by \texttt{gpt-4} at $1.26$; the first open-source model only appears at rank $5$.
The paid group-sum exceeds the open-source group-sum, even though paid contains only 5 of the 20 models; equivalently, the paid per-model average is substantially higher.
This skew comes from the coalition utility alone --- the aggregator-and-judge pipeline simply scores paid responses higher.
All later observations are to be read against this baseline.

\begin{wrapfigure}{r}{0.5\textwidth}
\centering
\includegraphics[width=0.5\textwidth]{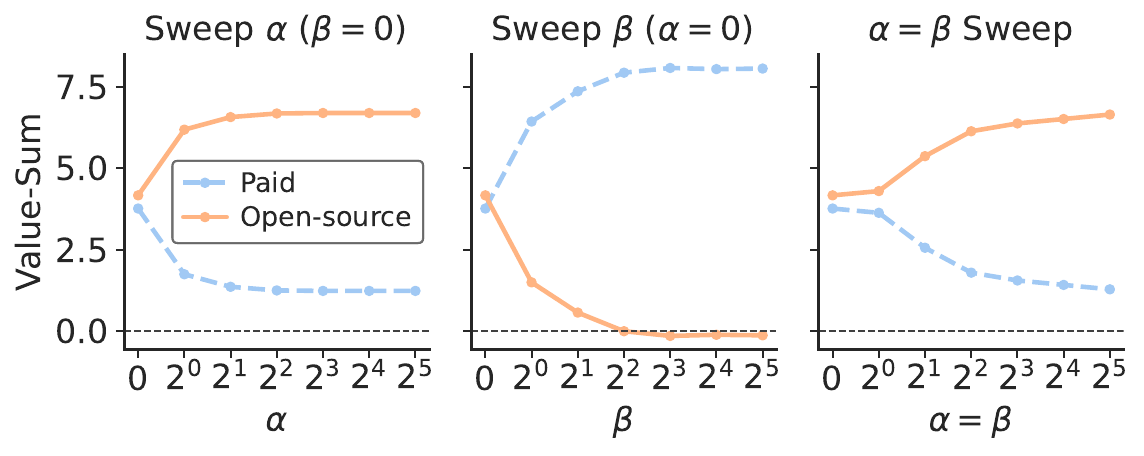}
\caption{Group-sum GPASV for paid and open-source models across priority sweeps.
The $\alpha$ sweep strengthens open-source node priority, the $\beta$ sweep strengthens the preference graph priority, and the joint sweep increases both together.}
\label{fig:llm-group}
\vspace{-1.0em}
\end{wrapfigure}
\textit{2. $\alpha$ lifts the favored group uniformly.}
Along the $\beta=0$ slice, the paid and open-source group-sum curves cross near $\alpha\approx 2$ and then separate sharply, with open-source climbing and paid collapsing (Figure \ref{fig:llm-group}, left).
Even at $\alpha=1$, \texttt{gpt-4} has lost rank $1$ to \texttt{guanaco-33b} and the other paid models fall out of the top $8$; at $\alpha=32$ every open-source model is lifted by a similar multiplicative factor, because $\alpha$ enters each $\lambda_i=\exp(-\alpha z_i)$ independently.

\textit{3. $\beta$ concentrates GPASV on a single preference-graph hub.}
Along the $\alpha=0$ slice, increasing $\beta$ does not spread the gain across paid models.
At $\beta=32$, \texttt{gpt-4} alone attains GPASV value $6.85$ over $5\times$ the runner-up, while most other paid models fall to the open-source tail ($\sim 0.1$).
The paid group-sum in Figure \ref{fig:llm-group} (middle) rises, but is carried by this one model rather than spread across the five.
The structural reason is that $\beta$ acts through the pairwise violation $V_\omega$, which couples every player to every other, so graph-theoretic hubs absorb attribution rather than uniform category members.

\textit{4. On the joint slice, $\alpha$ wins.}
The joint slice $\alpha=\beta$ could a priori track either axis or interpolate between them; empirically it tracks the $\alpha$-only axis (Figure \ref{fig:llm-group}, right).
At $(32,32)$, \texttt{guanaco-33b} leads at $3.98$, while \texttt{gpt-4} drops to rank $4$ at $1.05$ --- nowhere near the $6.85$ it commands under $\beta$-only.
The player-level soft priority, which operates through an additive log-factor for every player, dominates the edge-level graph priority, which concentrates into a small number of hubs.

\section{Key Takeaway and Limitations}

{\bf Take-home message.}
Priority-aware valuation is {\bf not a one-button operation}: whenever both player-level and pairwise priority information are present (e.g., open-source deployment preference + Chatbot Arena preference graph), the user must decide how to balance the two sources, and the resulting valuation can materially depend on that decision.
This is {\bf not a method-level inconsistency but a natural property of the problem}: unless an external objective is specified, the data alone cannot identify a uniquely correct priority trade-off; different evaluator preferences define different coherent valuations.
The value of GPASV is to keep these modeling choices visible: it exposes the priority trade-off as an explicit, sweepable input and accepts cyclic priority graphs without first reducing them to a DAG.

{\bf Limitations.} We discuss two main limitations: utility calls and priority graph stability.
Due to page limit, we relegate the detailed discussion to Appendix \ref{section::limitations}.

\section*{Acknowledgements}

Lee and Zhang were supported by NSF DMS-2311109. Liu and Tang were supported by NSF DMS-2412853 and Jane Street Group, LLC.

\bibliographystyle{abbrvnat}
\bibliography{references}

\clearpage

\appendix
\section*{Computer Code}
Computer code is available at: \url{https://github.com/KiljaeL/GPASV}

\section{Limitations}
\label{section::limitations}

\subsection{Utility Calls}
	The main computational cost of GPASV lies in repeated evaluations of the coalition utility $U(S)$.
	This cost is shared by essentially all model-agnostic data valuation methods in the random-order family, and is, in our view, unavoidable: any method that treats $U$ as a black box has no recourse but to query it, and accurate estimation of marginal contributions imposes a minimum query budget that scales with the number of players.
	The cost becomes especially salient when each query to $U$ is itself expensive, as in the LLM ensemble setting of Section~\ref{sec:application}, where evaluating $U(S)$ for a single coalition requires running an LLM-as-an-aggregator and an LLM-as-a-judge over all $80$ MT-Bench questions, with $20$ candidate models giving $2^{20}$ possible coalitions.

	To address this limitation, our framework incorporates three computational devices that work jointly with the GPASV definition rather than being bolted on after the fact.
	Subset caching (Section~\ref{sec:computation-main}) exploits the fact that the same subset $S$ recurs many times across both the permutation-based estimator and the surrogate-based estimator, so each distinct $U(S)$ is evaluated at most once.
	Self-normalized importance sampling (Section~\ref{sec:snis-reuse}) reuses permutations already drawn under one $(\lambda,\omega)$ to estimate GPASV at neighboring parameter values, which is what makes the priority sweeps in Section~\ref{sec:application} affordable.
	The two-stage surrogate-adjusted residual estimator (Section~\ref{sec:two-stage-surrogate}) further reduces variance per utility query by absorbing most of the signal into a fitted surrogate and spending the remaining query budget on residual correction.
	Together, these devices allow the full LLM application of Section~\ref{sec:application}, including its priority sweeps, to be carried out within a fixed and modest LLM-call budget.

\subsection{Priority Graph}

A second limitation concerns the priority graph itself. 
Throughout this paper we treat the priority graph as a given input and study how it shapes the resulting valuation, but in practice the graph is often estimated from finite, noisy, or partially observed data, as is the case for the Chatbot Arena pairwise preferences used in Section~\ref{sec:application}. 
At finite $\beta$, GPASV remains a smooth function of the edge weights, but the $\beta\to\infty$ hard-penalty boundary can amplify near-ties in violation cost. 
Hence priority sweeps can diagnose empirical instability, but they are not formal statistical error bars. Deriving finite-sample error bounds or sharper stability diagnostics for noisy estimated graphs is beyond the present work and is an important direction for future research.

\section{Background for the Shapley Value and Random Order Values}
\label{app:rov-axioms}

This appendix collects the standard axioms invoked throughout the paper.
We fix the player set $[n]$ and consider utility functions $U:2^{[n]}\to\mathbb{R}$ with $U(\emptyset)=0$.
A \emph{value} is a mapping $\psi$ assigning to each such $U$ a payoff vector $\psi(U)=(\psi_i(U))_{i\in[n]}$.

The following four axioms are originally due to \citet{shapley1953value}.

\begin{axiom}[Efficiency (E)]
\label{axiom:e}
$\sum_{i\in[n]}\psi_i(U)=U([n])$.
\end{axiom}

\begin{axiom}[Linearity (L)]
\label{axiom:l}
For utilities $U,V$ and scalars $\alpha,\beta\in\mathbb{R}$, $\psi(\alpha U+\beta V)=\alpha\psi(U)+\beta\psi(V)$.
\end{axiom}

\begin{axiom}[Null Player (NP)]
\label{axiom:np}
If player $i$ contributes nothing, i.e., $U(S\cup\{i\})=U(S)$ for all $S\subseteq[n]\setminus\{i\}$, then $\psi_i(U)=0$.
\end{axiom}

\begin{axiom}[Symmetry (S)]
\label{axiom:s}
If two players $i,j$ are interchangeable, i.e., $U(S\cup\{i\})=U(S\cup\{j\})$ for all $S\subseteq[n]\setminus\{i,j\}$, then $\psi_i(U)=\psi_j(U)$.
\end{axiom}

E says the grand-coalition revenue $U([n])$ is fully distributed; L says the value commutes with affine combinations of utilities; NP rules out paying a player whose marginal contribution is identically zero; S enforces anonymity by treating exchangeable players identically.
\citet{shapley1953value} shows that E + L + NP + S uniquely characterize the Shapley value $\psi^{\mathrm{SV}}$ in \eqref{eq:shapley-subset-prelim}.

For random order values, anonymity is precisely what one wants to relax in order to encode priority.
\citet{weber1988probabilistic} replaces S with the following monotonicity (M) axiom and obtains a strictly larger class.

\begin{axiom}[Monotonicity (M)]
\label{axiom:m}
If $U$ is monotone in the sense $S\subseteq T\Rightarrow U(S)\leq U(T)$, then $\psi_i(U)\geq 0$ for every $i\in[n]$.
\end{axiom}

M rules out negative payoffs whenever larger coalitions never reduce revenue.
\citet{weber1988probabilistic} shows that a value satisfies E + L + NP + M if and only if it admits the ROV form \eqref{eq:random-order-value-prelim} for some distribution $p$ over $\Pi$.
In other words, ROV is exactly the family obtained by deliberately relaxing Symmetry: it preserves the additive, contribution-based fairness ideals (E, L, NP, M) while opening room to treat players asymmetrically through the choice of permutation distribution $p$.

\section{Two Equivalent Representations of GPASV}
\label{app:representation}

The definition of GPASV in \eqref{eq:gpasv} encodes pairwise priorities based on a Gibbs-style representation with \emph{additive} priorities $\omega_{ij}\ge 0$.
However, the same family of distributions can be expressed in an alternative, \emph{multiplicative} representation.
Define
$$
\tilde\omega_{ij}=\exp(-\beta\omega_{ij}).
$$
Note that the temperature $\beta$ is absorbed in the multiplicative representation.
By one-to-one correspondence, $\tilde\omega_{ij}\in(0,1]$ and $\tilde\omega_{ij}=1$ corresponds to the absence of pairwise priority.
In this representation, the stepwise violation factor is the product
\begin{equation}
    \tilde\omega_k^S
    :=
    \prod_{j\in S\backslash\{k\}}\tilde\omega_{kj},
    \quad S\subseteq[n],\ k\in[n],
    \label{eq:app-param-V-mult}
\end{equation}
with $\tilde\omega_{kk}=1$,
so that we have the equivalence
\begin{equation}
    \exp\{-\beta\cdot V_\omega(k;S)\}
    =
    \prod_{j\in S}\exp(-\beta\omega_{kj})
    =
    \prod_{j\in S}\tilde\omega_{kj}
    =
    \tilde\omega_k^S.
    \label{eq:app-param-equivalence}
\end{equation}
Under this equivalence, the GPASV distribution \eqref{eq:gpasv} admits the equivalent product form
\begin{equation}
    p^{(\lambda,\omega)}(\pi)
    \propto
    \prod_{t=1}^n
    \left[
    \frac{\lambda_{\pi_t}\tilde\omega_{\pi_t}^{S_t}}
         {\sum_{k\in S_t}\lambda_k\tilde\omega_k^{S_t}}
    \cdot
    \sum_{k\in S_t}\tilde\omega_k^{S_t}
    \right].
    \label{eq:app-param-gpasv-mult}
\end{equation}

Reading off \eqref{eq:app-param-gpasv-mult}, GPASV's GSCF factorization (Definition~\ref{def:gscf}) instantiates as
\begin{align}
	c_{\lambda,\omega}(i;S) 
	=&~ \frac{\lambda_i\tilde\omega_i^S}{\sum_{k\in S}\lambda_k\tilde\omega_k^S}, 
	\quad 
	s_\omega(S) = \sum_{k\in S}\tilde\omega_k^S.
	\label{eq:gpasv-gscf-instantiated}
\end{align}
That is, the choice factor is a $\lambda$-weighted softmax over $S$ with weights $\tilde\omega_k^S$, and the state factor is the corresponding partition sum.

Table \ref{tab:param-conversion} summarizes the relationship between the two representations.

\begin{table}[h]
    \centering
    \caption{Correspondence between the multiplicative representation $\tilde\omega_{ij}=\exp(-\beta\omega_{ij})$ and the additive representation $\omega_{ij}$ used in the main text.}
    \label{tab:param-conversion}
    \begin{tabular}{lcc}
    \toprule
    Quantity/Case & Correspondence \\
    \midrule
    Pairwise priority & $\tilde\omega_{ij}=\exp(-\beta\omega_{ij})$ \\
    Stepwise violation & $\tilde\omega_k^S=\exp(-\beta\cdot V_{\omega}(k;S))$ \\
    No priority & $(\tilde\omega_{ij}=1) \sim (\omega_{ij}=0)$ \\
    Hard priority & $(\tilde\omega_{ij}\to0) \sim (\omega_{ij}\text{ or }\beta\to\infty)$ \\
    \bottomrule
    \end{tabular}
\end{table}

\section{A Detailed Example of the GPASV under Cyclic Graph}
\label{appendix::section::GPASV-reduce-examples}

Here we provide a detailed analysis of the cyclic counterexample where GPASV does not reduce to PASV, as mentioned in Section~\ref{subsec::method::limit}.
Consider five players with node weights $\lambda_1=1, \lambda_2=2, \lambda_3=3,\lambda_4=4,\lambda_5=5$ and directed edge weights $\omega_{12}=3, \omega_{23}=4, \omega_{31}=1, \omega_{34}=6$ (player 5 isolated), see Figure \ref{fig:reduction-to-pasv}.
\begin{figure}[h!]
    \centering
    \input{figures/tikz/reduction_to_pasv_cycle_dag.tex}
    \caption{Example showing that cyclic GPASV does not reduce to PASV; (a) the support of the limiting GPASV distribution collapses to the feasible permutations of the DAG obtained by deleting the minimum-violation edge; (b) Yet the unnormalized distribution of the limiting GPASV differs from that of the PASV on the reduced DAG.}
    \label{fig:reduction-to-pasv}
\end{figure}

If we delete the edge $3\to1$, the graph becomes acyclic, and the permutations obeying the remaining edges achieve the minimum total violation of $\omega_{31}=1$.
Write $\omega^\star$ for the edge weights of the reduced DAG, so that $\omega_{31}^\star=0$ and $\omega_{ij}^\star=\omega_{ij}$ for all other pairs.
Then, the support of the limiting GPASV matches the support of the PASV on $G_\omega^\star$:
$$
    \widetilde\Pi^{G_\omega}\;=\;\Pi^{G_\omega^\star}\;=\;\bigl\{(1,2,3,4,5),\,(1,2,3,5,4),\,(1,2,5,3,4),\,(1,5,2,3,4),\,(5,1,2,3,4)\bigr\}.
$$
It is enough to compare with $G_\omega^\star$: if the limiting GPASV law coincided with PASV on some DAG $H$, then its support would satisfy $\Pi^H=\widetilde\Pi^{G_\omega}=\Pi^{G_\omega^\star}$. The set of linear extensions determines the same partial order up to transitive closure, and PASV depends only on this induced partial order. Thus PASV on $H$ coincides with PASV on $G_\omega^\star$, and it suffices to show that the limiting GPASV law differs from PASV on $G_\omega^\star$.

Since the two supports agree, we may compare the unnormalized distributions of the limit of GPASV in Theorem~\ref{thm:hard-limit} with the PASV \eqref{eq:pasv-prelim} on $G_{\omega^\star}$, which are given by:
$$
    \lim_{\beta\to\infty}\tilde{p}^{(\lambda,\omega)}(\pi)
    \;=\;\prod_{t=1}^n\frac{\lambda_{\pi_t}\,|M_\omega(S_t)|}{\sum_{k\in M_\omega(S_t)}\lambda_k},
    \quad
    \tilde{p}^{(\lambda,G_{\omega^\star})}(\pi)
    \;=\;\prod_{t=1}^n\frac{\lambda_{\pi_t}\,|\max(S_t)|}{\sum_{k\in \max(S_t)}\lambda_k}.
$$

For a permutation $\pi=(1,2,3,4,5)$,  their ratio (PASV/GPASV) is 1.
However, on the permutation $\pi=(1,2,3,5,4)$, their ratio is
$$
    \frac{\rm PASV}{\rm GPASV}
    =
    \frac{2\lambda_5}{\lambda_3+\lambda_5}.
$$
Their key discrepancy happens when discussing $\pi_4$.
Under GPASV, $M_\omega(S_4)=\{5\}$, while under PASV, $\max(S_4)=\{3,5\}$, leading to the crucial difference between these two schemes.

From this observation, we emphasize following understandings:
\begin{itemize}
    \item In terms of the support of the distribution of $\pi$, a GPASV under a cyclic graph may reduce to a PASV.
    However, their $\pi$ distributions are still different;
    \item In PASV, at each stage $t$, the maximal set $\max(S_t)$ is the range of candidate players eligible for competing for $\pi_t$;
    however, in GPASV, this is completely different: the set of eligible players for $\pi_t$ is {\bf not} determined by $M_\omega(S_t)$, moreover, it cannot be determined locally, but only through the global rule $\pi\in\tilde\Pi^{G_\omega}$.
\end{itemize}

\section{Proofs of Theoretical Results}

\subsection{Proof of Theorem~\ref{thm:uniqueness}}

We first record the reduction from the four background axioms to the ROV class; this is the entry point for the GSCF/GWP/PVF analysis below.

\begin{lemma}[\citet{weber1988probabilistic}]
\label{lem:weber}
A value $\psi$ satisfies E + L + NP + M (Appendix~\ref{app:rov-axioms}) if and only if $\psi=\nu^p$ for some probability distribution $p$ on $\Pi$, where $\nu^p$ is defined in \eqref{eq:random-order-value-prelim}.
\end{lemma}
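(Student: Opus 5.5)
The plan has two directions. The ``if'' direction is a direct check of the four axioms for $\nu^p$. The ``only if'' direction reduces $\psi$ to a flow on the Boolean lattice $2^{[n]}$ and then decomposes that flow into a distribution over maximal chains, i.e.\ permutations.

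\emph{If.} Suppose $\psi=\nu^p$.
\begin{itemize}
\item For every fixed $\pi$, the marginal contributions telescope: $\sum_i [U(\pi^i\cup\{i\})-U(\pi^i)]=U([n])-U(\emptyset)=U([n])$. Taking expectations gives E.
\item L follows from linearity of expectation.
\item If $i$ is null, every marginal contribution of $i$ vanishes, which gives NP.
\item If $U$ is monotone, every marginal contribution is $\ge 0$, which gives M.
\end{itemize}

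\emph{Only if, step 1: marginal form.} By L, $\psi_i(U)=\sum_{T\subseteq[n]} c_i(T)\,U(T)$ for some coefficients $c_i(T)$, since $U$ ranges over a finite-dimensional space. Fix $T\not\ni i$ and consider the game $U(S)=\mathbbm{1}[S\in\{T,T\cup\{i\}\}]$. In this game $i$ is null, so NP gives $c_i(T\cup\{i\})=-c_i(T)$. Setting $p_i(T):=c_i(T\cup\{i\})$, we obtain
\[
\psi_i(U)=\sum_{T\not\ni i}p_i(T)\,[U(T\cup\{i\})-U(T)].
\]

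\emph{Step 2: nonnegativity.} For $T\not\ni i$, define $V(S)=1$ if $T\cup\{i\}\subseteq S$ or $S\not\subseteq T\cup\{i\}$, and $V(S)=0$ otherwise.
\begin{itemize}
\item $V$ is monotone, because both defining conditions are preserved under taking supersets.
\item Take $R\not\ni i$. If $R\not\subseteq T$, then $V(R)=V(R\cup\{i\})=1$, so the marginal is $0$. If $R\subseteq T$, then $V(R)=0$ and $V(R\cup\{i\})=\mathbbm{1}[R=T]$. Hence $V(R\cup\{i\})-V(R)=\mathbbm{1}[R=T]$.
\end{itemize}
Applying M to $V$ then yields $p_i(T)\ge0$.

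\emph{Step 3: flow conservation.} Apply E to the games $U_S(R)=\mathbbm{1}[R=S]$. In $\sum_i\psi_i(U_S)$, the coefficient of $U(S)$ collects inflow terms $p_i(S\setminus\{i\})$ for $i\in S$ and outflow terms $-p_j(S)$ for $j\notin S$. This gives:
\begin{itemize}
\item $\sum_{i\in S}p_i(S\setminus\{i\})=\sum_{j\notin S}p_j(S)$ for $\emptyset\neq S\neq[n]$;
\item $\sum_{i\in[n]}p_i([n]\setminus\{i\})=1$, from $S=[n]$;
\item $\sum_j p_j(\emptyset)=1$, which follows by summing the relations (or from the game $U\equiv 1$ on nonempty sets).
\end{itemize}
Thus $(p_i(T))$ is a nonnegative unit flow from $\emptyset$ to $[n]$ on the Hasse diagram of $2^{[n]}$, where $p_i(T)$ is the flow on the edge $T\to T\cup\{i\}$.

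\emph{Step 4: path decomposition.} This is the step I expect to be the main obstacle. Let $q(S):=\sum_{j\notin S}p_j(S)$ be the outflow at $S$, which equals the inflow at $S$ by Step 3. Define a Markov chain that moves from $S$ to $S\cup\{j\}$ with probability $p_j(S)/q(S)$; when $q(S)=0$ the set $S$ is never reached, so any convention works there. Let $p(\pi)$ be the probability that the chain traverses the chain of prefixes of $\pi$, that is,
\[
p(\pi)=\prod_{t=1}^{n} p_{\pi_t}(S_{t-1})/q(S_{t-1}), \qquad S_t=\{\pi_1,\dots,\pi_t\}.
\]
I then prove by induction on $|S|$ that $\Pr(\text{the chain visits } S)=q(S)$. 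The induction step uses conservation: the visiting probability of $S$ equals $\sum_{i\in S}q(S\setminus\{i\})\cdot p_i(S\setminus\{i\})/q(S\setminus\{i\})=\sum_{i\in S}p_i(S\setminus\{i\})$, which is $q(S)$ by Step 3. Consequently $\Pr(\pi^i=T)=q(T)\cdot p_i(T)/q(T)=p_i(T)$. Substituting into Step 1 gives $\psi=\nu^p$. The delicate points are the zero-outflow sets and checking that $p$ is a probability distribution. The latter holds because the chain reaches $[n]$ with probability $q([n])$, which equals the inflow $\sum_i p_i([n]\setminus\{i\})=1$ by Step 3.
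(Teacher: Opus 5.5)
\paragraph{Comparison with the paper.} The paper does not prove this lemma. It quotes it from \citet{weber1988probabilistic} and uses it as a black box, so your argument is a self-contained replacement rather than a variant of something in the paper. Your proof is correct, and it follows the classical route:
\begin{enumerate}
\item L gives a linear representation of each $\psi_i$.
\item NP reduces it to the marginal form.
\item M, applied to a monotone game isolating a single marginal, gives $p_i(T)\ge 0$.
\item E becomes flow conservation on the Hasse diagram of $2^{[n]}$.
\item A sequential (Markov) decomposition turns the unit flow into a distribution over maximal chains.
\end{enumerate}
Your test game $V$ in Step 2 checks out: it is monotone, has $V(\emptyset)=0$, and gives player $i$ marginal $\mathbbm{1}[R=T]$. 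One feature worth stating explicitly is that you never assume the per-player normalization $\sum_{T\not\ni i}p_i(T)=1$. It follows from the unit-flow structure, because every maximal chain crosses exactly one $i$-edge. This is why NP, rather than a dummy-type axiom, suffices once E is imposed.

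\paragraph{Slips to fix.} Two are small:
\begin{itemize}
\item In Step 1, the game $\mathbbm{1}[S\in\{T,T\cup\{i\}\}]$ is inadmissible when $T=\emptyset$, since it has $U(\emptyset)=1$. This is harmless. Because $U(\emptyset)=0$, the coefficient $c_i(\emptyset)$ is free, so you can simply define $p_i(\emptyset):=c_i(\{i\})$ and the marginal form still holds.
\item In Step 4 you define $q(S)$ as the outflow, so $q([n])=0$ (an empty sum). Hence $\Pr(\text{the chain visits } S)=q(S)$ holds only for $S\neq[n]$; at $[n]$ the induction returns the inflow, which is $1$. Your closing sentence about ``$q([n])$'' is therefore literally false as written. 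It is also unnecessary: $p(\pi)$ is a product of transition probabilities of a stochastic kernel (use any stochastic convention at zero-outflow states), so it is automatically a probability distribution on $\Pi$.
\end{itemize}

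\paragraph{Zero-outflow terms.} In the induction, when $q(S\setminus\{i\})=0$ the corresponding term is $0/0$. It should be read as $0=p_i(S\setminus\{i\})$, which holds because nonnegativity forces every outflow term from a zero-outflow set to vanish. The same reasoning gives $\Pr(\pi^i=T)=p_i(T)=0$ whenever $q(T)=0$.
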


A proof of Lemma~\ref{lem:weber} is given by \citet{weber1988probabilistic}; we take it as a starting point and focus on the GSCF/GWP/PVF analysis below.

\begin{proof}[Proof of Theorem~\ref{thm:uniqueness}]
We prove the two implications $\boldsymbol{(\Rightarrow)}$ and $\boldsymbol{(\Leftarrow)}$ separately.

\noindent$\boldsymbol{(\Rightarrow)}$
Suppose $p$ is a ROV satisfying GSCF, GWP and PVF.
By GSCF \eqref{eq:gscf}, there exist $s_\omega:2^{[n]}\to\mathbb{R}_{\geq 0}$ and $c_{\lambda,\omega}(\cdot;\cdot)$ with $\sum_{i\in S}c_{\lambda,\omega}(i;S)=1$ for every nonempty $S\subseteq[n]$ such that
\begin{equation}
    p(\pi)\propto\prod_{t=1}^{n}s_\omega(S_t)\,c_{\lambda,\omega}(\pi_t;S_t),\quad\pi\in\Pi.
    \label{eq:proof-thm1-gscf}
\end{equation}

Fix any nonempty $S\subseteq[n]$ and any reference $j\in S$.
For every $i\in S$, GWP \eqref{eq:gwp} yields
$$
    c_{\lambda,\omega}(i;S)
    =
    c_{\lambda,\omega}(j;S)\cdot\frac{\lambda_i\exp\{-\beta\cdot V_\omega(i;S)\}}{\lambda_j\exp\{-\beta\cdot V_\omega(j;S)\}}.
$$
Summing over $i\in S$ and using the GSCF normalization $\sum_{i\in S}c_{\lambda,\omega}(i;S)=1$ identifies
\begin{equation}
    c_{\lambda,\omega}(i;S)
    =
    \frac{\lambda_i\exp\{-\beta\cdot V_\omega(i;S)\}}{\sum_{k\in S}\lambda_k\exp\{-\beta\cdot V_\omega(k;S)\}},
    \quad i\in S,
    \label{eq:proof-thm1-choice}
\end{equation}
which determines the GSCF choice factor uniquely from $(\lambda,\omega,\beta)$.

Substituting \eqref{eq:proof-thm1-choice} into \eqref{eq:proof-thm1-gscf} gives, for every $\pi\in\Pi$,
\begin{equation}
    p(\pi)
    \propto
    \left(\prod_{t=1}^{n}\frac{s_\omega(S_t)}{\sum_{k\in S_t}\lambda_k\exp\{-\beta\cdot V_\omega(k;S_t)\}}\right)
    \cdot
    \prod_{t=1}^{n}\lambda_{\pi_t}\exp\{-\beta\cdot V_\omega(\pi_t;S_t)\}.
    \label{eq:proof-thm1-rewrite}
\end{equation}
The exponents in the second product 
simplify by re-indexing the double sum
\begin{equation}
    \sum_{t=1}^{n}V_\omega(\pi_t;S_t)
    =
    \sum_{t=1}^{n}\sum_{s=1}^{t}\omega_{\pi_t,\pi_s}
    =
    \sum_{1\leq s<t\leq n}\omega_{\pi_t,\pi_s}
    =
    V_\omega(\pi),
    \label{eq:proof-thm1-telescope}
\end{equation}
where the last equality re-indexes by the ordered pairs $(\pi_s,\pi_t)$ for which $\pi_s$ appears before $\pi_t$.
Specializing to $\lambda_i\equiv 1$, \eqref{eq:proof-thm1-rewrite} then reduces to
$$
    p(\pi)
    \propto
    \exp\{-\beta\cdot V_\omega(\pi)\}\cdot
    \prod_{t=1}^{n}\frac{s_\omega(S_t)}{\sum_{k\in S_t}\exp\{-\beta\cdot V_\omega(k;S_t)\}}.
$$
Comparing with PVF, which asserts $p(\pi)\propto\exp\{-\beta\cdot V_\omega(\pi)\}$, the prefix product must be a $\pi$-independent constant: there exists $K>0$ such that
\begin{equation}
    \prod_{t=1}^{n}\frac{s_\omega(S_t)}{\sum_{k\in S_t}\exp\{-\beta\cdot V_\omega(k;S_t)\}}
    =
    K
    \quad\text{for every }\pi\in\Pi.
    \label{eq:proof-thm1-pathconst}
\end{equation}
The factor on the left is independent of $\lambda$, so \eqref{eq:proof-thm1-pathconst} continues to hold for arbitrary $\lambda$.

Returning to \eqref{eq:proof-thm1-rewrite} for arbitrary $\lambda$, we factor each stage to expose the GPASV form:
\begin{align*}
    p(\pi)
    &\propto
    \prod_{t=1}^{n}\frac{s_\omega(S_t)\,\lambda_{\pi_t}\exp\{-\beta\cdot V_\omega(\pi_t;S_t)\}}{\sum_{k\in S_t}\lambda_k\exp\{-\beta\cdot V_\omega(k;S_t)\}}\\
    &=
    \left(\prod_{t=1}^{n}\frac{s_\omega(S_t)}{\sum_{k\in S_t}\exp\{-\beta\cdot V_\omega(k;S_t)\}}\right)\\
    &\quad\times
    \prod_{t=1}^{n}\Bigg[
    \frac{\lambda_{\pi_t}\exp\{-\beta\cdot V_\omega(\pi_t;S_t)\}}{\sum_{k\in S_t}\lambda_k\exp\{-\beta\cdot V_\omega(k;S_t)\}}
    \cdot
    \sum_{k\in S_t}\exp\{-\beta\cdot V_\omega(k;S_t)\}
    \Bigg]\\
    &\propto
    \prod_{t=1}^{n}\Bigg[
    \frac{\lambda_{\pi_t}\exp\{-\beta\cdot V_\omega(\pi_t;S_t)\}}{\sum_{k\in S_t}\lambda_k\exp\{-\beta\cdot V_\omega(k;S_t)\}}
    \cdot
    \sum_{k\in S_t}\exp\{-\beta\cdot V_\omega(k;S_t)\}
    \Bigg],
\end{align*}
which is exactly $p^{(\lambda,\omega)}(\pi)$ in \eqref{eq:gpasv}.

\noindent$\boldsymbol{(\Leftarrow)}$
Conversely, take $p=p^{(\lambda,\omega)}$ defined in \eqref{eq:gpasv}.
Setting
$$
    s_\omega(S):=\sum_{k\in S}\exp\{-\beta\cdot V_\omega(k;S)\},
    \quad
    c_{\lambda,\omega}(i;S):=\frac{\lambda_i\exp\{-\beta\cdot V_\omega(i;S)\}}{\sum_{k\in S}\lambda_k\exp\{-\beta\cdot V_\omega(k;S)\}},
$$
we have $s_\omega:2^{[n]}\to\mathbb{R}_{\geq 0}$, $\sum_{i\in S}c_{\lambda,\omega}(i;S)=1$, and $p^{(\lambda,\omega)}(\pi)\propto\prod_t s_\omega(S_t)\,c_{\lambda,\omega}(\pi_t;S_t)$, so GSCF holds.
The ratio $c_{\lambda,\omega}(i;S)/c_{\lambda,\omega}(j;S)$ matches \eqref{eq:gwp} by construction, so GWP holds.
When $\lambda_i\equiv 1$, the stage-wise product Part~1$\cdot$Part~2 in \eqref{eq:gpasv} collapses to $\exp\{-\beta\cdot V_\omega(\pi_t;S_t)\}$, and \eqref{eq:proof-thm1-telescope} gives $\prod_t \exp\{-\beta\cdot V_\omega(\pi_t;S_t)\}=\exp\{-\beta\cdot V_\omega(\pi)\}$, recovering \eqref{eq:mallows-form}; hence PVF holds.
\end{proof}

\subsection{Proof of Theorem~\ref{thm:hard-limit}}

\begin{proof}
We will first isolate the dominant exponential factor in $\beta$ that controls the limiting support, and then evaluate the residual factor on that support.

By \eqref{eq:gpasv}, for every $\pi\in\Pi$,
$$
    p^{(\lambda,\omega)}(\pi)
    \propto
    \prod_{t=1}^{n}
    \lambda_{\pi_t}\exp\{-\beta\cdot V_\omega(\pi_t;S_t)\}
    \cdot
    \frac{\sum_{k\in S_t}\exp\{-\beta\cdot V_\omega(k;S_t)\}}{\sum_{k\in S_t}\lambda_k\exp\{-\beta\cdot V_\omega(k;S_t)\}}.
$$
Using the identity \eqref{eq:proof-thm1-telescope}, $\prod_{t=1}^{n}\exp\{-\beta\cdot V_\omega(\pi_t;S_t)\}=\exp\{-\beta\cdot V_\omega(\pi)\}$, so
\begin{equation}
    p^{(\lambda,\omega)}(\pi)
    \propto
    \exp\{-\beta\cdot V_\omega(\pi)\}\cdot A_\beta(\pi),
    \quad
    A_\beta(\pi):=
    \prod_{t=1}^{n}\lambda_{\pi_t}\cdot
    \frac{\sum_{k\in S_t}\exp\{-\beta\cdot V_\omega(k;S_t)\}}{\sum_{k\in S_t}\lambda_k\exp\{-\beta\cdot V_\omega(k;S_t)\}}.
    \label{eq:proof-thm2-rewrite}
\end{equation}

We now study the limit of $A_\beta(\pi)$ for a fixed $\pi\in\Pi$.
For each stage $t$, set
$$
    m_\omega(S_t):=\min_{k\in S_t}V_\omega(k;S_t),
    \quad
    M_\omega(S_t)=\arg\min_{k\in S_t}V_\omega(k;S_t),
$$
and factor out $e^{-\beta\cdot m_\omega(S_t)}$ from both the numerator and the denominator inside $A_\beta(\pi)$ to obtain
$$
    \frac{\sum_{k\in S_t}\exp\{-\beta\cdot V_\omega(k;S_t)\}}{\sum_{k\in S_t}\lambda_k\exp\{-\beta\cdot V_\omega(k;S_t)\}}
    =
    \frac{\sum_{k\in S_t}\exp\{-\beta\cdot[V_\omega(k;S_t)-m_\omega(S_t)]\}}{\sum_{k\in S_t}\lambda_k\exp\{-\beta\cdot[V_\omega(k;S_t)-m_\omega(S_t)]\}}.
$$
Each exponent in the rewritten sums is nonpositive and equals zero precisely when $k\in M_\omega(S_t)$, so as $\beta\to\infty$ every term with $k\notin M_\omega(S_t)$ vanishes and the ratio tends to $|M_\omega(S_t)|/\sum_{k\in M_\omega(S_t)}\lambda_k\in(0,\infty)$.
Taking the product over $t=1,\dots,n$ yields
\begin{equation}
    A_\beta(\pi)\xrightarrow{\beta\to\infty}
    A_\infty(\pi):=\prod_{t=1}^{n}\frac{\lambda_{\pi_t}\,|M_\omega(S_t)|}{\sum_{k\in M_\omega(S_t)}\lambda_k}\in(0,\infty),
    \quad\pi\in\Pi.
    \label{eq:proof-thm2-Ainf}
\end{equation}

It remains to handle the exponential factor in \eqref{eq:proof-thm2-rewrite}.
Recall $\widetilde{\Pi}^{G_\omega}=\arg\min_{\pi\in\Pi}V_\omega(\pi)$ and let
$$
    V^\star:=\min_{\pi\in\Pi}V_\omega(\pi).
$$
Multiplying every unnormalized weight in \eqref{eq:proof-thm2-rewrite} by the common factor $e^{\beta\cdot V^\star}$ leaves the induced probability distribution unchanged, so
\begin{equation}
    p^{(\lambda,\omega)}(\pi)
    \propto
    \exp\{-\beta\cdot[V_\omega(\pi)-V^\star]\}\cdot A_\beta(\pi),
    \quad\pi\in\Pi.
    \label{eq:proof-thm2-recenter}
\end{equation}
For $\pi\in\widetilde{\Pi}^{G_\omega}$, the exponential factor in \eqref{eq:proof-thm2-recenter} is identically $1$, and by \eqref{eq:proof-thm2-Ainf} the unnormalized weight converges to $A_\infty(\pi)\in(0,\infty)$.
For $\pi\notin\widetilde{\Pi}^{G_\omega}$, we have $V_\omega(\pi)-V^\star>0$ while $A_\beta(\pi)$ stays bounded by \eqref{eq:proof-thm2-Ainf}, so the unnormalized weight tends to $0$.
Consequently the total normalizing mass converges to the strictly positive quantity $\sum_{\pi\in\widetilde{\Pi}^{G_\omega}}A_\infty(\pi)$, and
$$
    \lim_{\beta\to\infty}p^{(\lambda,\omega)}(\pi)
    =
    \begin{cases}
    \dfrac{A_\infty(\pi)}{\sum_{\pi'\in\widetilde{\Pi}^{G_\omega}}A_\infty(\pi')}, & \pi\in\widetilde{\Pi}^{G_\omega},\\[4pt]
    0, & \pi\notin\widetilde{\Pi}^{G_\omega},
    \end{cases}
$$
which, by the definition of $A_\infty$ in \eqref{eq:proof-thm2-Ainf}, is exactly \eqref{eqn::gpasv-hard-limit}.
\end{proof}

\subsection{Proof of Proposition~\ref{prop:adjacent-swap-ratio}}

\begin{proof}
Recall $S_t=\{\pi_1,\dots,\pi_t\}$, so the prefix shared by $\pi$ and $\pi'$ before the swapped pair is $S_{i-1}=\{\pi_1,\dots,\pi_{i-1}\}$, and
$$
S_i=S_{i-1}\cup\{a\},\quad S_i'=S_{i-1}\cup\{b\},\quad S_{i+1}=S_{i+1}'=S_{i-1}\cup\{a,b\}.
$$
For any stage $t\notin\{i,i+1\}$, swapping positions $i$ and $i+1$ leaves both $\pi_t$ and $S_t$ unchanged, so the corresponding factor in \eqref{eq:gpasv} is identical for $\pi$ and $\pi'$ and cancels in the ratio $p^{(\lambda,\omega)}(\pi')/p^{(\lambda,\omega)}(\pi)$.
It therefore suffices to compare the contributions at stages $t=i$ and $t=i+1$.

Write the stagewise factor of \eqref{eq:gpasv} at $(x,S)$ as
\begin{equation}
\label{eq:proof-prop1-Ffactor}
F(x;S):=
\frac{\lambda_x \exp\{-\beta\cdot V_\omega(x;S)\}}{\sum_{k\in S}\lambda_k\exp\{-\beta\cdot V_\omega(k;S)\}}
\cdot
\sum_{k\in S}\exp\{-\beta\cdot V_\omega(k;S)\}
=
\frac{\lambda_x\exp\{-\beta\cdot V_\omega(x;S)\}}{\zeta_{\lambda,\omega}(S)},
\end{equation}
where the second equality follows from the definition of $\zeta_{\lambda,\omega}(S)$ in the proposition.
With this notation,
\begin{equation}
\label{eq:proof-prop1-ratio-F}
\frac{p^{(\lambda,\omega)}(\pi')}{p^{(\lambda,\omega)}(\pi)}
=
\frac{F(b;S_i')\,F(a;S_{i+1}')}{F(a;S_i)\,F(b;S_{i+1})}.
\end{equation}
Since $S_{i+1}=S_{i+1}'$, the factor $\zeta_{\lambda,\omega}(S_{i+1})$ appears once in the numerator and once in the denominator of \eqref{eq:proof-prop1-ratio-F} and cancels.
The $\lambda$ factors $\lambda_a$ and $\lambda_b$ likewise appear once in each, and cancel.
Substituting \eqref{eq:proof-prop1-Ffactor} and collecting the surviving terms yields
\begin{equation}
\label{eq:proof-prop1-collected}
\frac{p^{(\lambda,\omega)}(\pi')}{p^{(\lambda,\omega)}(\pi)}
=
\exp\!\Big\{-\beta\big[V_\omega(b;S_i')+V_\omega(a;S_{i+1})-V_\omega(a;S_i)-V_\omega(b;S_{i+1})\big]\Big\}
\cdot
\frac{\zeta_{\lambda,\omega}(S_i)}{\zeta_{\lambda,\omega}(S_i')}.
\end{equation}

It remains to evaluate the bracketed exponent.
Using $V_\omega(x;S)=\sum_{j\in S}\omega_{xj}$ and the convention $\omega_{xx}=0$,
\begin{align*}
V_\omega(b;S_i')&=\sum_{j\in S_{i-1}}\omega_{bj},\quad
V_\omega(a;S_{i+1})=\sum_{j\in S_{i-1}}\omega_{aj}+\omega_{ab},\\
V_\omega(a;S_i)&=\sum_{j\in S_{i-1}}\omega_{aj},\quad
V_\omega(b;S_{i+1})=\sum_{j\in S_{i-1}}\omega_{bj}+\omega_{ba}.
\end{align*}
Hence
$$
V_\omega(b;S_i')+V_\omega(a;S_{i+1})-V_\omega(a;S_i)-V_\omega(b;S_{i+1})
=
\omega_{ab}-\omega_{ba}.
$$
Substituting into \eqref{eq:proof-prop1-collected} gives \eqref{eq:mh-local-ratio-main}.
\end{proof}

\subsection{Proof of Proposition~\ref{prop:scenario1}}
\label{app:proof-scenario1}

\begin{proof}
Throughout we work in the multiplicative representation $\tilde\omega_{ij}=\exp(-\beta\omega_{ij})$ and $\tilde\omega_k^S=\prod_{j\in S}\tilde\omega_{kj}$ of Appendix~\ref{app:representation}, and abbreviate
$$
\mu_{\lambda,\omega}(S):=\sum_{k\in S}\lambda_k\tilde\omega_k^S,
\qquad
s_\omega(S):=\sum_{k\in S}\tilde\omega_k^S.
$$
Under the Scenario~1 total order, $\tilde\omega_{ij}=\tilde\omega_0$ for $i<j$ and $\tilde\omega_{ij}=1$ otherwise (with the convention $\tilde\omega_{ii}=1$), so for every $k\in[n]$ and $S\subseteq[n]$,
\begin{equation}
\label{eq:proof-prop2-omegakS}
\tilde\omega_k^S=\tilde\omega_0^{\,|\{j\in S:\,j>k\}|}.
\end{equation}

By \eqref{eq:app-sum-of-unanimity} and linearity of the ROV,
\begin{equation}
\label{eq:proof-prop2-reduction}
\psi_i(U)
=\sum_{j:\,i\in T_j}c_j\cdot
\mathbb{P}_{\pi\sim p^{(\lambda,\omega)}}\!\big(\,i\text{ is the last member of }T_j\text{ in }\pi\,\big),
\end{equation}
so it suffices to fix one interval $T=T_j=\{\ell,\ldots,r\}$ (writing $\ell=\ell_j$, $r=r_j$) and prove
\begin{equation}
\label{eq:proof-prop2-target}
\mathbb{P}_{\pi\sim p^{(\lambda,\omega)}}\!\big(\,i\text{ is the last of }T\text{ in }\pi\,\big)
=
\frac{\lambda_i\tilde\omega_0^{\,r-i}}
     {\sum_{k=\ell}^{r}\lambda_k\tilde\omega_0^{\,r-k}},
\qquad i\in T.
\end{equation}

We first observe that the GPASV distribution admits a particularly clean form under Scenario~1.
By \eqref{eq:proof-prop2-omegakS}, for any $S\subseteq[n]$ with $|S|=t$,
$$
s_\omega(S)
=\sum_{k\in S}\tilde\omega_0^{\,|\{j\in S:\,j>k\}|}
=\sum_{q=0}^{t-1}\tilde\omega_0^{\,q},
$$
because the exponent $|\{j\in S:\,j>k\}|$ ranges over $\{0,1,\ldots,t-1\}$ as $k$ ranges over $S$, regardless of which elements of $[n]$ populate $S$.
Hence $s_\omega(S_t)$ depends on $\pi$ only through $|S_t|=t$, and $\prod_{t=1}^{n}s_\omega(S_t)$ is a $\pi$-independent constant.
Substituting into \eqref{eq:gpasv} (equivalently \eqref{eq:app-param-gpasv-mult}) yields
\begin{equation}
\label{eq:proof-prop2-backward}
p^{(\lambda,\omega)}(\pi)
\;\propto\;
\prod_{t=1}^{n}\frac{\lambda_{\pi_t}\tilde\omega_{\pi_t}^{S_t}}{\mu_{\lambda,\omega}(S_t)},
\end{equation}
which is exactly the joint law of the following \emph{backward sequential sampler}: set $S_n=[n]$, and for $t=n,n-1,\ldots,1$, draw
\begin{equation}
\label{eq:proof-prop2-backward-step}
\mathbb{P}(\pi_t=k\mid S_t)
=
\frac{\lambda_k\tilde\omega_k^{S_t}}{\mu_{\lambda,\omega}(S_t)},
\qquad k\in S_t,
\end{equation}
then set $S_{t-1}:=S_t\setminus\{\pi_t\}$.

Under this sampler, the last member of $T$ to appear in $\pi$ (in the forward ordering) is the \emph{first} $T$-element drawn in reverse time. Define the reverse-time stopping stage
$$
\tau:=\max\{t\in[n]:\,\pi_t\in T\}.
$$
At stage $\tau$ no $T$-element has yet been drawn, so $T\subseteq S_\tau$, and \eqref{eq:proof-prop2-backward-step} gives
\begin{equation}
\label{eq:proof-prop2-cond}
\mathbb{P}(\pi_\tau=i\mid S_\tau,\,\pi_\tau\in T)
=
\frac{\lambda_i\tilde\omega_i^{S_\tau}}{\sum_{k\in T}\lambda_k\tilde\omega_k^{S_\tau}},
\qquad i\in T.
\end{equation}
By \eqref{eq:proof-prop2-omegakS}, for $k\in T=\{\ell,\ldots,r\}$,
$$
|\{j\in S_\tau:\,j>k\}|
=|\{j\in T:\,j>k\}|+|\{j\in S_\tau\setminus T:\,j>k\}|
=(r-k)+|\{j\in S_\tau\setminus T:\,j>k\}|.
$$
Since $[n]\setminus T=\{1,\ldots,\ell-1\}\cup\{r+1,\ldots,n\}$ and $k\in[\ell,r]$, the set $\{j\in[n]\setminus T:\,j>k\}$ equals $\{r+1,\ldots,n\}$, which is independent of the choice of $k\in T$. Writing
$$
C(S_\tau):=\tilde\omega_0^{\,|S_\tau\cap\{r+1,\ldots,n\}|},
$$
we therefore have $\tilde\omega_k^{S_\tau}=\tilde\omega_0^{\,r-k}\cdot C(S_\tau)$ for every $k\in T$, and the factor $C(S_\tau)$ cancels between numerator and denominator of \eqref{eq:proof-prop2-cond}:
$$
\mathbb{P}(\pi_\tau=i\mid S_\tau,\,\pi_\tau\in T)
=
\frac{\lambda_i\tilde\omega_0^{\,r-i}}{\sum_{k=\ell}^{r}\lambda_k\tilde\omega_0^{\,r-k}}.
$$
The right-hand side does not depend on $S_\tau$, and $\tau$ is well-defined almost surely (some $T$-element is eventually drawn), so marginalization yields \eqref{eq:proof-prop2-target}. Substituting into \eqref{eq:proof-prop2-reduction} completes the proof.
\end{proof}

\subsection{Proof of Proposition~\ref{prop:scenario2}}
\label{app:proof-scenario2}

\begin{proof}
We retain the notation $\tilde\omega_k^S$, $s_\omega(S)$, and $\mu_{\lambda,\omega}(S)$ from the proof of Proposition~\ref{prop:scenario1}.
Specializing \eqref{eq:app-sum-of-unanimity} to $T_j=B_j$ and using linearity of the ROV, for $i\in B_j$,
\begin{equation}
\label{eq:proof-prop3-reduction}
\psi_i(U)
=c_j\cdot\mathbb{P}_{\pi\sim p^{(\lambda,\omega)}}\!\big(\,i\text{ is the last of }B_j\text{ in }\pi\,\big),
\end{equation}
so it suffices to show that
\begin{equation}
\label{eq:proof-prop3-target}
\mathbb{P}_{\pi\sim p^{(\lambda,\omega)}}\!\big(\,i\text{ is the last of }B_j\text{ in }\pi\,\big)
=\frac{1}{|B_j|},\qquad i\in B_j.
\end{equation}

We will prove \eqref{eq:proof-prop3-target} by exhibiting a bijection of $[n]$ that leaves the GPASV distribution invariant and acts as a single cycle on $B_j$.
Let $B_j=\{b_1,b_2,\ldots,b_m\}$ with $m=|B_j|$, labeled so that the within-block directed cycle is $b_1\to b_2\to\cdots\to b_m\to b_1$, and adopt the cyclic convention $b_{m+1}:=b_1$ throughout this proof. Thus $\tilde\omega_{b_s\, b_{s+1}}=\tilde\omega^{\mathrm{cyc}}_j$ for $s=1,\ldots,m$ (the common within-block cycle weight), while every other ordered within-block pair is a non-edge with $\tilde\omega$-value $1$.
Define the bijection $\phi:[n]\to[n]$ by
$$
\phi(b_s)=b_{s+1}\ (s=1,\ldots,m),
\qquad
\phi(x)=x\ \text{for } x\in[n]\setminus B_j.
$$

We first establish the pairwise invariance
\begin{equation}
\label{eq:proof-prop3-pairinv}
\tilde\omega_{\phi(k)\phi(\ell)}=\tilde\omega_{k\ell},
\qquad k,\ell\in[n].
\end{equation}
If $k,\ell\in B_j$, the shift $s\mapsto s+1$ preserves the cycle edge set $\{(b_s,b_{s+1}):s=1,\ldots,m\}$ (using the convention $b_{m+1}=b_1$) and its common weight $\tilde\omega^{\mathrm{cyc}}_j$, so both sides of \eqref{eq:proof-prop3-pairinv} agree (either both equal $\tilde\omega^{\mathrm{cyc}}_j$ or both equal $1$).
If $k\in B_j$ and $\ell\in B_{m'}$ with $B_{m'}\ne B_j$, then by assumption $\tilde\omega_{k\ell}$ is constant in $(k,\ell)\in B_j\times B_{m'}$ (the block-pair common weight if $B_j\to B_{m'}$ is a block edge, otherwise $1$), and $\phi(\ell)=\ell$, hence $\tilde\omega_{\phi(k)\phi(\ell)}=\tilde\omega_{\phi(k)\ell}=\tilde\omega_{k\ell}$; the case $k\notin B_j,\ell\in B_j$ is symmetric, and the case $k,\ell\notin B_j$ is trivial since $\phi$ fixes both arguments.
Moreover $\lambda_{\phi(k)}=\lambda_k$ for all $k\in[n]$, since $\lambda$ is constant on $B_j$ and $\phi$ fixes every element of $[n]\setminus B_j$.

From \eqref{eq:proof-prop3-pairinv} we obtain, for every $k\in[n]$ and $S\subseteq[n]$,
\begin{equation}
\label{eq:proof-prop3-omega-invariance}
\tilde\omega_{\phi(k)}^{\phi(S)}
=\prod_{\ell\in S}\tilde\omega_{\phi(k)\phi(\ell)}
=\prod_{\ell\in S}\tilde\omega_{k\ell}
=\tilde\omega_k^S,
\end{equation}
where the first equality uses the reindexing $\ell\mapsto\phi(\ell)$ of the product over $\phi(S)$.
Consequently $s_\omega(\phi(S))=s_\omega(S)$ and, since $\lambda_{\phi(k)}=\lambda_k$, $\mu_{\lambda,\omega}(\phi(S))=\mu_{\lambda,\omega}(S)$.

For any permutation $\pi=(\pi_1,\ldots,\pi_n)\in\Pi$, write $\phi(\pi):=(\phi(\pi_1),\ldots,\phi(\pi_n))$ for the componentwise action of $\phi$ on the tuple $\pi$; since $\phi$ is a bijection on $[n]$, $\phi(\pi)\in\Pi$ as well. Denote its prefix sets by $\pi':=\phi(\pi)$, so that $\pi'_t=\phi(\pi_t)$ and $S_t^{\pi'}=\phi(S_t^{\pi})$. Combining the three invariances above, each stagewise factor of \eqref{eq:gpasv} is preserved under $\pi\mapsto\pi'$:
$$
\frac{\lambda_{\pi'_t}\tilde\omega_{\pi'_t}^{S_t^{\pi'}}}{\mu_{\lambda,\omega}(S_t^{\pi'})}\cdot s_\omega(S_t^{\pi'})
=
\frac{\lambda_{\pi_t}\tilde\omega_{\pi_t}^{S_t^{\pi}}}{\mu_{\lambda,\omega}(S_t^{\pi})}\cdot s_\omega(S_t^{\pi}).
$$
Taking the product over $t=1,\ldots,n$ yields
\begin{equation}
\label{eq:proof-prop3-dist-invariance}
p^{(\lambda,\omega)}(\phi(\pi))=p^{(\lambda,\omega)}(\pi),\qquad\pi\in\Pi.
\end{equation}

It remains to convert \eqref{eq:proof-prop3-dist-invariance} into uniformity of the last-of-$B_j$ element.
Define $t^\star(\pi):=\max\{t\in[n]:\,\pi_t\in B_j\}$ and $L(\pi):=\pi_{t^\star(\pi)}$.
Because $\phi(B_j)=B_j$, for $\pi'=\phi(\pi)$ we have $\{t:\pi'_t\in B_j\}=\{t:\pi_t\in B_j\}$, so $t^\star(\pi')=t^\star(\pi)$ and
$$
L(\phi(\pi))=\pi'_{t^\star(\pi)}=\phi(\pi_{t^\star(\pi)})=\phi(L(\pi)).
$$
Combining this equivariance with \eqref{eq:proof-prop3-dist-invariance} and bijectivity of $\phi$, for every $i\in B_j$,
$$
\mathbb{P}\!\big(L(\pi)=i\big)
=\mathbb{P}\!\big(L(\phi(\pi))=\phi(i)\big)
=\mathbb{P}\!\big(L(\pi)=\phi(i)\big).
$$
Since $\phi$ acts on $B_j$ as a single $m$-cycle, iterating this identity shows that $\mathbb{P}(L(\pi)=i)$ is constant in $i\in B_j$, so it equals $1/|B_j|$. This proves \eqref{eq:proof-prop3-target}, and substitution into \eqref{eq:proof-prop3-reduction} gives $\psi_i(U)=c_j/|B_j|$ for $i\in B_j$.
\end{proof}

\section{Computational Details for GPASV}
\label{appendix::computation}

Section~\ref{sec:computation-main} outlines the high-level computational pipeline. This appendix collects the detailed derivations and algorithms.

\subsection{Permutation Sampling}

GPASV is defined as an expectation under $p^{(\lambda,\omega)}$, so the principal computational primitive is sampling permutations from this distribution.
We describe an adjacent-swap Metropolis--Hastings sampler with a local acceptance ratio (Section~\ref{sec:adjacent-swap-sampler}), and a greedy initialization that mirrors the GPASV choice factor (Section~\ref{sec:greedy-init}).

\subsubsection{Adjacent-Swap Metropolis--Hastings Sampler}
\label{sec:adjacent-swap-sampler}

GPASV is defined as an expectation with respect to a permutation distribution on the full space $\Pi$.
Accordingly, the main computational task is to sample permutations approximately from $p^{(\lambda,\omega)}$.
Adjacent-swap Metropolis-Hastings (MH) is a canonical local approach for sampling permutations, both in uniform and non-uniform settings \citep{karzanov1991conductance,bubley1999faster,lee2026priority}.
We adopt this proposal because it preserves the permutation structure while allowing the target distribution to be the GPASV distribution on the full space $\Pi$.

Let $p$ be any target distribution on $\Pi$ and let $q(\pi,\pi')$ be a proposal kernel.
The general MH acceptance probability is
\begin{equation}
\label{eq:mh-general-main}
A_{\mathrm{MH}}(\pi,\pi')
:=
\min\left\{
1,\,
\frac{p(\pi')q(\pi',\pi)}{p(\pi)q(\pi,\pi')}
\right\}.
\end{equation}
In our implementation, given the current permutation $\pi=(\pi_1,\dots,\pi_n)$, we choose an index $i\in[n-1]$ uniformly at random and propose $\pi'$ by swapping the adjacent pair $a:=\pi_i$, $b:=\pi_{i+1}$.
This adjacent-swap proposal is symmetric, so $q(\pi,\pi')=q(\pi',\pi)$ and the proposal term in \eqref{eq:mh-general-main} cancels.
Therefore the acceptance probability reduces to
$$
A_{\mathrm{MH}}(\pi,\pi')
=
\min\left\{1,\frac{p^{(\lambda,\omega)}(\pi')}{p^{(\lambda,\omega)}(\pi)}\right\}.
$$

\begin{proposition}[Local Adjacent-Swap Ratio for GPASV]
\label{prop:adjacent-swap-ratio}
Let $\pi'\in\Pi$ be obtained from $\pi\in\Pi$ by swapping the adjacent pair $(a,b)=(\pi_i,\pi_{i+1})$ at positions $(i,i+1)$.
Write
$$
S_i=\{\pi_1,\dots,\pi_{i-1},a\},
\quad
S_i'=\{\pi_1,\dots,\pi_{i-1},b\},
$$
and define, for any nonempty $S\subseteq[n]$,
$$
\zeta_{\lambda,\omega}(S)
:=
\frac{\sum_{k\in S}\lambda_k\exp\{-\beta\cdot V_\omega(k;S)\}}{\sum_{k\in S}\exp\{-\beta\cdot V_\omega(k;S)\}}.
$$
Then
\begin{equation}
\label{eq:mh-local-ratio-main}
\frac{p^{(\lambda,\omega)}(\pi')}{p^{(\lambda,\omega)}(\pi)}
=
\exp\{-\beta(\omega_{ab}-\omega_{ba})\}
\cdot
\frac{\zeta_{\lambda,\omega}(S_i)}{\zeta_{\lambda,\omega}(S_i')}.
\end{equation}
\end{proposition}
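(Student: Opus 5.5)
The plan is to work directly from the stage-wise product form \eqref{eq:gpasv}. First, observe that each stage factor depends only on the pair $(\pi_t,S_t)$. Swapping positions $i$ and $i+1$ leaves every prefix set $S_t$ with $t\notin\{i,i+1\}$ unchanged, together with its entry $\pi_t$. For $t<i$ this holds because the prefix is untouched. For $t\ge i+1$ it holds because $S_{i+1}=S_{i-1}\cup\{a,b\}$ is symmetric in $a,b$, and the later prefixes contain both. Hence all factors cancel in the ratio except those at stages $i$ and $i+1$, and the normalizing constant cancels too.

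Next, I simplify the stage factor. Part 1 times Part 2 in \eqref{eq:gpasv} equals
\[
F(x;S)=\frac{\lambda_x\exp\{-\beta V_\omega(x;S)\}}{\zeta_{\lambda,\omega}(S)},
\]
with $\zeta_{\lambda,\omega}$ as defined in the proposition. The ratio therefore becomes
\[
\frac{F(b;S_i')\,F(a;S_{i+1})}{F(a;S_i)\,F(b;S_{i+1})}.
\]
In this expression the $\lambda_a,\lambda_b$ factors cancel, and so does $\zeta_{\lambda,\omega}(S_{i+1})$, which appears once in the numerator and once in the denominator. The $\zeta$ factors that survive are $\zeta_{\lambda,\omega}(S_i)/\zeta_{\lambda,\omega}(S_i')$. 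They come from the denominators of the stage-$i$ factors, so the orientation needs a careful check: $F(a;S_i)$ sits in the denominator of the ratio, which places $\zeta(S_i)$ in the numerator. That matches the claimed formula.

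Finally, I evaluate the exponent
\[
V_\omega(b;S_i')+V_\omega(a;S_{i+1})-V_\omega(a;S_i)-V_\omega(b;S_{i+1}).
\]
Expand each term using $V_\omega(x;S)=\sum_{j\in S}\omega_{xj}$, with $S_i=S_{i-1}\cup\{a\}$, $S_i'=S_{i-1}\cup\{b\}$, and $\omega_{xx}=0$. The sums over $S_{i-1}$ cancel pairwise, leaving $\omega_{ab}-\omega_{ba}$.

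No real obstacle is expected. The only delicate point is bookkeeping: getting the orientation of the surviving $\zeta$ ratio right, and the sign of $\omega_{ab}-\omega_{ba}$ right, given that $a$ now sits after $b$.
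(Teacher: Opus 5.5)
Your proposal is correct and follows essentially the same route as the paper's proof. Both arguments cancel every stage other than $i$ and $i+1$, rewrite each stage factor as $F(x;S)=\lambda_x\exp\{-\beta V_\omega(x;S)\}/\zeta_{\lambda,\omega}(S)$, cancel $\lambda_a,\lambda_b$ and $\zeta_{\lambda,\omega}(S_{i+1})$, and reduce the exponent to $\omega_{ab}-\omega_{ba}$ using $\omega_{xx}=0$. The only slip is cosmetic: the unchanged-factor claim holds for $t\ge i+2$, since at $t=i+1$ the prefix is unchanged but the entry $\pi_{i+1}$ is not; you already account for this by keeping stage $i+1$ in the ratio.
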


Proposition~\ref{prop:adjacent-swap-ratio} shows that the acceptance ratio depends only on the swapped pair $(a,b)$ and the two competing sets $S_i,S_i'$, not on the entire permutation.
This locality is the key structural reason why adjacent-swap MH remains practical for GPASV.

Algorithm~\ref{alg:gpasv-adjacent-mh} summarizes the resulting sampler.

\begin{algorithm}[t]
    \caption{Adjacent-swap MH for sampling from $p^{(\lambda,\omega)}$}
    \label{alg:gpasv-adjacent-mh}
    \begin{algorithmic}[1]
        \Require Parameters $(\lambda,\omega)$, lazy probability $\xi\in[0,1)$, burn-in $B$, thinning $\tau$, target sample size $N_{\mathrm{MC}}$
        \State Initialize $\pi^{(0)}\in\Pi$ via Algorithm~\ref{alg:gpasv-init}
        \State Initialize the sample set $\mathcal{T}\gets\emptyset$
        \For{$t=1$ to $B+\tau(N_{\mathrm{MC}}-1)+1$}
            \State Set $\pi^{(t)} \gets \pi^{(t-1)}$
            \With{probability $1 - \xi$}
                \State Draw $i$ uniformly from $[n-1]$
                \State Let $a\gets \pi^{(t-1)}_i$ and $b\gets \pi^{(t-1)}_{i+1}$
                \State Form $\pi'$ by swapping the adjacent pair $(a,b)$
                \State Compute the local ratio in \eqref{eq:mh-local-ratio-main}
                \State Accept $\pi'$ with probability $\min\{1,p^{(\lambda,\omega)}(\pi')/p^{(\lambda,\omega)}(\pi^{(t-1)})\}$
                \If{accepted}
                    \State $\pi^{(t)}\gets \pi'$
                \EndIf
            \EndWith
            \If{$t>B$ and $(t-B-1)\bmod\tau=0$}
                \State Append $\pi^{(t)}$ to $\mathcal{T}$
            \EndIf
        \EndFor
        \State \Return $\mathcal{T}$
    \end{algorithmic}
\end{algorithm}

\subsubsection{Greedy Initialization}
\label{sec:greedy-init}

Algorithm~\ref{alg:gpasv-adjacent-mh} is valid for any initial permutation $\pi^{(0)}\in\Pi$, but in practice, initialization can matter because GPASV lives on the full permutation space $\Pi$ while the adjacent-swap proposal is purely local.
This is especially relevant in concentrated regimes, where a poor starting permutation can increase the transient cost before the chain reaches a representative high-probability region.

In a DAG, this issue is comparatively mild.
A zero-violation initialization is easy to obtain, since finding one linear extension is computationally feasible (e.g.\ Kahn's algorithm \citep{kahn1962topological}).
By contrast, on a general directed graph a linear extension may not exist at all; a good starting permutation must instead balance unavoidable edge violations against node-wise priorities.
Rather than solving a separate global optimization problem, we use a stagewise greedy initialization that mirrors the GPASV choice factor itself.

For a current remaining set $R\subseteq [n]$ and a player $i\in R$, the quantity $\exp\{-\beta\cdot V_\omega(i;R)\}=\exp\{-\beta\sum_{j\in R}\omega_{ij}\}$ is exactly the stage-wise edge factor incurred when $i$ is placed at the current rightmost position relative to the other players still in $R$.
Larger values of this factor (equivalently, smaller $V_\omega(i;R)$) correspond to fewer or weaker immediate priority violations, while neutral non-edges ($\omega_{ij}=0$) contribute factor $1$.
The strategy constructs the permutation backward.
At each step it samples a player with probability proportional to the local GPASV choice factor $\lambda_i\exp\{-\beta\cdot V_\omega(i;R)\}$, places that player at the rightmost unfilled position, and removes it from the remaining set.

Algorithm~\ref{alg:gpasv-init} summarizes the resulting initialization rule.

\begin{algorithm}[t]
\caption{Greedy initialization for GPASV}
\label{alg:gpasv-init}
\begin{algorithmic}[1]
\Require Parameters $(\lambda,\omega)$
\State Initialize the remaining set $R\gets [n]$
\For{$t=n$ down to $1$}
    \State For each $i\in R$, compute the weight $w(i;R)\gets \lambda_i\exp\{-\beta\cdot V_\omega(i;R)\}$
    \State Sample $i^\star\in R$ with probability proportional to $w(i;R)$
    \State Set $\pi^{(0)}_t\gets i^\star$
    \State Update $R\gets R\setminus \{i^\star\}$
\EndFor
\State \Return $\pi^{(0)}$
\end{algorithmic}
\end{algorithm}

When $G_\omega$ is a DAG, every maximal node in the induced subgraph on $R$ has $V_\omega(i;R)=0$, whereas every non-maximal node has $V_\omega(i;R)>0$.
Thus Algorithm~\ref{alg:gpasv-init} systematically favors maximal nodes (whose stage-wise factor equals $1$) while still retaining the node-weight term $\lambda_i$ in the local sampling weights.
In the hard-priority limit $\beta\to\infty$, non-maximal nodes receive vanishing relative weight, so the rule collapses to a $\lambda$-weighted randomized linear extension.

\subsection{GPASV Estimators}

We use two estimators for the GPASV target in \eqref{eq:random-order-value-prelim}: a direct permutation-based Monte Carlo estimator and a subset-reweighted surrogate estimator.

\subsubsection{Direct Permutation Monte Carlo}
\label{appendix::subsec::direct-MC}

The most direct estimator follows from the random order value representation in \eqref{eq:random-order-value-prelim}.
A natural choice is to sample permutations $\pi^{(1)},\dots,\pi^{(N_{\mathrm{MC}})}$ from $p^{(\lambda,\omega)}$ and average the corresponding marginal contributions along those permutations.
This mirrors the standard permutation-based estimator for the classical Shapley value, where the order distribution is uniform over $\Pi$ \citep{castro2009polynomial}.
Define the marginal contribution
$$
\Delta_i(\pi;U):=U(\pi^i\cup\{i\})-U(\pi^i).
$$
Then, if $\pi^{(1)},\dots,\pi^{(N_{\mathrm{MC}})}\sim p^{(\lambda,\omega)}$, the direct estimator is
\begin{equation}
\label{eq:perm-mc-main}
\widehat{\psi}_i^{\mathrm{MC}}(U)
:=
\frac{1}{N_{\mathrm{MC}}}\sum_{m=1}^{N_{\mathrm{MC}}} \Delta_i(\pi^{(m)};U).
\end{equation}
Its finite-sample behavior is controlled by the quality and effective size of the sampled permutation pool.

\subsubsection{Surrogate-Assisted Subset Estimator}
\label{sec:two-stage-surrogate}

The second route provides a bridge between GPASV computation and a line of semivalue estimation methods familiar in data valuation and feature attribution.
Semivalues are a class of attribution rules that generalize the Shapley value by relaxing the efficiency (E in Appendix~\ref{app:rov-axioms}) \citep{dubey1981value}: fixing $n$, a semivalue with weights $a_0,\dots,a_{n-1}$ satisfying $a_s\geq 0$ and $\sum_{s=0}^{n-1}\binom{n-1}{s}a_s=1$ is defined by $\varphi_i(U) = \sum_{S\subseteq[n]\setminus\{i\}} a_{|S|}\bigl\{U(S\cup\{i\})-U(S)\bigr\}$.
Unlike a random order value, semivalues average subset-based marginal contributions rather than marginal contributions along sampled permutations.

Existing semivalue estimators differ in their observation design and in the rule used to recover the target from sampled utilities; these include regression-based methods \citep{lundberg2017unified, fumagalli2026polyshap,fumagalli2026odd}, direct weighted-average methods \citep{li2024one,wang2023data}, and surrogate-adjusted residual estimators \citep{witter2025regressionadjusted}.
A recent work \citep{liu2026first} shows that these can be described by a unified framework.
We adapt \cite{liu2026first} and extend its method to the setting where the priority relationship between data points should be considered.

The main obstacle is that GPASV is not a semivalue in general: random order values including GPASV may break symmetry, whereas semivalues need not satisfy efficiency.
Nevertheless, the subset-linear viewpoint remains useful.
Following \citep{li2024one,liu2026first}, we first rewrite a semivalue as a pure weighted sum over subsets:
$$
\varphi_i(U) = \sum_{S\subseteq[n]}\rho_i(S)U(S),
\quad
\rho_i(S)= \mathbbm{1}\{i\in S\}\rho_i^{+}(S) - \mathbbm{1}\{i\notin S\}\rho_i^{-}(S),
$$
with $\rho_i^{+}(S)=a_{|S|-1}$, $\rho_i^{-}(S)=a_{|S|}$.
The signed coefficient $\rho_i(S)$ depends on whether $i$ belongs to $S$, but the positive and negative subset weights $\rho^+_i(S)$ and $\rho^-_i(S)$ are anonymous functions of coalition size, and known in closed form for several semivalues including the (beta) Shapley value \citep{shapley1953value,kwon2021beta} and the (weighted) Banzhaf value \citep{banzhaf1964weighted,li2023robust}.

Random order values admit an analogous subset-based representation, but the positive and negative coefficients depend on the player $i$ and the actual subset $S$.
If a random order value is induced by a distribution $p$ on $\Pi$,
$$
\rho_i^+(S)
=
\mathbb{P}_{\pi\sim p}\bigl(\pi^i=S\setminus\{i\}\bigr),
\quad
\rho_i^-(S)
=
\mathbb{P}_{\pi\sim p}\bigl(\pi^i=S\bigr).
$$

For GPASV, $p=p^{(\lambda,\omega)}$, so these coefficients depend on the player, the actual subset, and the priority structure encoded by $(\lambda,\omega)$; they are not cardinality-based closed-form weights.
To make the semivalue-style construction usable, we first draw and store utility-free permutations $\tilde\pi^{(1)},\dots,\tilde\pi^{(M)}\sim p^{(\lambda,\omega)}$ and estimate these coefficients by
\begin{equation}
\label{eq:rho-hat-def-main}
\hat\rho_i(S)
:=
\frac{1}{M}\sum_{r=1}^{M}
\Bigl[
\mathbbm{1}\{i\in S\}\mathbbm{1}\{(\tilde\pi^{(r)})^i=S\setminus\{i\}\}
-
\mathbbm{1}\{i\notin S\}\mathbbm{1}\{(\tilde\pi^{(r)})^i=S\}
\Bigr].
\end{equation}

To build a shared proposal over subsets, we use
$$
\hat A(S):=\sum_{i=1}^n |\hat\rho_i(S)|,
\quad
\widehat{\mathcal S}_\rho:=\{S\subseteq[n]:\hat A(S)>0\},
\quad
\hat q(S):=
\frac{\hat A(S)}{\sum_{T\in\widehat{\mathcal S}_\rho}\hat A(T)},
\quad S\in\widehat{\mathcal S}_\rho.
$$
This allocates more mass to subsets that carry larger aggregate coefficient magnitude across players.
Assume the proposal $\hat q$ satisfies $\hat q(S)>0$ whenever $\rho_i(S)\neq 0$ for some $i$. With $\gamma_i(S):=\rho_i(S)/\hat q(S)$ and $\hat{\gamma}_i(S):=\hat{\rho}_i(S)/\hat q(S)$, GPASV admits the rewritten form
$$
\psi_i(U)
=
\mathbb{E}_{S\sim \hat q}\left[\gamma_i(S)U(S)\right].
$$

Since random order values still obey linearity, for any fixed surrogate $h$, $\psi_i(U)=\psi_i(h)+\psi_i(U-h)$.
Hence, conditional on a fixed proposal $\hat q$ with valid support and using exact $\rho_i$, the estimator
$$
\psi_i(h)+\frac{1}{K_{\mathrm{adjust}}} \sum_{k=1}^{K_{\mathrm{adjust}}} \gamma_i(S^{(k)})\bigl(U(S^{(k)})-h(S^{(k)})\bigr),
\quad S^{(k)}\sim \hat q,
$$
is unbiased for $\psi_i(U)$, where $K_{\mathrm{adjust}}$ is the number of residual-correction subsets.
In practice we use the plug-in form below with $\hat\rho_i$:
\begin{equation}
\label{eq:two-stage-surrogate-main}
\widehat{\psi}_i^{\mathrm{2stage}}
:=
\widehat{\psi}_i(\hat h)
+
\frac{1}{K_{\mathrm{adjust}}}\sum_{k=1}^{K_{\mathrm{adjust}}}
\hat{\gamma}_i(S^{(k)})
\Big(U(S^{(k)})-\hat h(S^{(k)})\Big),
\quad S^{(k)}\sim \hat q.
\end{equation}

It remains to specify how $\hat h$ is obtained.
We fit the surrogate on training subsets sampled independently from $\hat q$, separately from the subsets later used for residual correction.
Given $K_{\mathrm{train}}$ evaluated training subsets $T^{(1)},\dots,T^{(K_{\mathrm{train}})}\sim \hat{q}$ and a surrogate class $\mathcal H$, we adapt the weighted least squares procedure in \citep{liu2026first} to the GPASV setting by defining
$$
\hat h\in
\argmin_{h\in\mathcal H}
\sum_{\ell=1}^{K_{\mathrm{train}}}
\hat W(T^{(\ell)})
\left\{U(T^{(\ell)})-h(T^{(\ell)})\right\}^2,
\quad
\hat W(S):=
\sum_{i=1}^n
\frac{\hat\rho_i(S)^2}{\hat q(S)^2}.
$$
The weight $\hat W(S)$ emphasizes subsets that contribute more strongly to the vector of GPASV coordinates.

For a linear surrogate \citep{lundberg2017unified} with fitted form $\hat h_{\mathrm{lin}}(S)= U(\emptyset)+\sum_{k=1}^n \hat a_k\,\mathbbm{1}\{k\in S\}$, we have $\widehat{\psi}_i(\hat h_{\mathrm{lin}})=\hat a_i$.
For a quadratic surrogate \citep{fumagalli2026polyshap} with selected interaction set $\mathcal I\subseteq \{\{k,\ell\}:1\le k<\ell\le n\}$,
$$
\hat h_{\mathrm{quad}}(S)
=
U(\emptyset)+\sum_{k=1}^n \hat a_k\,\mathbbm{1}\{k\in S\}
+\sum_{\{k,\ell\}\in\mathcal I}\hat b_{k\ell}\,\mathbbm{1}\{k,\ell\in S\},
$$
where $\hat b_{k\ell}=\hat b_{\ell k}$ and $\hat b_{ij}=0$ if $\{i,j\}\notin\mathcal I$.
For brevity in the calculations below, let $\pi^{-1}_i\in[n]$ denote the position of player $i$ in $\pi$, i.e., $\pi^{-1}_i=t$ iff $\pi_t=i$; the prefix set $\pi^i$ from \eqref{eq:random-order-value-prelim} can then be written as $\pi^i=\{j\in[n]:\pi^{-1}_j<\pi^{-1}_i\}$.
With this notation, the one-step marginal term satisfies
$\Delta_i(\pi;\hat h_{\mathrm{quad}}) = \hat a_i + \sum_{j\neq i}\hat b_{ij}\,\mathbbm{1}\{\pi^{-1}_j<\pi^{-1}_i\}$.
Taking expectation over $\pi\sim p^{(\lambda,\omega)}$ yields
$$
\widehat{\psi}_i(\hat h_{\mathrm{quad}})
=
\hat a_i+\sum_{j\neq i}\eta_{ji}\hat b_{ij},
\quad
\eta_{ji}:=\mathbb{P}_{\pi\sim p^{(\lambda,\omega)}}\bigl(\pi^{-1}_j<\pi^{-1}_i\bigr).
$$
In practice these pairwise probabilities are estimated from previously stored utility-free permutations:
$$
\hat\eta_{ji}
:=
\frac{1}{M}\sum_{r=1}^{M}
\mathbbm{1}\left\{(\tilde\pi^{(r)})^{-1}_j<(\tilde\pi^{(r)})^{-1}_i\right\}.
$$

We use a fixed training/correction split for simplicity; optimizing this allocation for the two-stage estimator is orthogonal to GPASV and left to future work.

\subsection{Acceleration via Computational Reuse}
\label{app:acceleration}

The sampler above addresses the distributional side of GPASV, but in many applications the dominant cost is not the MH move itself.
Rather, it is often the repeated evaluation of the cooperative game.
This imbalance is well known in the broader Shapley literature: in feature attribution, utility evaluation may require many model inferences \citep{lundberg2017unified}; in data valuation, it may require repeated model retraining \citep{ghorbani2019data,wang2023data}.
This motivates two complementary acceleration mechanisms: reusing utility evaluations on repeated subsets, and reusing permutations across nearby target distributions.

\subsubsection{Utility Reuse Across Repeated Subsets}
\label{sec:utility-reuse}

The utility-caching mechanism described here applies to both estimators in Sections \ref{appendix::subsec::direct-MC} and \ref{sec:two-stage-surrogate}.
Given sampled permutations $\pi^{(1)},\dots,\pi^{(N_{\mathrm{MC}})}$, estimating GPASV requires the marginal contributions $\Delta_i(\pi^{(m)};U)$ from \eqref{eq:perm-mc-main}:
$$
\Delta_i(\pi^{(m)};U),
\quad
i\in[n],\ m\in[N_{\mathrm{MC}}],
$$
Each term is determined by a subset induced by the relative order in $\pi^{(m)}$.
Because adjacent-swap proposals modify permutations only locally, and because many sampled permutations share long initial segments, the same subset often appears repeatedly across MCMC iterations and across players.

We therefore maintain a cache of utility evaluations indexed by the subset itself.
Whenever a subset $S$ is encountered for the first time, we evaluate $U(S)$ once and store the result; every later appearance of the same subset reuses the cached value.
The same cache applies whether $S$ arises as a permutation prefix or as a subset sampled from $\hat q$ in the surrogate estimator.
This does not change the estimator at all, but it can reduce the number of expensive utility calls by a large factor.

The practical consequence is that the total runtime separates naturally into two parts:
$$
\text{runtime}
\approx
(\#\text{ unique subsets})\times(\text{cost per utility evaluation})
\;+\;
(\text{sampling overhead}).
$$
This decomposition is particularly important in applications such as LLM evaluation, where utility computation can be much more expensive than sampling permutations.

\subsubsection{SNIS for Permutation Reuse Across Priority Sweeps}
\label{sec:snis-reuse}

The second acceleration mechanism is useful when the target distribution changes but the underlying attribution problem remains fixed.
A canonical case is priority sweeping.
Section~\ref{subsec::method::sweep} sweeps a single $\lambda_i$ over $(0,\infty)$ to probe the sensitivity of one player's valuation.
A useful complementary view, when the relative scales among players are themselves meaningful, is to fix a \emph{latent} node priority $\widetilde\lambda$ (with $\widetilde\lambda_i\geq 0$) and control its overall strength through a scalar temperature $\alpha\geq 0$ via
\begin{equation}
\label{eq:snis-reparam}
\lambda_i:=\exp\{-\alpha\widetilde\lambda_i\}.
\end{equation}
$\alpha$ is its $\lambda$-side analogue of $\beta$: $\alpha=0$ removes the soft-priority contrast (all $\lambda_i\equiv 1$), while $\alpha\to\infty$ pushes players with larger $\widetilde\lambda_i$ progressively earlier in $\pi$.
Substituting \eqref{eq:snis-reparam} into \eqref{eq:gpasv} yields a two-parameter family
$$
p_{\alpha,\beta}^{(\lambda,\omega)}(\pi):=p^{(\lambda,\omega)}(\pi)\Big|_{\lambda_i=\exp\{-\alpha\widetilde\lambda_i\}},
$$
and a sweep traces a continuous curve in $(\alpha,\beta)$-space rather than along a single coordinate.
This is the parameterization used, e.g., in our LLM application (Section~\ref{sec:application}).

Within such a sweep, instead of redrawing a full Monte Carlo sample at every new $(\alpha,\beta)$, we may reuse previously sampled permutations through importance weighting.
Let
$$
p(\pi):=p_{\alpha,\beta}^{(\lambda,\omega)}(\pi),
\quad
p'(\pi):=p_{\alpha',\beta'}^{(\lambda,\omega)}(\pi),
$$
and let $\widetilde p$ and $\widetilde p'$ denote the corresponding unnormalized PMFs.
Equivalently, we may write
$$
p(\pi)=\frac{\widetilde p(\pi)}{Z_p},
\quad
p'(\pi)=\frac{\widetilde p'(\pi)}{Z_{p'}},
$$
where the normalizing constants $Z_p$ and $Z_{p'}$ are intractable in general.
Thus, although evaluating the ratio $\widetilde p'(\pi)/\widetilde p(\pi)$ is straightforward, directly using the ordinary importance-sampling correction would still require the unknown ratio $Z_p/Z_{p'}$.
Self-normalized importance sampling avoids this obstacle by estimating that ratio from the same weighted sample \citep{hesterberg1995weighted}.
If $\pi^{(1)},\dots,\pi^{(N)}\sim p$, define the importance weights
$$
w^{(m)}:=\frac{\widetilde p'(\pi^{(m)})}{\widetilde p(\pi^{(m)})},
\quad m\in[N].
$$
Before introducing reuse, applying the direct estimator \eqref{eq:perm-mc-main} under the target distribution $p'$ would require fresh samples $\pi^{(1)},\dots,\pi^{(N)}\sim p'$.
With the importance weights above, however, the same pool $\pi^{(1)},\dots,\pi^{(N)}\sim p$ delivers a valid self-normalized estimator of GPASV at the new target $p'$:
\begin{equation}
\label{eq:snis-gpasv}
\widehat{\psi}^{\mathrm{SNIS}}_i(U)
:=
\frac{
\sum_{m=1}^N
w^{(m)}\Delta_i(\pi^{(m)};U)
}{
\sum_{m=1}^N w^{(m)}
},
\quad i\in[n].
\end{equation}
This justifies permutation reuse across nearby temperature settings.
As an aside, computing only the numerator of \eqref{eq:snis-gpasv} and then rescaling the resulting vector so that its coordinates sum to $U([n])-U(\emptyset)$ recovers exactly \eqref{eq:snis-gpasv} itself; this follows immediately from the efficiency axiom.

To monitor the quality of this reuse, we use the usual effective sample size \citep{kong1994sequential}, defined as
\begin{equation}
\label{eq:ess-main}
\mathrm{ESS}
:=
\frac{\left(\sum_{m=1}^N w^{(m)}\right)^2}{\sum_{m=1}^N (w^{(m)})^2}.
\end{equation}
For example, suppose that one first samples permutations at $(\alpha,\beta)=(0,0)$ and then moves to a nearby target such as $(1,0)$.
Once \eqref{eq:ess-main} is computed and the desired Monte Carlo budget at the new target is $N$, one may draw only
$$
N_{\mathrm{new}}:=\max\{0,\,N-\lfloor \mathrm{ESS}\rfloor\}
$$
fresh permutations $\widetilde{\pi}^{(1)},\dots,\widetilde{\pi}^{(N_{\mathrm{new}})}\sim p'$.
One may then combine them with the reused sample through the hybrid estimate below:
$$
\widehat{\psi}^{\mathrm{hyb}}_i(U)
:=
\frac{\mathrm{ESS}}{\mathrm{ESS}+N_{\mathrm{new}}}
\frac{\sum_{m=1}^N w^{(m)}\Delta_i(\pi^{(m)};U)}{\sum_{m=1}^N w^{(m)}}
+
\frac{N_{\mathrm{new}}}{\mathrm{ESS}+N_{\mathrm{new}}}
\frac{1}{N_{\mathrm{new}}}\sum_{r=1}^{N_{\mathrm{new}}}\Delta_i(\widetilde{\pi}^{(r)};U).
$$
When $N_{\mathrm{new}}=0$, the second term is omitted.
This hybrid estimate interprets the reused sample as contributing roughly $\mathrm{ESS}$ effective draws and fills the remaining budget with fresh target samples.
In this way, ESS serves as an operational measure of how much of the next budget can be inherited from the previous point in the sweep.

\section{Additional Details and Results for Empirical Studies in Section~\ref{sec:simulation}}
\label{app:simulation}

This appendix collects the implementation details and full empirical results
of the three simulation studies in Section~\ref{sec:simulation}.
All experiments in this appendix are CPU-only and were run on the Unity high-performance computing cluster, provided by the
College of Arts and Sciences at The Ohio State University across multiple Intel Xeon CPUs;
the runtime-reporting experiments were pinned to
Intel Xeon Platinum 8468 CPU with 128 GB RAM to keep wall-clock times comparable across configurations.

\subsection{Simulation 1: Mixing Behavior of MCMC}
\label{app:sim1-mixing}

Sampling $\pi\sim p^{(\lambda,\omega)}$ from \eqref{eq:gpasv} relies on an adjacent-swap Metropolis--Hastings chain (Algorithm~\ref{alg:gpasv-adjacent-mh}), whose mixing time is not known in closed form.
We provide a practical way to declare the chain mixed, describe the setup used for the grid of graphs in Section~\ref{sec:simulation}, explain the exact pairwise-order target against which the chain is compared, and report the full mixing-time grid that the main text only shows for one representative panel.

Throughout Section~\ref{app:sim1-mixing} we work in the multiplicative representation of GPASV introduced in Appendix~\ref{app:representation}, which absorbs the temperature $\beta$ into the edge weights and makes the weak/strong-priority regimes easier to parametrize.

\subsubsection{Mixing Diagnostic and Threshold Protocol}

In the classical theory of Markov chains, the mixing time is defined as the first $t$ at which the total variation between the chain's distribution at step $t$ and its stationary distribution falls below a target threshold; once that is met, any functional of the chain is guaranteed to be close to its stationary expectation up to a uniform constant.
On the permutation space $\Pi$, however, directly tracking this total variation is combinatorially out of reach: even representing the chain's current distribution over $\Pi$ already requires $O(n!)$ numbers, and the stationary distribution $p^{(\lambda,\omega)}$ in \eqref{eq:gpasv} does not admit a closed-form normalizer.
We therefore follow \citet{talvitie2017mixing} and diagnose mixing empirically through pairwise-order probabilities and compare the resulting practical mixing times against reference growth rates from the literature.

Let $\widehat P_t^{(\mathrm{rand})}(i\prec j)$ and $\widehat P_t^{(\mathrm{greedy})}(i\prec j)$ denote the empirical probability that player $i$ appears before player $j$ after $t$ proposal steps under random and greedy initialization, respectively.
For each initialization scheme, we track the worst-case pairwise-order deviation from the stationary target $P^\star(i\prec j):=\mathbb{P}_{\pi\sim p^{(\lambda,\omega)}}(\text{$i$ appears before $j$})$:
\begin{equation}
\label{eq:app-mixing-dt}
D_t^{(\mathrm{init})}
:=
\max_{i,j\in[n], i\neq j}
\bigl|\widehat P_t^{(\mathrm{init})}(i\prec j)-P^\star(i\prec j)\bigr|,
\quad\mathrm{init}\in\{\mathrm{rand},\mathrm{greedy}\}.
\end{equation}
The quantity $D_t^{(\mathrm{init})}$ is the primary mixing diagnostic, and we report the first $t$ at which $D_t^{(\mathrm{init})}$ falls below the target threshold $\epsilon=1/4$ as the practical mixing time.
Comparing random against greedy initialization isolates how much of the practical speedup is attributable to initialization alone.

In practice, $D_t^{(\mathrm{init})}$ is evaluated at doubling checkpoints $t=1,2,4,\ldots$ up to a horizon $T(n):=\lceil n^3\log n\rceil$, and the first crossing is localized by binary search between consecutive checkpoints.
A guard band $\epsilon_0=0.02$ prevents Monte Carlo noise from producing spurious crossings.
That is, a crossing is certified only once $D_t^{(\mathrm{init})}\le\epsilon-\epsilon_0$, while values in $[\epsilon-\epsilon_0,\epsilon+\epsilon_0]$ are handled conservatively during the binary search.
If no crossing is observed before $T(n)$, the replicate is recorded as not mixed within the simulated horizon.
The greedy scheme follows Algorithm~\ref{alg:gpasv-init}, and the random scheme draws the starting permutation uniformly from $\Pi$.

\subsubsection{Experimental Setup}

Experiments cover $n\in\{4,6,8,\ldots,22\}$, which is the range where the ground-truth $P^\star$ in \eqref{eq:app-mixing-dt} can be computed exactly by the dynamic program described below; beyond that, exact pairwise probabilities are no longer tractable.

Graphs are drawn from two families.
In the \emph{DAG family}, each ordered pair $(i,j)$ with $i<j$ is included as an edge independently with probability $p_{\mathrm{edge}}$; the resulting graph is automatically acyclic.
In the \emph{general directed-graph family}, each ordered pair $(i,j)$ with $i\ne j$ is included independently with probability $p_{\mathrm{edge}}$, so the graph may contain cycles.
We run $p_{\mathrm{edge}}\in\{0.2,0.8,1.0\}$ to cover sparse, dense, and saturated regimes.

Node weights are sampled as $\lambda_i\stackrel{\mathrm{iid}}{\sim}\mathrm{Unif}(1,U_\lambda)$ with $U_\lambda\in\{1,100\}$; $U_\lambda=1$ gives a homogeneous regime $\lambda_i\equiv 1$, and $U_\lambda=100$ gives a heterogeneous regime.
On present edges, the multiplicative edge weight $\tilde\omega_{ij}$ is sampled as $\tilde\omega_{ij}\stackrel{\mathrm{iid}}{\sim}\mathrm{Unif}(L_{\tilde\omega},U_{\tilde\omega})$ with $(L_{\tilde\omega},U_{\tilde\omega})\in\{(0,0.5),(0.5,1)\}$, corresponding to a strong and a weak pairwise-priority regime respectively (recall $\tilde\omega_{ij}\to 0$ is the hard limit and $\tilde\omega_{ij}=1$ is no priority).
Non-edge pairs are assigned $\tilde\omega_{ij}=1$, so that they contribute a neutral factor to the stage-wise form \eqref{eq:gpasv-gscf-instantiated}.

For each parameter setting, we run $1000$ independent adjacent-swap MH chains in Algorithm~\ref{alg:gpasv-adjacent-mh} per initialization scheme, with lazy probability $1/2$ to remove periodicity of the adjacent-swap proposal.
Each setting is replicated over five graph draws and, within each graph draw, five initialization draws; the saturated graph ($p_{\mathrm{edge}}=1$) is deterministic, so only one graph draw is used there.
Within each initialization replication, the $1000$ chains share the starting permutation but use independent MCMC seeds.

\subsubsection{Exact Pairwise-Order Ground Truth}

\begin{algorithm}[t]
\caption{Dynamic programming computation of $P^\star(i\prec j)$ under GPASV.}
\label{alg:app-exact-pairwise-dp}
\begin{algorithmic}[1]
\Require Node weights $\lambda$, multiplicative pairwise weights $\tilde\omega$
\ForAll{nonempty $S\subseteq[n]$}
    \ForAll{$i\in S$}
        \State $\tilde\omega_i^S\gets \prod_{r\in S\setminus\{i\}}\tilde\omega_{ir}$
    \EndFor
    \ForAll{$i\in S$}
        \State $\ell_{\lambda,\tilde\omega}(i;S)\gets
        \lambda_i\,\tilde\omega_i^S\left(\sum_{k\in S}\tilde\omega_k^S\right)\!\big/\!\left(\sum_{k\in S}\lambda_k\,\tilde\omega_k^S\right)$
    \EndFor
\EndFor
\State $A(\emptyset)\gets 1$
\For{$s=1$ to $n$}
    \ForAll{$S\subseteq[n]$ with $|S|=s$}
        \State $A(S)\gets \sum_{i\in S} A(S\setminus\{i\})\,\ell_{\lambda,\tilde\omega}(i;S)$
    \EndFor
\EndFor
\State $Z\gets A([n])$ and $B([n])\gets 1$
\For{$s=n-1$ down to $0$}
    \ForAll{$S\subseteq[n]$ with $|S|=s$}
        \State $B(S)\gets \sum_{k\notin S}\ell_{\lambda,\tilde\omega}(k;S\cup\{k\})\,B(S\cup\{k\})$
    \EndFor
\EndFor
\State Initialize $P^\star(i\prec j)\gets 0$ for all $i,j$
\ForAll{$j\in[n]$}
    \ForAll{$S\subseteq[n]\setminus\{j\}$}
        \State $C\gets A(S)\,\ell_{\lambda,\tilde\omega}(j;S\cup\{j\})\,B(S\cup\{j\})/Z$
        \ForAll{$i\in S$}
            \State $P^\star(i\prec j)\gets P^\star(i\prec j)+C$
        \EndFor
    \EndFor
\EndFor
\State Set the diagonal to zero and rescale complementary pairs to sum to one
\State \Return $P^\star$
\end{algorithmic}
\end{algorithm}

The diagnostic \eqref{eq:app-mixing-dt} requires the stationary pairwise-order probabilities $P^\star(i\prec j)$ under $p^{(\lambda,\omega)}$.
Computing all $n(n-1)$ pairwise probabilities by enumeration over $\Pi$ costs $O(n^2 n!)$ time, which is already infeasible at $n=22$.
We instead adapt the subset dynamic programming (DP) of \citet{kangas2016counting}, originally designed for the uniform distribution on linear extensions of a DAG, to the non-uniform GPASV distribution on the full $\Pi$.

Working in the multiplicative representation from Appendix~\ref{app:representation}, the GPASV mass function can be written as
$$
p^{(\lambda,\omega)}(\pi)=\frac{\widetilde p(\pi)}{Z},
\quad
\widetilde p(\pi)=\prod_{t=1}^n \ell_{\lambda,\tilde\omega}(\pi_t;S_t(\pi)),
$$
where the stage-wise factor at step $t$ is, reading off \eqref{eq:app-param-gpasv-mult},
$$
\ell_{\lambda,\tilde\omega}(i;S)
:=\frac{\lambda_i\,\tilde\omega_i^S}{\sum_{k\in S}\lambda_k\,\tilde\omega_k^S}\cdot\sum_{k\in S}\tilde\omega_k^S,
\quad i\in S,
\quad \tilde\omega_k^S:=\prod_{r\in S\setminus\{k\}}\tilde\omega_{kr}.
$$
Intuitively, $\ell_{\lambda,\tilde\omega}(i;S)$ is the contribution of placing $i$ last in prefix $S$: the first quotient is the conditional probability of that step, and the sum $\sum_k\tilde\omega_k^S$ is the state rescaling.

Let $A(S)$ be the total unnormalized mass over all partial orders whose prefix is $S$, built up from shorter prefixes by inductively placing one more element last:
$$
A(\emptyset)=1,
\quad
A(S)=\sum_{i\in S} A(S\setminus\{i\})\,\ell_{\lambda,\tilde\omega}(i;S).
$$
Then $Z=A([n])$.
Similarly, let $B(S)$ be the total unnormalized mass of all completions from prefix $S$ to the full set:
$$
B([n])=1,
\quad
B(S)=\sum_{k\notin S}\ell_{\lambda,\tilde\omega}(k;S\cup\{k\})\,B(S\cup\{k\}).
$$
Combining the prefix mass, the step that inserts $j$ at the end, and the continuation mass gives, for $i\ne j$,
$$
P^\star(i\prec j)
=\frac{1}{Z}\sum_{\substack{S\subseteq[n]\setminus\{j\}\\ i\in S}}
A(S)\,\ell_{\lambda,\tilde\omega}(j;S\cup\{j\})\,B(S\cup\{j\}).
$$
Each summand is the unnormalized mass of permutations whose prefix immediately before $j$'s insertion is $S$; the condition $i\in S$ picks out the permutations in which $i$ is earlier than $j$.
After the DP, we set $P^\star(i\prec i)=0$ and, for numerical stability at larger $n$, rescale each complementary pair $\{P^\star(i\prec j), P^\star(j\prec i)\}$ to sum to one.

The whole procedure runs in $O(n^2\,2^n)$ time, replacing the $O(n^2 n!)$ enumeration cost.
This is what makes exact ground-truth computation tractable up to $n=24$.
Algorithm~\ref{alg:app-exact-pairwise-dp} summarizes the resulting DP procedure.

\subsubsection{Two Special Regimes and Reference Rates}

Two limiting regimes give useful reference points for reading the mixing plot.
Both correspond to the hard-priority limit $\tilde\omega_{ij}\to 0$ on every present edge, which by Theorem~\ref{thm:hard-limit} concentrates $p^{(\lambda,\omega)}$ on $\widetilde\Pi^{G_\omega}$.

When $G_\omega$ is a DAG and $\lambda_i\equiv 1$, $\widetilde\Pi^{G_\omega}=\Pi^{G_\omega}$ and the limit is uniform on the linear extensions of $G_\omega$, which is exactly the target distribution of PSV.
When $G_\omega$ is the saturated directed graph ($p_{\mathrm{edge}}=1$) with $\lambda_i\equiv 1$, every $\pi$ incurs the same total raw violation, so the limit becomes uniform on the full $\Pi$, which is the target of the standard Shapley value.
These two regimes bound the sampling difficulty within our grid and motivate the reference growth lines $n^2$ and $(4/\pi^2)\,n^3\log n$ that we overlay in the mixing figures; the former is the practical mixing rate reported by \citet{talvitie2017mixing}, and the latter is the known best theoretical rate \cite{bubley1999faster,wilson2004mixing}.

\subsubsection{Full Results}

Figure~\ref{fig:app-simulation-mixing} reports practical mixing times across the full grid described above.
Panels separate the DAG and general directed graph families, the density $p_{\mathrm{edge}}$, the node heterogeneity $U_\lambda$, and the weak/strong edge-priority regime.
Greedy initialization tracks or beats random initialization throughout the grid, and the gap is largest in dense and saturated regimes, where random initialization often fails to cross the threshold within the $T(n)$ horizon.
The greedy-initialized curves remain close to the $n^2$ reference in saturated regimes and close to the $n^3\log n$ reference in sparse DAG regimes, matching the two limiting targets discussed above.

\begin{figure}[h!]
\centering
\includegraphics[width=\textwidth]{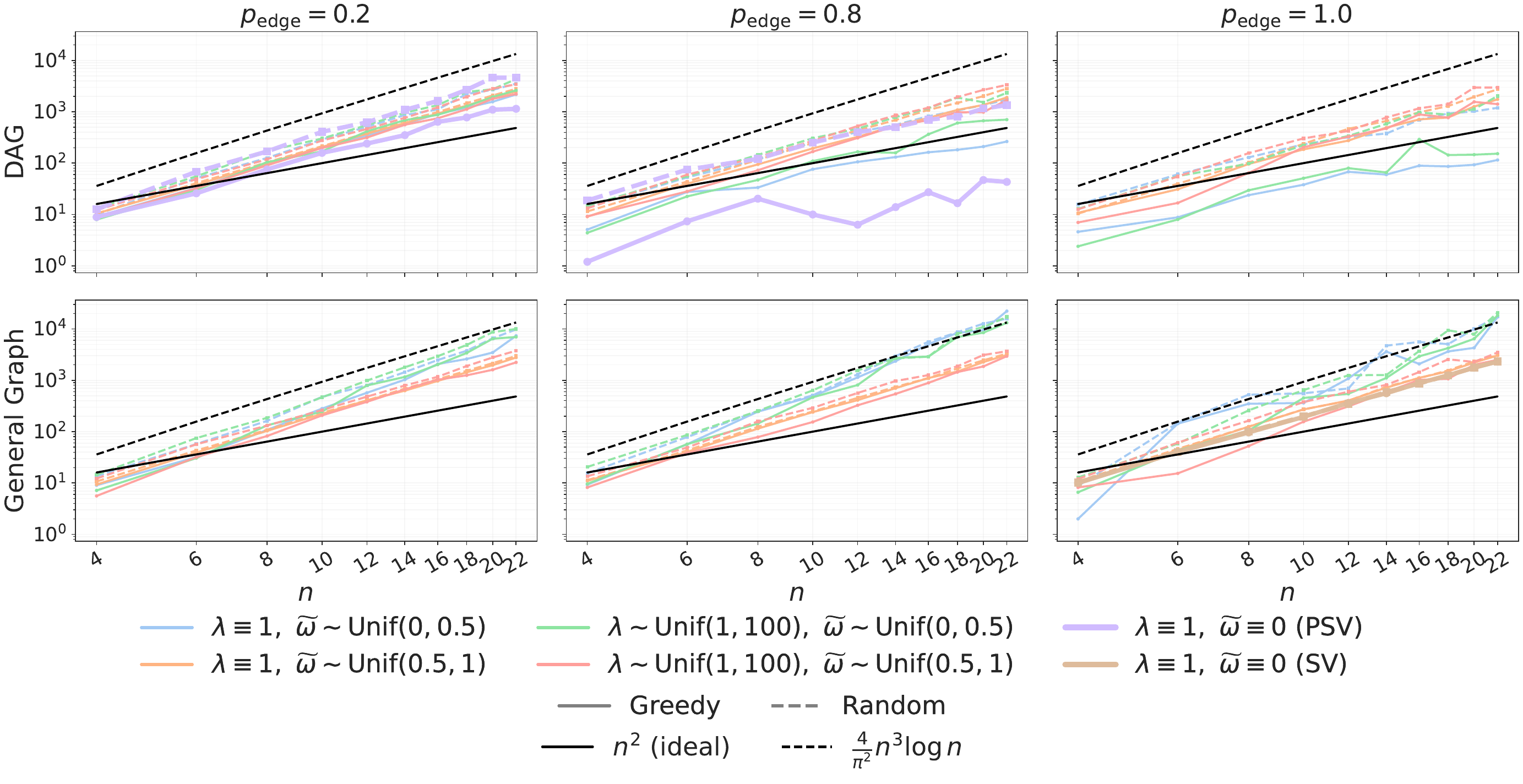}
\caption{Practical mixing-time diagnostic for the adjacent-swap MH chain on GPASV.
Solid curves use greedy initialization (Algorithm~\ref{alg:gpasv-init}); dashed curves use uniform random initialization.
Both axes are in log-scale, so the slope of each curve corresponds to the empirical growth rate of the mixing time.
Panels vary graph family (DAG vs.\ general directed), density $p_{\mathrm{edge}}$.
Reference lines show the $n^2$ (practical mixing time reported by \citet{talvitie2017mixing}) and $(4/\pi^2)n^3\log n$ (best known theoretical mixing time reported by \citet{bubley1999faster,wilson2004mixing}) growth rates.}
\label{fig:app-simulation-mixing}
\end{figure}

\subsection{Simulation 2: Monte Carlo Accuracy and Estimator Comparison}
\label{app:sim2-accuracy}

Once the adjacent-swap MH chain is mixed, the remaining source of error is Monte Carlo noise in the ROV expectation \eqref{eq:random-order-value-prelim}.
This subsection isolates that error by running the direct permutation estimator and the surrogate-assisted estimator from Section~\ref{sec:computation-main} on two synthetic families for which the exact GPASV value $\psi^\star$ is known in closed form.
We also compare the three estimators, including surrogate-assisted estimators introduced in Appendix~\ref{sec:two-stage-surrogate} under matched utility-evaluation budgets.

As in Section~\ref{app:sim1-mixing}, we work in the multiplicative representation of Appendix~\ref{app:representation}: the edge weight $\tilde\omega_{ij}=\exp(-\beta\omega_{ij})\in(0,1]$, with $\tilde\omega_{ij}=1$ corresponding to no pairwise priority, $\tilde\omega_{ij}\to 0$ to the hard-priority limit, and non-edge pairs set to $\tilde\omega_{ij}=1$.

\subsubsection{Closed-Form Targets for Benchmark Scenarios}

Both benchmarks belong to a common \emph{sum-of-unanimity} family:
\begin{equation}
\label{eq:app-sum-of-unanimity}
U(S)=\sum_{j=1}^{d} c_j\,\mathbbm{1}\{T_j\subseteq S\},
\quad T_j\subseteq[n],
\quad c_j\ge 0.
\end{equation}
Under \eqref{eq:app-sum-of-unanimity}, the ROV marginal contribution of player $i\in T_j$ to the $j$-th term is $c_j$ exactly when $i$ is the \emph{last} member of $T_j$ to arrive in $\pi$, and zero otherwise.
By linearity, the ROV expectation \eqref{eq:random-order-value-prelim} therefore reduces to
$$
\psi_i(U)
=\sum_{j:\,i\in T_j} c_j\cdot
\mathbb{P}_{\pi\sim p^{(\lambda,\omega)}}\bigl(\,i \text{ is the last member of } T_j \text{ in }\pi\,\bigr),
$$
so the two scenarios differ only in how the graph and the subset family $\{T_j\}$ are chosen to make this last-arrival probability tractable.

\paragraph{Scenario 1: line DAG with contiguous-interval utilities.}
Players are totally ordered $1\prec 2\prec\cdots\prec n$, and every precedence edge $i<j$ shares a common multiplicative weight $\tilde\omega_{ij}\equiv\tilde\omega_0\in(0,1]$; non-edge pairs (i.e.\ $i>j$) take $\tilde\omega_{ij}\equiv 1$.
The subsets $T_j$ in \eqref{eq:app-sum-of-unanimity} are contiguous intervals $T_j=\{\ell_j,\ell_j+1,\ldots,r_j\}$ of $[n]$.
Under this structure, the last-arrival probability within each interval reduces to a $\lambda$-weighted softmax in which the total-order weight $\tilde\omega_0^{\,r_j-i}$ discounts position $i\in T_j$.

\begin{proposition}[Closed form on Scenario 1]
\label{prop:scenario1}
Let the players be arranged in the total order $1\prec 2\prec\cdots\prec n$ with $\tilde\omega_{ij}\equiv\tilde\omega_0\in(0,1]$ for $i<j$ and $\tilde\omega_{ij}\equiv 1$ on non-edge pairs.
Consider a sum-of-unanimity utility \eqref{eq:app-sum-of-unanimity} with $d=n^2$ and $T_j=\{\ell_j,\ell_j+1,\dots,r_j\}$ a contiguous interval for each $j$.
Then the exact GPASV value is
$$
\psi_i(U)
=\sum_{j:\,i\in T_j}
c_j
\,\frac{\lambda_i\,\tilde\omega_0^{\,r_j-i}}
       {\sum_{q=\ell_j}^{r_j}\lambda_q\,\tilde\omega_0^{\,r_j-q}},
\quad i\in[n].
$$
\end{proposition}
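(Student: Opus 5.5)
By linearity of the ROV and the sum-of-unanimity form \eqref{eq:app-sum-of-unanimity}, it suffices to fix one interval $T=\{\ell,\dots,r\}$ and show that
\[
\mathbb{P}_{\pi\sim p^{(\lambda,\omega)}}(i\text{ is the last member of }T)=\frac{\lambda_i\tilde\omega_0^{\,r-i}}{\sum_{q=\ell}^{r}\lambda_q\tilde\omega_0^{\,r-q}},\qquad i\in T.
\]
Indeed, player $i$'s marginal contribution to the term $c_j\mathbbm{1}\{T_j\subseteq S\}$ is $c_j$ exactly when $i$ arrives last among $T_j$, and zero otherwise.

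The plan is to work in the multiplicative representation \eqref{eq:app-param-gpasv-mult}. Under the total order, $\tilde\omega_k^S=\tilde\omega_0^{\,|\{j\in S:j>k\}|}$. The first key step is that the state factor $s_\omega(S)=\sum_{k\in S}\tilde\omega_k^S$ depends only on $|S|$. As $k$ ranges over $S$, the count $|\{j\in S:j>k\}|$ takes each value $0,1,\dots,|S|-1$ exactly once, so $s_\omega(S)=\sum_{q=0}^{|S|-1}\tilde\omega_0^{\,q}$. Hence $\prod_t s_\omega(S_t)$ is a constant independent of $\pi$. The GPASV law then becomes the product of choice factors $c_{\lambda,\omega}(\pi_t;S_t)$ from \eqref{eq:gpasv-gscf-instantiated}. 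Each choice factor is a normalized conditional distribution, so this product is exactly the joint law of a backward sequential sampler: starting from $S_n=[n]$, draw $\pi_t$ from $S_t$ with probability proportional to $\lambda_k\tilde\omega_k^{S_t}$.

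Under this sampler, the last member of $T$ in forward order is the first element of $T$ drawn in reverse time. Let $\tau$ be that stage. At stage $\tau$ we have $T\subseteq S_\tau$. Conditionally on $S_\tau$ and on the draw landing in $T$, the probability that $\pi_\tau=i$ equals $\lambda_i\tilde\omega_i^{S_\tau}/\sum_{k\in T}\lambda_k\tilde\omega_k^{S_\tau}$. For $k\in T$, split the count of larger elements of $S_\tau$ into those in $T$, which number $r-k$, and those outside $T$. Because $T$ is contiguous, the larger elements outside $T$ are exactly $S_\tau\cap\{r+1,\dots,n\}$, which is the same set for every $k\in T$. The resulting common factor $\tilde\omega_0^{|S_\tau\cap\{r+1,\dots,n\}|}$ cancels, leaving a conditional probability free of $S_\tau$. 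Averaging over $S_\tau$ then gives the target formula.

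The main obstacle is justifying the sequential-sampler interpretation. This hinges entirely on the state factor being $\pi$-independent, which is the special combinatorial feature of the total order and fails for general graphs. I would verify it carefully first. The contiguity cancellation is the second delicate point, and it is precisely where the interval assumption on each $T_j$ is used.
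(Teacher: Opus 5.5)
Your proposal is correct and matches the paper's proof step for step. Both arguments share the same three moves: the linearity reduction to last-arrival probabilities within a single interval $T=\{\ell,\dots,r\}$; the observation that $s_\omega(S)=\sum_{q=0}^{|S|-1}\tilde\omega_0^{\,q}$ depends only on $|S|$, so GPASV becomes a backward sequential sampler; and the contiguity-based cancellation of the common factor $\tilde\omega_0^{|S_\tau\cap\{r+1,\dots,n\}|}$ at the reverse-time stopping stage $\tau$, followed by marginalization over $S_\tau$.
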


For each interval term, player $i\in T_j$ receives $c_j$ exactly when $i$ is the last element of $T_j$ to appear in $\pi$.
Under the total-order priority structure, the last-arrival probability within $T_j$ is proportional to $\lambda_i\,\tilde\omega_0^{\,r_j-i}$; normalizing over $T_j$ and summing over $j$ by linearity gives the stated form.
The full proof is given in Appendix~\ref{app:proof-scenario1}.

\paragraph{Scenario 2: block DAG with cyclic blocks and block-completion utilities.}
Partition $[n]=B_1\cup\cdots\cup B_K$ into disjoint equal-sized blocks, and build a directed graph in which each block $B_k$ carries an internal directed cycle, and whenever $B_k\to B_\ell$ is present in an underlying DAG on $\{B_1,\ldots,B_K\}$, every ordered pair $(i,j)\in B_k\times B_\ell$ is an edge.
We take $\lambda_i$ constant within each block; on each cycle within a block $B_k$ we take $\tilde\omega_{ij}$ to be a single common value, and on each present ordered block pair $B_k\to B_\ell$ we take $\tilde\omega_{ij}$ to be a single common value over all $(i,j)\in B_k\times B_\ell$.
Non-edge pairs take $\tilde\omega_{ij}\equiv 1$.
The subsets in \eqref{eq:app-sum-of-unanimity} are the blocks themselves, $T_j=B_j$ for $j=1,\ldots,K$, so $U$ rewards the completion of entire blocks.
Within-block cyclic symmetry, combined with equal $\lambda$'s inside a block, makes every member of $B_j$ equally likely to be the last to arrive among $B_j$.

\begin{proposition}[Closed form on Scenario 2]
\label{prop:scenario2}
Let $[n]=B_1\cup\cdots\cup B_K$ be a partition into disjoint blocks, and let the directed graph satisfy the block-structured assumptions above: each block induces a directed cycle, every present block edge $B_k\to B_\ell$ contributes all pairs $(i,j)\in B_k\times B_\ell$ as edges, $\lambda_i$ is constant within each block, each block's cycle edges share a common weight, each present block pair shares a common weight, and non-edge pairs take $\tilde\omega_{ij}\equiv 1$.
Consider the block-completion utility \eqref{eq:app-sum-of-unanimity} with $d=K$ and $T_j=B_j$.
Then the exact GPASV value is
$$
\psi_i(U)=\frac{c_j}{|B_j|}
\quad\text{for every } i\in B_j.
$$
\end{proposition}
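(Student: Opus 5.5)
By linearity of the ROV and the sum-of-unanimity form \eqref{eq:app-sum-of-unanimity} with $T_j=B_j$, player $i\in B_j$ collects $c_j$ exactly when it is the last member of $B_j$ to appear in $\pi$. Hence
\[
\psi_i(U)=c_j\cdot\mathbb{P}_{\pi\sim p^{(\lambda,\omega)}}(i\text{ is last of }B_j\text{ in }\pi),
\]
and it suffices to show this last-arrival probability equals $1/|B_j|$ for every $i\in B_j$. I would prove this by a symmetry argument rather than by computing the probability directly. The idea is to find a relabeling of players that leaves $p^{(\lambda,\omega)}$ invariant and permutes $B_j$ transitively.

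Label $B_j=\{b_1,\dots,b_m\}$ along its internal cycle $b_1\to\cdots\to b_m\to b_1$. Define $\phi$ as the cyclic shift $b_s\mapsto b_{s+1}$ on $B_j$ (indices mod $m$), fixing every player outside $B_j$.

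\emph{Step 1: edge weights are preserved.} I would show $\tilde\omega_{\phi(k)\phi(\ell)}=\tilde\omega_{k\ell}$ for all $k,\ell$, by cases.
\begin{itemize}
\item Both $k,\ell\in B_j$: the shift maps cycle edges to cycle edges and non-edges to non-edges, and all cycle edges share one common weight.
\item One of $k,\ell$ in $B_j$, the other in a different block: the block-pair weight is constant over $B_j\times B_{m'}$, and $\phi$ fixes the outside player.
\item Neither in $B_j$: trivial, since $\phi$ fixes both.
\end{itemize}
Since $\lambda$ is constant on $B_j$, we also have $\lambda_{\phi(k)}=\lambda_k$.

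\emph{Step 2: stage factors are preserved.} From Step 1, $\tilde\omega^{\phi(S)}_{\phi(k)}=\tilde\omega^S_k$ for every $k$ and $S$. Consequently both sums in \eqref{eq:app-param-gpasv-mult}, $\sum_{k\in S}\tilde\omega_k^S$ and $\sum_{k\in S}\lambda_k\tilde\omega_k^S$, are invariant under $S\mapsto\phi(S)$. For $\pi'=\phi(\pi)$, defined coordinatewise, the prefix sets satisfy $S_t^{\pi'}=\phi(S_t^\pi)$. So each stage factor of \eqref{eq:gpasv} is unchanged, and $p^{(\lambda,\omega)}(\phi(\pi))=p^{(\lambda,\omega)}(\pi)$.

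\emph{Step 3: conclude uniformity.} Let $L(\pi)$ be the last member of $B_j$ in $\pi$. Because $\phi(B_j)=B_j$ and $\phi$ acts coordinatewise, the positions occupied by $B_j$ are the same in $\pi$ and $\phi(\pi)$. Hence $L(\phi(\pi))=\phi(L(\pi))$. Combining this with Step 2,
\[
\mathbb{P}(L=i)=\mathbb{P}(L=\phi(i)).
\]
Since $\phi$ is a single $m$-cycle on $B_j$, iterating this identity shows $\mathbb{P}(L=i)$ is constant on $B_j$, hence equal to $1/|B_j|$.

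The main obstacle is the case analysis in Step 1. It must confirm that every structural assumption of the proposition is exactly what makes $\phi$ a weight-preserving automorphism: common cycle weights, homogeneous block-pair weights, and constant $\lambda$ within each block. Step 1 must also avoid implicitly assuming the block DAG has any further symmetry, which it need not.
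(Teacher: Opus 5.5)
Your proposal is correct and follows the paper's proof essentially step for step. Both arguments reduce to the last-arrival probability, use the cyclic shift $\phi$ on $B_j$ fixing all other players, and verify $\tilde\omega_{\phi(k)\phi(\ell)}=\tilde\omega_{k\ell}$ by the same three cases. Both then obtain invariance of every stage factor and hence $p^{(\lambda,\omega)}(\phi(\pi))=p^{(\lambda,\omega)}(\pi)$, and conclude uniformity from the equivariance $L(\phi(\pi))=\phi(L(\pi))$ iterated around the $m$-cycle.
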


For the block-completion term associated with $B_j$, only the final member of $B_j$ to appear can receive $c_j$.
Because node weights are constant within each block and every priority factor affecting $B_j$'s members is symmetric under cyclic relabeling inside $B_j$, each member of $B_j$ is equally likely to be this final member.
The block reward thus splits uniformly within the block, and linearity over $j$ gives the stated form.
The full proof is given in Appendix~\ref{app:proof-scenario2}.

\subsubsection{Experimental Design}

Across both scenarios we draw iid coefficients $c_j\sim\mathrm{Unif}(0.5,1.5)$.
The graph and priority choices specialize the closed-form setups of Proposition~\ref{prop:scenario1} and Proposition~\ref{prop:scenario2} to the two-dimensional $(\lambda,\tilde\omega)$-regime grid used in the main text.

For \textbf{Scenario~1}, we fix $d=n^2$ and sample each interval $T_j$ independently with replacement from the set of contiguous intervals of $[n]$.
Node priorities follow either the homogeneous regime $\lambda_i\equiv 1$ or the heterogeneous regime $\lambda_i\stackrel{\mathrm{iid}}{\sim}\mathrm{Unif}(1,10)$.
The common precedence-edge weight is $\tilde\omega_0\in\{0.3,\,0.7\}$, with $\tilde\omega_0=0.7$ corresponding to the weak-priority regime (close to no priority) and $\tilde\omega_0=0.3$ to the strong-priority regime.

For \textbf{Scenario~2}, we take $K=n/16$ equal-sized blocks.
The between-block DAG is generated by including each ordered block pair $B_k\to B_\ell$ with $k<\ell$ independently with probability $0.8$, and every block carries a fixed internal directed cycle.
Node priorities are either $\lambda_i\equiv 1$ or block-constant $\lambda_i=\lambda_{0k}$ for $i\in B_k$ with $\lambda_{0k}\stackrel{\mathrm{iid}}{\sim}\mathrm{Unif}(1,10)$.
Edge priorities are block-structured: for each present block pair $B_k\to B_\ell$, the pair-level common weight $\tilde\omega_{ij}\equiv\tilde\omega^{\mathrm{bet}}_{k\ell}$ over all $(i,j)\in B_k\times B_\ell$, and within each block $B_k$ the cycle edges share a common weight $\tilde\omega^{\mathrm{cyc}}_k$.
Both $\tilde\omega^{\mathrm{bet}}_{k\ell}$ and $\tilde\omega^{\mathrm{cyc}}_k$ are drawn either from $\mathrm{Unif}(0.5,1)$ (weak-priority regime) or from $\mathrm{Unif}(0,0.5)$ (strong-priority regime).

\paragraph{Case 1--4 regimes.}
The two $\lambda$-regimes and the two $\tilde\omega$-regimes factor into four cases, shared between the two scenarios so that the ARE, AUCC, unique-subset, and runtime reports use the same column layout.
The homogeneous $\lambda$-regime is identical across scenarios, while the heterogeneous one differs: Scenario~1 uses a per-player draw, whereas Scenario~2 uses the block-constant version $\lambda_i=\lambda_{0k}$ for $i\in B_k$.
\begin{center}
\small
\begin{tabular}{c|cc|cc}
\toprule
     & \multicolumn{2}{c|}{Scenario 1} & \multicolumn{2}{c}{Scenario 2} \\
Case & $\lambda_i$ & $\tilde\omega_0$ ($i<j$) & $\lambda_i$ (for $i\in B_k$) & $\tilde\omega^{\mathrm{bet}}_{k\ell},\ \tilde\omega^{\mathrm{cyc}}_k$ \\
\midrule
1 & $\equiv 1$                & $\equiv 0.7$ & $\equiv 1$                                   & $\sim\mathrm{Unif}(0.5,1)$ \\
2 & $\equiv 1$                & $\equiv 0.3$ & $\equiv 1$                                   & $\sim\mathrm{Unif}(0,0.5)$ \\
3 & $\sim\mathrm{Unif}(1,10)$ & $\equiv 0.7$ & $\equiv\lambda_{0k}\sim\mathrm{Unif}(1,10)$   & $\sim\mathrm{Unif}(0.5,1)$ \\
4 & $\sim\mathrm{Unif}(1,10)$ & $\equiv 0.3$ & $\equiv\lambda_{0k}\sim\mathrm{Unif}(1,10)$   & $\sim\mathrm{Unif}(0,0.5)$ \\
\bottomrule
\end{tabular}
\end{center}

\subsubsection{Accuracy Metrics and MH Protocol}

We report results for $n\in\{32,128,512,2048\}$ with a total Monte Carlo budget of $N_{\mathrm{MC}}=20{,}000$ post-burn-in permutations.
For each $(n,\text{Case})$, we fix one synthetic instance and run $10$ independent MH chains with greedy initialization (Algorithm~\ref{alg:gpasv-init}), burn-in $\lceil n^{2.5}\rceil$, and thinning interval $1000$.
Standard deviations reported throughout Section~\ref{app:sim2-accuracy} are taken across these $10$ repetitions on the fixed instance.

Every $100$ samples we record the absolute relative error
$$
\mathrm{ARE}(m)
:=\frac{\|\widehat\psi^{(m)}-\psi^\star\|_2}{\|\psi^\star\|_2},
$$
where $\widehat\psi^{(m)}$ is the direct Monte Carlo estimate of $\psi$ after $m$ post-burn-in samples and $\psi^\star$ is the closed-form target from Proposition~\ref{prop:scenario1} or~\ref{prop:scenario2}.
We summarize the whole convergence path by the area under the convergence curve,
$$
\mathrm{AUCC}
:=\frac{1}{200}\sum_{\ell=1}^{200}\mathrm{ARE}(100\ell),
$$
which penalizes slow initial convergence and high terminal error on a common scale.
Every $100$ samples we also record the number of distinct subsets whose utility $U(S)$ has been evaluated up to that point, which measures cache reuse, and the cumulative non-utility runtime, which isolates sampling and bookkeeping cost from the utility evaluation.

Concretely, we fix $\lambda_i\equiv 1$ and sweep two axes: the common edge weight $\tilde\omega_0\in\{0.1,0.3,0.5,0.7,0.9\}$ and the burn-in exponent $c$ in $n^c$ with $c\in\{0,0.5,1,1.5,2,2.5\}$.
All other MH settings match the main Section~\ref{app:sim2-accuracy} protocol.
For each $(\tilde\omega_0,c)$ we run both random and greedy initialization and report AUCC.
This ablation separates two questions: how much of the greedy benefit is simply a shorter burn-in substitute, and how that tradeoff depends on the pairwise-priority strength.

\subsubsection{Matched-Budget Surrogate Comparison Protocol}

Beyond the direct permutation estimator, GPASV admits a surrogate-assisted variant that first fits a cheap surrogate $\hat h\approx U$ on a limited set of evaluated subsets and then corrects the surrogate's GPASV by a residual term; see Section~\ref{sec:two-stage-surrogate} for the full construction.
We compare three estimators:
\begin{enumerate}
\setlength{\itemsep}{0pt}
\item[(i)] the direct permutation estimator of Section~\ref{sec:computation-main};
\item[(ii)] a linear surrogate $\hat h(S)=U(\emptyset)+\sum_{i\in S}\hat{a}_i$, in which each player contributes additively;
\item[(iii)] a quadratic surrogate $\hat h(S)=U(\emptyset)+\sum_{i\in S}\hat{a}_i+\sum_{i,j\in S}\hat{b}_{ij}$ that includes a randomly chosen $10\%$ of the pairwise interactions.
\end{enumerate}

Since the dominant cost of GPASV is direct utility evaluation, we compare the estimators by matching them on the number of distinct evaluated subsets.
For each problem instance we first run the direct estimator and record the number $K_{\mathrm{eval}}$ of distinct subsets whose $U(S)$ was evaluated.
The surrogate estimators are then given the same budget: $K_{\mathrm{train}}:=\min\{\lfloor 0.2\,K_{\mathrm{eval}}\rfloor,\,200{,}000\}$ evaluations go toward fitting $\hat h$, and the remaining $K_{\mathrm{adjust}}:=K_{\mathrm{eval}}-K_{\mathrm{train}}$ evaluations go toward the residual correction.
The two phases share the same utility cache, so $K_{\mathrm{eval}}$ is counted over the union of evaluated subsets rather than the sum.
Permutation samples that do not require utility evaluation, namely those used to estimate subset coefficients and pairwise-order probabilities, do not count toward $K_{\mathrm{eval}}$; we use $M=100{,}000$ such utility-free permutation samples.

Problem sizes are $n\in\{64,128,256,512,1024\}$, and all other MH settings (burn-in, thinning, greedy initialization) match the main Section~\ref{app:sim2-accuracy} setup.

\subsubsection{Full Results}

\paragraph{Direct permutation estimator.}
Figure~\ref{fig:app-simulation-are} reports $\mathrm{ARE}(m)$ over the sampling budget on the shared $2\times 4$ Scenario$\times$Case grid;
Table~\ref{tab:app-simulation-aucc} reports the final AUCC;
Figure~\ref{fig:app-simulation-unique} reports the number of distinct coalition subsets evaluated;
Table~\ref{tab:app-simulation-runtime} reports the final non-utility runtime.
Error drops with the sampling budget throughout the grid, and the slope flattens as $n$ grows.
Scenario~2 is harder than Scenario~1 at matched $n$, consistent with the extra mixing burden from within-block directed cycles.
The unique-subset curves flatten as $m$ grows, reflecting increasing cache reuse along each chain.

\paragraph{Surrogate-assisted estimators.}
Figure~\ref{fig:app-surrogate-are} compares the direct permutation estimator with the two surrogate variants under matched utility-evaluation budgets.
Surrogate methods reduce ARE in most settings, and the quadratic surrogate tends to dominate on Scenario~2 where pairwise interactions are informative.
Table~\ref{tab:app-surrogate-runtime} and Table~\ref{tab:app-surrogate-mem} report the non-utility runtime and the surrogate training memory.
The gain in ARE is bought at a substantial computational cost: quadratic-surrogate runtime and memory both grow quickly with $n$.
Unlike PASV, which can zero out $b_{ij}$ for any pair that violates the DAG, the GPASV quadratic surrogate has to retain all sampled pairwise coefficients, so memory is harder to trim.

\subsection{Simulation 3: Priority Sweeping}
\label{app:sim3-sweep}

Simulations~1 and~2 study the computational side of GPASV.
Simulation~3 asks a different question: how does the attribution respond when we move a single soft-priority scalar along a sweep, and how does that response differ between GPASV and PASV?
This is the empirical counterpart to the priority-sweeping diagnostic introduced in Section~\ref{subsec::method::sweep}, and it exposes an effect that is specific to GPASV, as $\omega$ at finite $\beta$ leaves more room for $\lambda$ to reshape the order distribution than PASV's binary DAG does.

\subsubsection{Setup: Group Selection and Utility Functions}

We fix $n=32$ and for each $p_{\mathrm{edge}}\in\{0.2,0.5,0.8\}$, draw a single DAG on $[n]$ by including each precedence edge $i<j$ independently with probability $p_{\mathrm{edge}}$.
We then fix a single group $H\subseteq[n]$ with $|H|=n/2$, and sweep two parameters: the soft-priority level $\lambda_0$ on $H$ and the hard-priority strength $\beta$.
Throughout, we write $\pi^{-1}_i$ for the position of player $i$ in the permutation $\pi$, i.e.\ the unique $t\in[n]$ with $\pi_t=i$.

\paragraph{Soft-priority sweep.}
We set $\lambda_i=\lambda_0$ for $i\in H$ and $\lambda_i=1$ otherwise, and sweep
$$
\lambda_0\in\{1,2,4,8,16,32,64,128\}.
$$
At $\lambda_0=1$ all players share the same soft priority, so GPASV and PASV both assign the same value to $H$ as to $[n]\setminus H$ up to graph effects; larger $\lambda_0$ progressively up-weights the members of $H$ in the backward stage-wise softmax, making them more likely to be selected for later forward positions, hence pushing them toward the tail of $\pi$.

\paragraph{Hard-priority sweep.}
We vary the temperature $\beta$ in \eqref{eq:gpasv} over
$$
\beta\in\{1,2,4,8\},
$$
and include the hard-priority extreme $\beta\to\infty$ (which, on a DAG, reduces GPASV to PASV by Theorem~\ref{thm:hard-limit}) as a reference curve.

\paragraph{Utility functions.}
We again use the sum-of-unanimity (SOU) family from Section~\ref{app:sim2-accuracy}, but now with $T_k$ ranging over general nonempty subsets of $[n]$ rather than contiguous intervals.
Concretely, we draw $d=n^2$ subsets $T_1,\ldots,T_d$ iid uniformly from the nonempty subsets of $[n]$ together with iid coefficients $c_k\sim\mathrm{Unif}(0.5,1.5)$, and set
$$
U^{\mathrm{SOU}}(S):=\sum_{k=1}^{d}c_k\,\mathbbm{1}\{T_k\subseteq S\}.
$$
For each $k$, the marginal value of player $i\in T_k$ under the indicator $\mathbbm{1}\{T_k\subseteq S\}$ is
$$
\mathbb{P}_{\pi\sim p^{(\lambda,\omega)}}\!\bigl(i \text{ is the last member of } T_k \text{ in }\pi\bigr).
$$
Hence $U^{\mathrm{SOU}}$ rewards late-positioned players.

To probe the opposite end of the position axis, we additionally introduce the \emph{sum-of-race} (SOR) family, obtained by replacing the indicator of $T_k\subseteq S$ with that of $T_k\cap S\ne\emptyset$:
$$
U^{\mathrm{SOR}}(S):=\sum_{k=1}^{d}c_k\,\mathbbm{1}\{T_k\cap S\neq\emptyset\}.
$$
For each $k$, the marginal value of player $i\in T_k$ under $\mathbbm{1}\{T_k\cap S\neq\emptyset\}$ is
$$
\mathbb{P}_{\pi\sim p^{(\lambda,\omega)}}\!\bigl(i \text{ is the first member of } T_k \text{ in }\pi\bigr),
$$
so $U^{\mathrm{SOR}}$ rewards early-positioned players.
We write $U^{\mathrm{SOU}}$ and $U^{\mathrm{SOR}}$ throughout to distinguish the two utilities.
Running the same sweep on $U^{\mathrm{SOU}}$ and $U^{\mathrm{SOR}}$ therefore exposes $\lambda_0$-sensitivity from two opposite ends.

\subsubsection{Connection to Limiting Cases}

Two extremes of $\beta$ bracket the sweep curves and help interpret the intermediate $\beta$'s.

\paragraph{$\beta\to\infty$ (hard-priority limit).}
By Theorem~\ref{thm:hard-limit}, the GPASV distribution concentrates on $\widetilde\Pi^{G_\omega}$, and when $G_\omega$ is a DAG we have $\widetilde\Pi^{G_\omega}=\Pi^{G_\omega}$ together with $M_\omega(S)=\max(S)$.
The stage-wise factors in \eqref{eq:gpasv} then reduce to PASV's stage-wise factors in \eqref{eq:pasv-prelim}, so the group-sum curve at $\beta\to\infty$ coincides with the PASV group-sum on the same DAG.
This is the dashed reference shown in the sweep plots.
Under PASV, the impact of $\lambda_0$ is bounded by the hard constraints of $G_\omega$: once the DAG order is satisfied, $\lambda$ can only redistribute within admissible sub-permutations, which already fixes most of the relative order of $H$ against $[n]\setminus H$.

\paragraph{$\beta=0$ (no pairwise priority).}
When $\beta=0$, the graph term $V_\omega$ drops out of \eqref{eq:gpasv} and the distribution reduces to the Plackett--Luce distribution with worths $(\lambda_i)$ (Section~\ref{subsec::method::connections}).
In this regime, $\lambda_0$ has the largest room to move the group-sum attribution because there is no hard constraint to pull the order back; the position of each member of $H$ is determined by the worths alone.

Intermediate $\beta\in\{1,2,4,8\}$ interpolates between the two extremes.
Because GPASV softens the hard priority at finite $\beta$, $\lambda_0$ can push members of $H$ later even when such moves violate an edge of $G_\omega$; the softness is controlled by $\beta$.

\subsubsection{Implementation and Baseline Computation}

For each $(\lambda_0,\beta)$ pair with finite $\beta$, we sample $N_{\mathrm{MC}}=10{,}000$ after $10,000$ burn-in periods from $p^{(\lambda,\omega)}$ using the adjacent-swap MH sampler (Algorithm~\ref{alg:gpasv-adjacent-mh}) with greedy initialization (Algorithm~\ref{alg:gpasv-init}).
We replicate each setting over $10$ independent chains and report mean and standard deviation.
Along the $\lambda_0$ sweep at fixed $\beta$, we reuse previous samples through the SNIS rule described in Section~\ref{sec:snis-reuse}, monitoring the effective sample size and drawing additional fresh samples when it falls below target.
This keeps the number of utility evaluations manageable across the sweep.

The PASV reference is obtained by sampling PASV permutations directly on $\Pi^{G_\omega}$ following the adjacent-swap MH sampler of \citet{lee2026priority} and estimating the group-sum PASV by the same Monte Carlo averaging, with matched sample count.

For each sample we record both the position $\pi^{-1}_i$ of every member $i\in H$, used to compute the mean forward position
$$
\bar r_H(\lambda_0,\beta)
:=\frac{1}{|H|}\sum_{i\in H}\mathbb{E}_{\pi\sim p^{(\lambda,\omega)}}[\pi^{-1}_i],
$$
by its empirical estimate and the marginal contributions under $U^{\mathrm{SOU}}$ and $U^{\mathrm{SOR}}$, used to form the group-sum GPASV $\sum_{i\in H}\psi_i(U)$.

\subsubsection{Full Results}

Figure~\ref{fig:app-sweep-value} shows the full sweep.
The left panel reports $\bar r_H(\lambda_0,\beta)$, the middle and right panels report $\sum_{i\in H}\psi_i(U)$ under $U^{\mathrm{SOU}}$ and $U^{\mathrm{SOR}}$, and the dashed black curve in each panel is the PASV reference.

Three patterns are visible.
First, the mean forward position is monotone in $\lambda_0$: larger $\lambda_0$ pushes members of $H$ later in the sampled order under both GPASV and PASV.
Second, the slope of the position curve is steeper at smaller $\beta$, reflecting softer pairwise priority; at $\beta=1$, $\lambda_0$ moves the mean position over a wider range than at $\beta=8$ or at $\beta\to\infty$ (PASV).
Third, the group-sum curves under $U^{\mathrm{SOU}}$ and $U^{\mathrm{SOR}}$ move in opposite directions, as expected from their definitions: the $U^{\mathrm{SOU}}$ group sum rises with $\lambda_0$ (more $H$-members arrive late and thus collect unanimity rewards), whereas the $U^{\mathrm{SOR}}$ group sum falls with $\lambda_0$ (fewer $H$-members arrive early and thus collect first-arrival rewards).
The transition sharpens monotonically in $\beta$, and the $\beta\to\infty$ reference tracks the $\beta=8$ curve closely throughout, consistent with Theorem~\ref{thm:hard-limit}.

\clearpage

\begin{figure}[t]
\centering
\includegraphics[width=\textwidth]{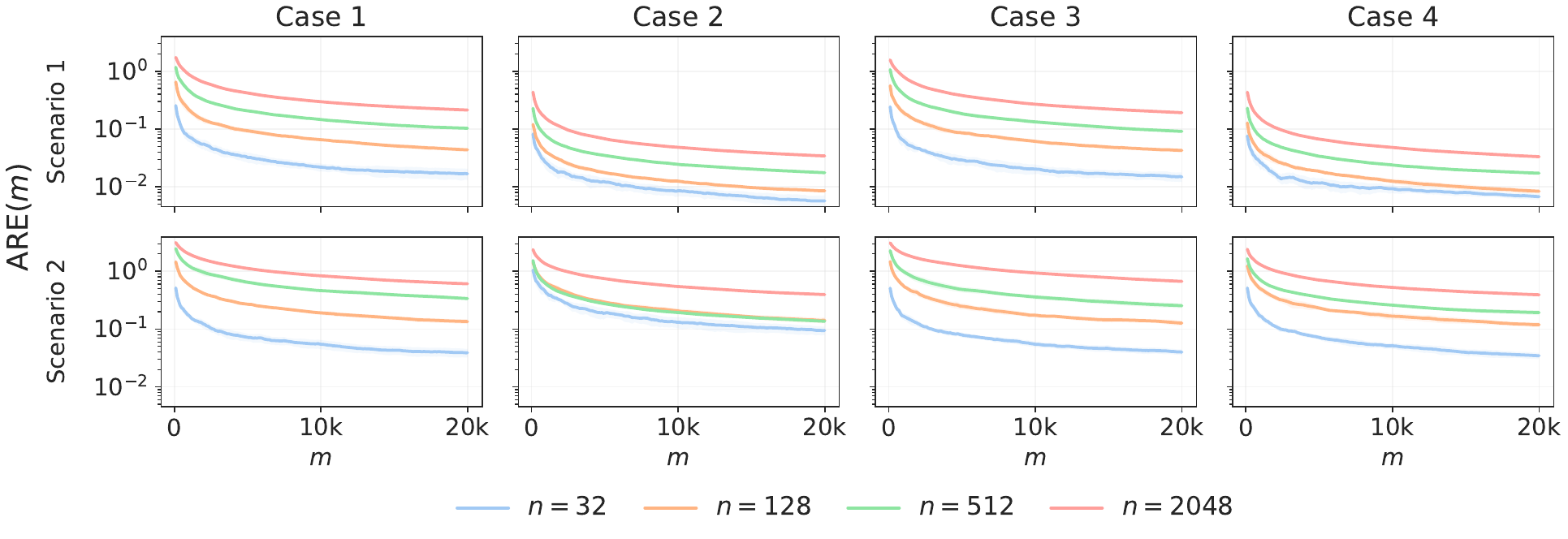}
\caption{Monte Carlo accuracy of the direct permutation estimator, reported as $\mathrm{ARE}(m)$ over the post-burn-in sampling budget $m$.
All curves use greedy initialization.
Rows correspond to Scenario~1 and Scenario~2; columns correspond to Cases~1--4.}
\label{fig:app-simulation-are}
\end{figure}

\begin{figure}[t]
\centering
\includegraphics[width=\textwidth]{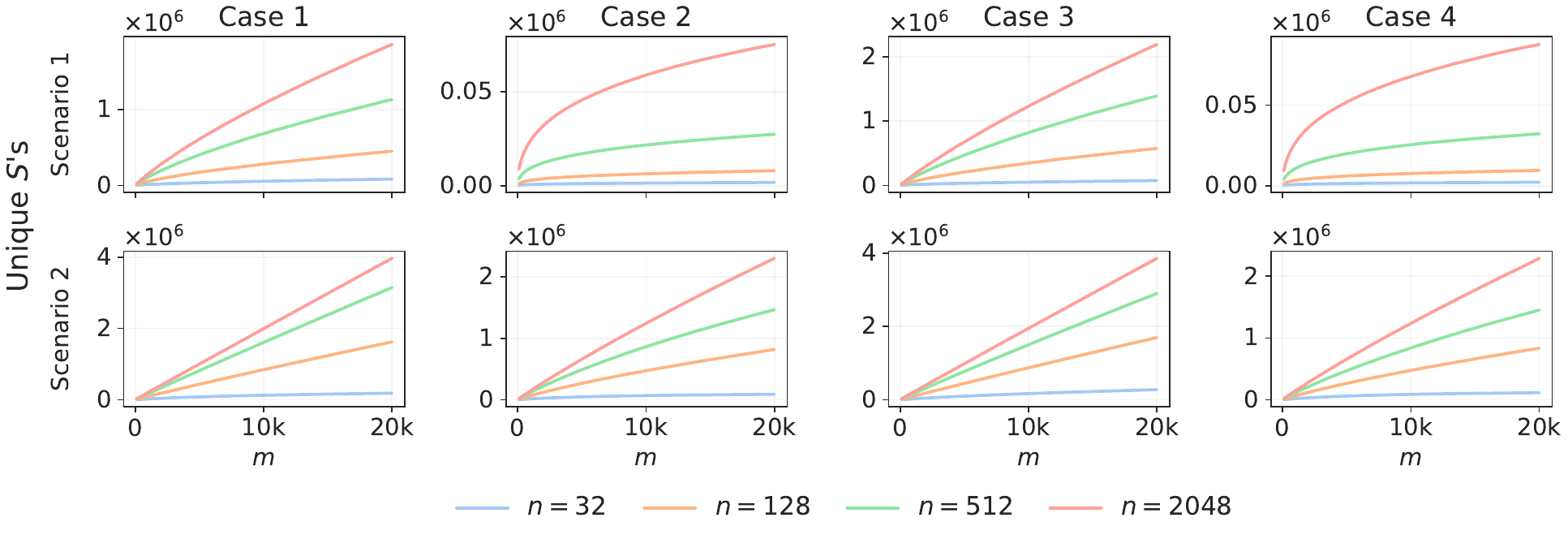}
\caption{Number of distinct coalition subsets evaluated by the direct permutation estimator, over the sampling budget.
Flatter growth indicates stronger cache reuse.}
\label{fig:app-simulation-unique}
\end{figure}

\begin{table}[t]
\centering
\caption{Final AUCC for the direct permutation estimator under greedy initialization.
Rows index $n$; columns index Cases~1--4 within each scenario.
Entries are mean (standard deviation) across $10$ independent MH repetitions on the fixed instance.}
\label{tab:app-simulation-aucc}
\scriptsize
\setlength{\tabcolsep}{3.5pt}
\resizebox{\textwidth}{!}{%
\begin{tabular}{c|cccc|cccc}
\toprule
& \multicolumn{4}{c|}{\textbf{Scenario 1}}
& \multicolumn{4}{c}{\textbf{Scenario 2}} \\
$n$ & Case 1 & Case 2 & Case 3 & Case 4 & Case 1 & Case 2 & Case 3 & Case 4 \\
\midrule
32   & 0.0310 (0.0024) & 0.0111 (0.0012) & 0.0278 (0.0033) & 0.0115 (0.0011) & 0.0708 (0.0075) & 0.1772 (0.0249) & 0.0741 (0.0050) & 0.0681 (0.0048) \\
128  & 0.0875 (0.0052) & 0.0167 (0.0006) & 0.0811 (0.0040) & 0.0170 (0.0010) & 0.2491 (0.0076) & 0.2685 (0.0116) & 0.2361 (0.0144) & 0.2160 (0.0151) \\
512  & 0.1919 (0.0072) & 0.0326 (0.0006) & 0.1726 (0.0085) & 0.0319 (0.0008) & 0.5831 (0.0310) & 0.2540 (0.0078) & 0.4506 (0.0327) & 0.3246 (0.0079) \\
2048 & 0.3799 (0.0129) & 0.0641 (0.0012) & 0.3465 (0.0099) & 0.0634 (0.0009) & 0.9900 (0.0178) & 0.6552 (0.0113) & 1.0791 (0.0274) & 0.6352 (0.0076) \\
\bottomrule
\end{tabular}
}
\end{table}

\begin{table}[t]
\centering
\caption{Final non-utility runtime (seconds) for the direct permutation estimator under greedy initialization.
This excludes the time spent evaluating $U(S)$ and isolates sampling plus bookkeeping cost.
Entries are mean (standard deviation) across $10$ independent MH repetitions.}
\label{tab:app-simulation-runtime}
\scriptsize
\setlength{\tabcolsep}{3.5pt}
\resizebox{\textwidth}{!}{%
\begin{tabular}{c|cccc|cccc}
\toprule
& \multicolumn{4}{c|}{\textbf{Scenario 1}}
& \multicolumn{4}{c}{\textbf{Scenario 2}} \\
$n$ & Case 1 & Case 2 & Case 3 & Case 4 & Case 1 & Case 2 & Case 3 & Case 4 \\
\midrule
32   & 4.37 (0.50) & 3.15 (0.07) & 4.26 (0.01) & 3.31 (0.01) & 2.82 (0.01) & 2.54 (0.01) & 3.14 (0.01) & 2.71 (0.01) \\
128  & 10.41 (0.08) & 8.48 (0.01) & 10.88 (0.05) & 8.72 (0.01) & 10.89 (0.09) & 8.57 (0.02) & 10.61 (0.02) & 8.70 (0.18) \\
512  & 45.52 (0.47) & 38.23 (0.26) & 43.10 (0.23) & 39.08 (0.20) & 47.98 (0.62) & 38.91 (0.34) & 47.34 (0.67) & 38.86 (0.34) \\
2048 & 1240.88 (49.17) & 1199.02 (64.10) & 1195.87 (57.89) & 1232.67 (66.66) & 1503.55 (53.94) & 1045.47 (8.85) & 1205.56 (24.25) & 993.38 (25.95) \\
\bottomrule
\end{tabular}
}
\end{table}

\begin{figure}[t]
\centering
\includegraphics[width=\textwidth]{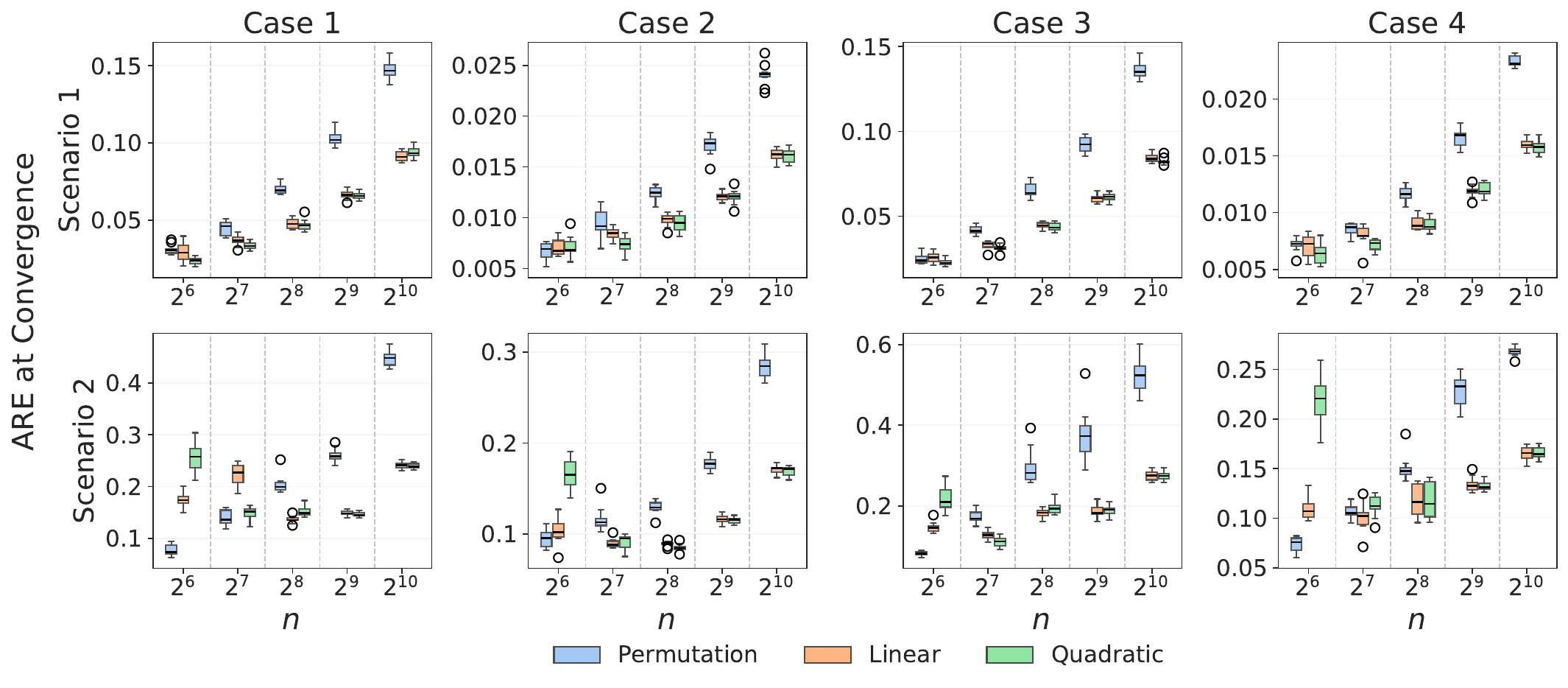}
\caption{Final ARE under matched utility-evaluation budgets for the direct and the two surrogate-assisted estimators.}
\label{fig:app-surrogate-are}
\end{figure}

\begin{table}[t]
\centering
\caption{Non-utility runtime (seconds) for the three estimators under matched utility budgets.
Entries are mean (standard deviation) across independent repetitions.}
\label{tab:app-surrogate-runtime}
\scriptsize
\setlength{\tabcolsep}{3.2pt}
\resizebox{\textwidth}{!}{%
\begin{tabular}{cc|cccc|cccc}
\toprule
& & \multicolumn{4}{c|}{\textbf{Scenario 1}} & \multicolumn{4}{c}{\textbf{Scenario 2}} \\
Method & $n$ & Case 1 & Case 2 & Case 3 & Case 4 & Case 1 & Case 2 & Case 3 & Case 4 \\
\midrule
\multicolumn{10}{l}{\textit{Permutation}} \\
& 64 & 5.98 (0.02) & 4.94 (0.01) & 6.31 (0.01) & 5.17 (0.01) & 5.21 (0.01) & 4.69 (0.02) & 5.28 (0.02) & 4.68 (0.02) \\
& 128 & 9.73 (0.01) & 8.53 (0.02) & 10.13 (0.01) & 8.80 (0.01) & 10.13 (0.02) & 8.15 (0.04) & 10.89 (0.10) & 8.26 (0.02) \\
& 256 & 17.72 (0.03) & 15.94 (0.03) & 18.12 (0.05) & 16.21 (0.02) & 19.37 (0.06) & 15.99 (0.07) & 19.65 (0.16) & 15.94 (0.04) \\
& 512 & 39.77 (0.19) & 36.45 (0.16) & 39.68 (0.10) & 36.67 (0.12) & 41.84 (0.15) & 34.36 (0.06) & 42.44 (0.17) & 34.03 (0.15) \\
& 1024 & 209.25 (3.47) & 195.76 (2.46) & 203.22 (1.79) & 196.92 (1.99) & 254.92 (3.92) & 205.55 (0.51) & 255.72 (3.14) & 206.43 (1.47) \\
\midrule
\multicolumn{10}{l}{\textit{Linear surrogate}} \\
& 64 & 36.40 (0.26) & 26.51 (0.15) & 38.81 (0.23) & 27.21 (0.19) & 44.20 (0.39) & 35.50 (0.38) & 44.38 (0.24) & 35.58 (0.26) \\
& 128 & 66.61 (0.34) & 46.72 (0.19) & 69.92 (0.19) & 48.57 (0.23) & 91.95 (0.24) & 61.27 (0.28) & 98.72 (0.20) & 61.63 (0.36) \\
& 256 & 131.41 (0.51) & 91.73 (0.17) & 133.43 (0.76) & 92.27 (0.53) & 174.25 (1.43) & 120.17 (0.39) & 180.50 (0.31) & 117.79 (0.51) \\
& 512 & 290.04 (1.83) & 191.07 (0.65) & 272.53 (1.29) & 188.95 (0.92) & 309.91 (2.41) & 231.47 (1.18) & 311.12 (1.90) & 226.24 (1.08) \\
& 1024 & 1053.69 (19.83) & 663.81 (6.91) & 835.67 (4.70) & 649.31 (4.14) & 886.68 (10.01) & 725.34 (4.64) & 893.16 (6.54) & 735.43 (3.17) \\
\midrule
\multicolumn{10}{l}{\textit{Quadratic surrogate}} \\
& 64 & 36.77 (0.25) & 26.49 (0.10) & 39.15 (0.25) & 27.15 (0.21) & 45.32 (0.45) & 36.08 (0.24) & 45.28 (0.34) & 36.26 (0.25) \\
& 128 & 68.86 (0.19) & 46.89 (0.26) & 71.97 (0.24) & 48.63 (0.16) & 95.70 (0.36) & 63.95 (0.22) & 103.02 (0.61) & 65.00 (0.33) \\
& 256 & 144.39 (0.74) & 92.30 (0.28) & 152.45 (1.20) & 92.73 (0.18) & 191.26 (1.16) & 138.65 (0.88) & 198.05 (0.71) & 138.68 (1.59) \\
& 512 & 359.25 (3.06) & 194.82 (0.74) & 342.59 (2.79) & 193.55 (0.82) & 382.64 (1.43) & 317.75 (3.33) & 388.41 (2.06) & 315.09 (2.61) \\
& 1024 & 1599.66 (23.32) & 700.01 (6.31) & 1558.86 (18.52) & 701.05 (4.81) & 1520.96 (14.39) & 1378.02 (9.58) & 1575.88 (16.45) & 1409.52 (10.56) \\
\bottomrule
\end{tabular}
}
\end{table}

\begin{table}[t]
\centering
\caption{Training memory (GB) used by the surrogate fit under matched utility budgets.
The permutation estimator does not fit a surrogate and is omitted.}
\label{tab:app-surrogate-mem}
\scriptsize
\setlength{\tabcolsep}{3.2pt}
\resizebox{\textwidth}{!}{%
\begin{tabular}{cc|cccc|cccc}
\toprule
& & \multicolumn{4}{c|}{\textbf{Scenario 1}} & \multicolumn{4}{c}{\textbf{Scenario 2}} \\
Method & $n$ & Case 1 & Case 2 & Case 3 & Case 4 & Case 1 & Case 2 & Case 3 & Case 4 \\
\midrule
\multicolumn{10}{l}{\textit{Linear surrogate}} \\
& 64 & 0.033 (0.000) & 0.001 (0.000) & 0.037 (0.000) & 0.001 (0.000) & 0.138 (0.000) & 0.093 (0.000) & 0.139 (0.000) & 0.092 (0.000) \\
& 128 & 0.132 (0.001) & 0.002 (0.000) & 0.155 (0.002) & 0.003 (0.000) & 0.298 (0.000) & 0.194 (0.000) & 0.289 (0.000) & 0.183 (0.000) \\
& 256 & 0.443 (0.003) & 0.009 (0.000) & 0.547 (0.004) & 0.011 (0.000) & 0.580 (0.001) & 0.592 (0.000) & 0.580 (0.001) & 0.568 (0.006) \\
& 512 & 1.151 (0.001) & 0.031 (0.000) & 1.144 (0.002) & 0.036 (0.000) & 1.146 (0.001) & 1.158 (0.001) & 1.144 (0.002) & 1.155 (0.001) \\
& 1024 & 2.294 (0.003) & 0.106 (0.001) & 2.298 (0.004) & 0.126 (0.001) & 2.304 (0.003) & 2.312 (0.002) & 2.297 (0.003) & 2.318 (0.002) \\
\midrule
\multicolumn{10}{l}{\textit{Quadratic surrogate}} \\
& 64 & 0.094 (0.001) & 0.002 (0.000) & 0.102 (0.001) & 0.002 (0.000) & 0.409 (0.001) & 0.290 (0.001) & 0.409 (0.000) & 0.288 (0.000) \\
& 128 & 0.666 (0.008) & 0.012 (0.000) & 0.772 (0.009) & 0.015 (0.000) & 1.526 (0.003) & 0.986 (0.003) & 1.474 (0.003) & 0.915 (0.002) \\
& 256 & 4.030 (0.024) & 0.081 (0.001) & 4.922 (0.041) & 0.098 (0.001) & 5.305 (0.008) & 5.620 (0.008) & 5.336 (0.011) & 5.394 (0.061) \\
& 512 & 20.592 (0.042) & 0.566 (0.006) & 20.403 (0.041) & 0.657 (0.006) & 20.549 (0.034) & 21.000 (0.027) & 20.477 (0.041) & 20.989 (0.042) \\
& 1024 & 106.818 (0.211) & 3.724 (0.031) & 107.120 (0.217) & 4.409 (0.035) & 107.556 (0.187) & 108.276 (0.155) & 107.107 (0.175) & 108.548 (0.215) \\
\bottomrule
\end{tabular}
}
\end{table}

\begin{figure}[h!]
\centering
\includegraphics[width=\textwidth]{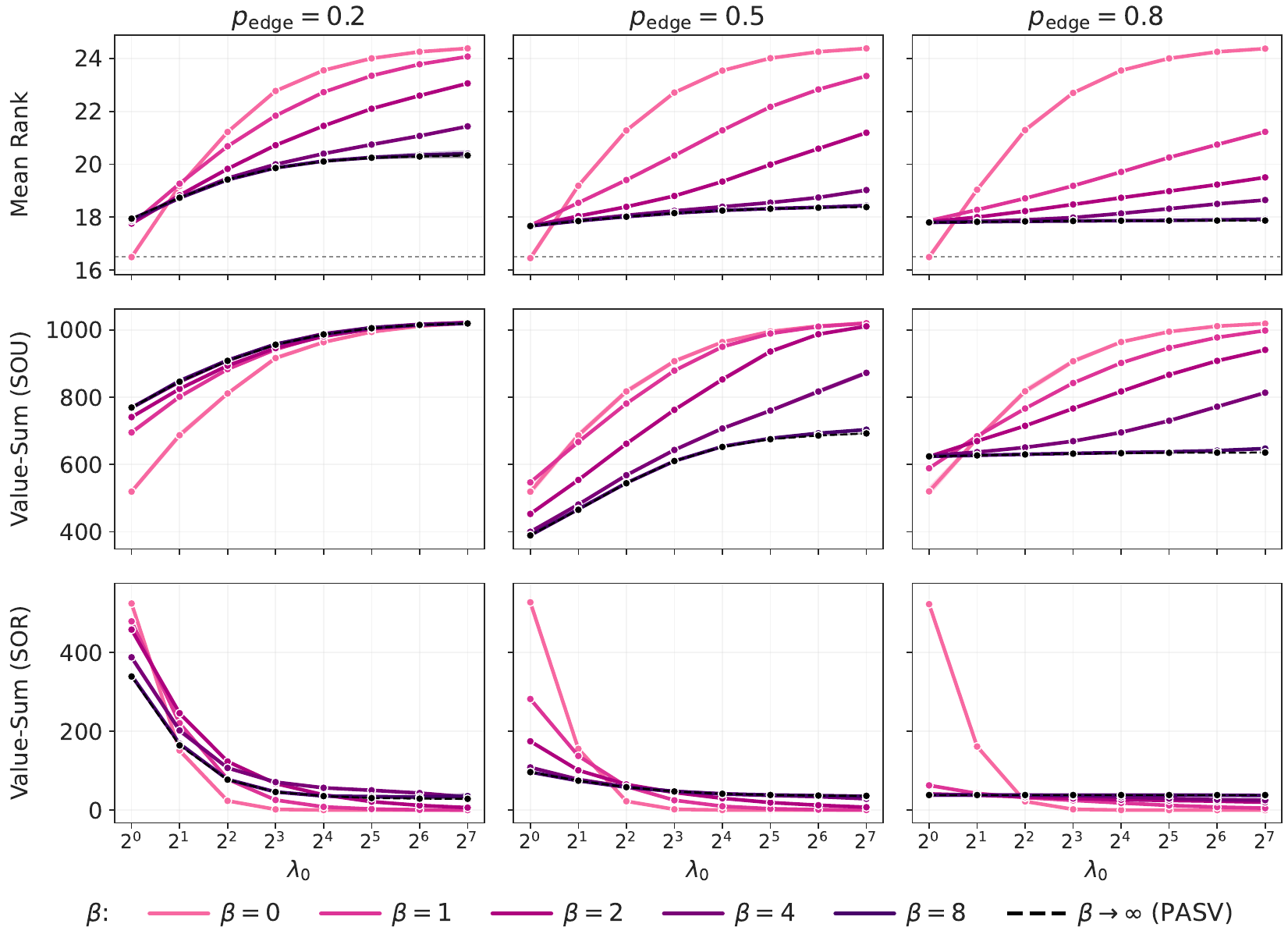}
\caption{Priority sweeping results.
Rows: mean forward position, SOU group-sum, and SOR group-sum.
Colored curves vary $\beta\in\{1,2,4,8\}$; the dashed black curve is the $\beta\to\infty$ (PASV on $G_\omega$) reference.}
\label{fig:app-sweep-value}
\end{figure}

\clearpage

\section{Additional Details and Results for LLM Evaluation in Section~\ref{sec:application}}
\label{app:application}

This appendix collects the implementation details and the full empirical results of the LLM evaluation in Section~\ref{sec:application}.
Throughout, a \emph{player} $i$ is one of the $n=20$ LLMs in Table~\ref{tab:llm-model-set} and a \emph{coalition} $S\subseteq[n]$ is a subset of them; GPASV attributes the coalition utility $U_{\mathrm{ens}}(S)$ across players under the hard-priority weights $\omega_{ij}$ and the soft-priority vector $\lambda_i=\exp(-\alpha z_i)$ from Section~\ref{sec:application}.

The utility-evaluation stage of this experiment requires GPU computing and was run on the same computing cluster as Appendix~\ref{app:simulation}, pinned to NVIDIA H200 GPUs
serving \texttt{Qwen3.5-35B-A3B-FP8} via vLLM, for a total of approximately 30 GPU-days across the $80$ MT-Bench prompts.
All downstream stages are CPU-only, and were run on the same cluster on multiple Intel Xeon CPUs.

\subsection{Data and Priority Construction}
\label{app:application-construction}

\paragraph{Model set and open/paid labels.}
Table~\ref{tab:llm-model-set} lists the $20$ models used as players in Section~\ref{sec:application}, together with the binary label $z_i\in\{0,1\}$ used for the soft priority.
The model set is exactly the intersection of MT-Bench \citep{zheng2023judging} and Chatbot Arena \citep{chiang2024chatbot}: every model we score on MT-Bench must also have a sufficient number of Chatbot Arena comparisons for the hard priority to be well-defined.
We label the five proprietary API models (\texttt{gpt-4}, \texttt{gpt-3.5-turbo}, \texttt{claude-v1}, \texttt{claude-instant-v1}, \texttt{palm-2}) as \emph{paid}, and all remaining MT-Bench/Arena-overlap models as \emph{open-source}.
The label is a coarse deployment signal rather than a license audit; the point of the soft priority in Section~\ref{sec:application} is merely to study how strongly an evaluator who explicitly prefers open-source models over paid APIs reshapes the attribution, so $z_i$ is set to $1$ for open-source and $0$ for paid.

\begin{table}[h!]
\centering
\small
\begin{tabular}{@{}ll@{\hspace{1.5em}}ll@{}}
\toprule
Model & Label & Model & Label \\
\midrule
\texttt{gpt-4} & paid & \texttt{gpt4all-13b-snoozy} & open-source \\
\texttt{claude-v1} & paid & \texttt{mpt-7b-chat} & open-source \\
\texttt{claude-instant-v1} & paid & \texttt{rwkv-4-raven-14b} & open-source \\
\texttt{gpt-3.5-turbo} & paid & \texttt{alpaca-13b} & open-source \\
\texttt{guanaco-33b} & open-source & \texttt{oasst-pythia-12b} & open-source \\
\texttt{vicuna-13b} & open-source & \texttt{fastchat-t5-3b} & open-source \\
\texttt{wizardlm-13b} & open-source & \texttt{chatglm-6b} & open-source \\
\texttt{palm-2} & paid & \texttt{stablelm-tuned-alpha-7b} & open-source \\
\texttt{vicuna-7b} & open-source & \texttt{dolly-v2-12b} & open-source \\
\texttt{koala-13b} & open-source & \texttt{llama-13b} & open-source \\
\bottomrule
\end{tabular}
\caption{Model set and open/paid labels $z_i$ used in the LLM evaluation experiment.}
\label{tab:llm-model-set}
\end{table}

\paragraph{Coalition utility from MT-Bench.}
MT-Bench consists of $Q=80$ two-turn prompts organized into eight categories with ten prompts each:
\texttt{Writing}, \texttt{Roleplay}, \texttt{Reasoning}, \texttt{Math}, \texttt{Coding}, \texttt{Extraction}, \texttt{Knowledge I}, and \texttt{Knowledge II}.
The released benchmark bundles first- and second-turn model responses for every player, and these responses are treated as fixed candidate answers throughout the experiment; we never re-query the $20$ players themselves.
Following the MT-Bench judging protocol, we use two judge variants.
For \texttt{Reasoning}, \texttt{Math}, and \texttt{Coding}, the judge is given MT-Bench's released reference answer and is asked to score correctness relative to it (with-reference variant).
For the remaining five categories the judge scores the answer directly from the user prompt and the assistant response (no-reference variant).
For a coalition $S$ and a prompt $q$, the first-turn candidates $\{y^{(1)}_i\}_{i\in S}$ are passed to an aggregator LLM that synthesizes a single ensemble answer $y^{(1)}_S$; a judge LLM scores $y^{(1)}_S$ on $\{1,\ldots,10\}$, producing a score $r^{(1)}(S,q)$.
For the second turn we feed the first-turn aggregated answer $y^{(1)}_S$ as the previous assistant turn, aggregate the second-turn candidates into $y^{(2)}_S$, and score analogously, producing $r^{(2)}(S,q)$.
The prompt-level score is the mean of the two turn scores, and the coalition utility is the average over prompts,
$$
U_{\mathrm{ens}}(S)
=\frac{1}{Q}\sum_{q=1}^{Q}\frac{r^{(1)}(S,q)+r^{(2)}(S,q)}{2},
\qquad U_{\mathrm{ens}}(\emptyset):=0.
$$

\paragraph{Hard-priority graph from Chatbot Arena.}
Chatbot Arena supplies pairwise human preferences between model responses; we use the public comparison counts to construct the hard priority.
For every ordered pair $(i,j)$ with at least $50$ recorded comparisons, let $\widehat p_{ij}$ denote the empirical win probability of $i$ over $j$, and let $i$ denote the majority-preferred side so that $\widehat p_{ij}\ge 1/2$.
We then set
$$
\omega_{ij}=\widehat p_{ij}-\tfrac12,
\qquad
\omega_{ji}=0,
$$
so that the additive priority on edge $i\to j$ encodes how strongly the Arena crowd prefers $i$ over $j$.
Pairs with fewer than $50$ comparisons are treated as non-edges ($\omega_{ij}=\omega_{ji}=0$).
Larger $\beta$ in \eqref{eq:gpasv} more strongly suppresses permutations that place a majority-losing model before its majority-preferred counterpart.
The resulting graph $G_\omega$ is \emph{not} a DAG: majority preferences form several intransitive triangles among mid-tier models, which is precisely the regime that motivates GPASV over PASV.

\subsection{Aggregator–Judge Pipeline}
\label{app:application-llm-pipeline}

\paragraph{Model choice and inference setup.}
Both the aggregator and the judge are instantiated with \texttt{Qwen3.5-35B-A3B-FP8} \citep{qwen3.5}.
The model is chosen because (i) it is strong enough to aggregate and score MT-Bench responses reliably yet (ii) small enough in FP8 form to be served locally with batched \texttt{vLLM} inference \citep{kwon2023efficient}, which is essential at the cached-computation scale described in Section~\ref{app:application-llm-sampling}.
Using a single model for both roles removes judge--aggregator style mismatches.
We cap generation at $4096$ new tokens per call, and sample with temperature $1.0$, top-$p=0.95$, top-$k=20$, following the official Qwen3.5 recommendation for instruct (non-thinking) mode on reasoning tasks.
Every realized $(S,q)$ output is cached and reused across all priority regimes, so all priority comparisons are made against the same fixed utility realization.

\paragraph{Prompt templates.}
The aggregator and judge prompt templates are shown in Appendix~\ref{app:application-llm-prompts}; we summarize their structure here.
The aggregator receives the user prompt together with a randomly shuffled list of candidate responses, each delimited by explicit start/end markers, and is instructed to merge, edit, and reorganize the candidates while preserving the user's requested format and without adding external facts.
For the second turn, the first-turn aggregated answer $y^{(1)}_S$ is inserted as the previous assistant message before presenting the second-turn candidates.
The no-reference judge prompt consists of an \texttt{[Instruction]} block, the \texttt{[Question]} block, and the assistant answer delimited by start/end markers; the with-reference variant additionally inserts a reference-answer block and asks the judge to evaluate correctness against it.
For second-turn judging, the full two-turn conversation is presented in the \texttt{Assistant A} format, so that coherence across turns is scored rather than each turn in isolation.
In all variants, the judge is asked for a brief explanation followed by a final \texttt{[[rating]]} on the $1$--$10$ scale.
Placeholders \texttt{\{x1\}}, \texttt{\{x2\}}, \texttt{\{y1S\}}, \texttt{\{y2S\}}, \texttt{\{r1\}}, \texttt{\{r2\}}, and \texttt{\{candidate answer $k$\}} are filled by the corresponding MT-Bench prompt, aggregated response, reference answer, or candidate-model response at runtime.

\subsection{Sampling and Computation Protocol}
\label{app:application-llm-sampling}

\paragraph{MH hyperparameters.}
For every priority regime $(\alpha,\beta)$ with $\beta>0$, we draw $N_{\mathrm{MC}}=1000$ permutations from $p^{(\lambda,\omega)}$ using the adjacent-swap Metropolis--Hastings sampler of Section~\ref{sec:adjacent-swap-sampler} with greedy initialization (Algorithm~\ref{alg:gpasv-init}).
We use burn-in $10^5$ and thinning interval $10^3$; at $n=20$ the adjacent-swap chain mixes quickly relative to the simulation-study scales, so this conservative schedule leaves the effective sample size close to $N_{\mathrm{MC}}$.
At the baseline $(\alpha,\beta)=(0,0)$ the sampler reduces to uniform sampling over $\Pi$, which we implement by direct iid uniform draws.

\paragraph{Utility cache design.}
Evaluating one coalition on the full MT-Bench set requires four LLM calls per prompt: aggregation and judging for each of the two turns.
A naive direct Monte Carlo over $N_{\mathrm{MC}}$ permutations of $n$ players with $Q$ prompts would therefore require $4N_{\mathrm{MC}}nQ$ calls, which at $N_{\mathrm{MC}}=1000$, $n=20$, $Q=80$ amounts to $6.4\times 10^6$ calls for a \emph{single} priority regime, and across the $19$ regimes of Section~\ref{sec:application} it would scale linearly in the number of regimes.
We avoid this by caching at the level of (subset, prompt) pairs: each cache row is keyed by a $20$-bit subset mask and an MT-Bench prompt index, and stores both aggregated answers $y^{(1)}_S,y^{(2)}_S$, both raw judge responses, and the two parsed ratings.
Because GPASV only evaluates $U_{\mathrm{ens}}$ on prefix coalitions of sampled permutations, and because neighboring priority regimes along each sweep share most prefixes through the SNIS permutation reuse described next, the realized number of distinct subset evaluations is far below the worst-case $N_{\mathrm{MC}}n$.
Once the cache has been populated along the first sweep, all subsequent priority regimes read the same cache: changing $(\alpha,\beta)$ changes only the sampled order distribution, not $U_{\mathrm{ens}}$.

\paragraph{Sweep traversal and SNIS/ESS reuse.}
The three sweeps of Section~\ref{sec:application} are traversed in increasing temperature order from the shared baseline:
$$
(0,0)\to(1,0)\to(2,0)\to(4,0)\to(8,0)\to(16,0)\to(32,0),
$$
and analogously for the $\beta$-only and joint $\alpha=\beta$ sweeps.
Between neighboring regimes we apply the SNIS/ESS reuse rule from Section~\ref{sec:snis-reuse}.
Given the reweighted effective sample size $\mathrm{ESS}$ of the previous-regime samples under the new target, we add
$$
N_{\mathrm{new}}
=\min\!\left\{N_{\mathrm{MC}},\;\max\!\left(500,\,\lceil N_{\mathrm{MC}}-\mathrm{ESS}\rceil\right)\right\}
$$
fresh permutations from the new target and combine the reweighted estimate with the fresh Monte Carlo estimate using weights proportional to $\mathrm{ESS}$ and $N_{\mathrm{new}}$.
The lower bound of $500$ ensures that even near-overlapping regimes refresh a meaningful fraction of the sample; the upper bound of $N_{\mathrm{MC}}$ caps the cost of any single transition at a full fresh draw.

\paragraph{Realized acceleration.}
Table~\ref{tab:llm-reuse-comparison} reports the realized utility-evaluation cost under the four combinations of the two reuse mechanisms: subset reuse via the cache and permutation reuse via SNIS.
The cache provides the dominant fraction of the saving, reducing the per-prompt distinct subset count by a factor of $\approx 4.9$ on its own (B vs.~A); SNIS trims the effective permutation count from $19{,}000$ to $15{,}840$ along the three sweeps (Table~\ref{tab:llm-snis-savings}) but on its own only yields a $\approx 1.2\times$ reduction (C vs.~A).
Combining both gives a realized $\approx 5.4\times$ reduction over the naive baseline (D vs.~A).

\begin{table}[h!]
\centering
\caption{Realized utility-evaluation cost under the four combinations of subset reuse (cache) and permutation reuse (SNIS).
``$\#$ of Permutations'' is the total number of permutations sampled across the $19$ regimes after SNIS thinning along the three sweeps (with the baseline $(0,0)$ shared once);
``$\#$ of Subsets'' is the number of distinct prefix coalitions evaluated per MT-Bench prompt.}
\label{tab:llm-reuse-comparison}
\small
\setlength{\tabcolsep}{8pt}
\begin{tabular}{@{}lccrr@{}}
\toprule
Scenario & Cache & SNIS & $\#$ of Permutations & $\#$ of Subsets \\
\midrule
A.~No reuse               & \xmark & \xmark & $19{,}000$ & $380{,}000$ \\
B.~Subset reuse only      & \cmark & \xmark & $19{,}000$ & $78{,}220$  \\
C.~Permutation reuse only & \xmark & \cmark & $15{,}840$ & $316{,}800$ \\
D.~Both                   & \cmark & \cmark & $15{,}840$ & $70{,}511$  \\
\bottomrule
\end{tabular}
\end{table}

\begin{table}[h!]
\centering
\caption{Permutations saved by SNIS along the three sweep traversals.
``Without SNIS'' is the $6\,N_{\mathrm{MC}}$ fresh permutations that would be drawn per sweep ($6$ non-baseline temperatures $\times$ $N_{\mathrm{MC}}=1{,}000$); ``with SNIS'' is the realized $\sum N_{\mathrm{new}}$ under the rule above.
The shared baseline $(0,0)$ contributes a single $N_{\mathrm{MC}}=1{,}000$ draw across all three sweeps and is excluded from this table.}
\label{tab:llm-snis-savings}
\small
\setlength{\tabcolsep}{8pt}
\begin{tabular}{@{}lrrr@{}}
\toprule
Sweep & Without SNIS & With SNIS & Saved \\
\midrule
$\alpha$-only ($\beta=0$)  & $6{,}000$  & $4{,}007$  & $1{,}993$ \\
$\beta$-only ($\alpha=0$)  & $6{,}000$  & $5{,}403$  & $597$     \\
joint ($\alpha=\beta$)     & $6{,}000$  & $5{,}430$  & $570$     \\
\midrule
Three-sweep total          & $18{,}000$ & $14{,}840$ & $3{,}160$ \\
\bottomrule
\end{tabular}
\end{table}

\subsection{Full Results}
\label{app:application-llm-results}

Section~\ref{sec:application} showed results for representative regimes;
this subsection reports the full results across all $19$ settings generated by the three one-dimensional sweeps on $(\alpha,\beta)$, in three views:
per-player value trajectories along each sweep (Figure~\ref{fig:llm-player-sweep}),
per-player rank under every $(\alpha,\beta)$ cell (Figure~\ref{fig:llm-rank-heatmap}),
and the top-$8$ ranked players at every individual regime (Figure~\ref{fig:llm-top8-all19}).

\paragraph{Per-player value trajectories.}
Figure~\ref{fig:llm-player-sweep} plots $\psi_i(U_{\mathrm{ens}})$ for each of the $20$ players as a function of the sweep temperature, one panel per sweep ($\alpha$-only, $\beta$-only, joint $\alpha=\beta$).
Solid lines are open-source models and dashed lines are paid.
This view emphasizes the \emph{trajectory} of each player as the priority strength is increased and is the curve-level companion to the group-sum view in Figure~\ref{fig:llm-group}.

\paragraph{Per-player rank on the $(\alpha,\beta)$ grid.}
Figure~\ref{fig:llm-rank-heatmap} reports, for each of the $20$ players, the rank of that player among all $20$ models on every supported $(\alpha,\beta)$ cell of the $7\times 7$ grid (the $19$ cells along the three sweeps; off-support cells are masked white).
Each per-player heatmap shares the same color scale: rank $1$ (best) is dark and rank $20$ (worst) is light.
Per-panel titles are colored by group (paid in dark blue, open-source in burnt orange) for quick reading; the same color convention is used in the bar charts of Figure~\ref{fig:llm-top8-all19}.
This view is the cell-level dual of Figure~\ref{fig:llm-player-sweep}: the trajectory in the latter corresponds to a row or column of cells in the former.

\paragraph{Top-$8$ ranking under every regime.}
Figure~\ref{fig:llm-top8} in the main text visualizes the top-$8$-valued models under representative regimes.
Figure~\ref{fig:llm-top8-all19} extends the same view to all $19$ regimes, arranged as a $7\times 3$ grid: rows index the temperature value ($0$ baseline together with $\{1,2,4,8,16,32\}$) and columns index the sweep type.
Walking down a column traces how the top-$8$ shifts as that sweep is strengthened from the baseline; walking across a row contrasts the three sweep types at a fixed temperature.

\begin{figure}[h!]
\centering
\includegraphics[width=\textwidth]{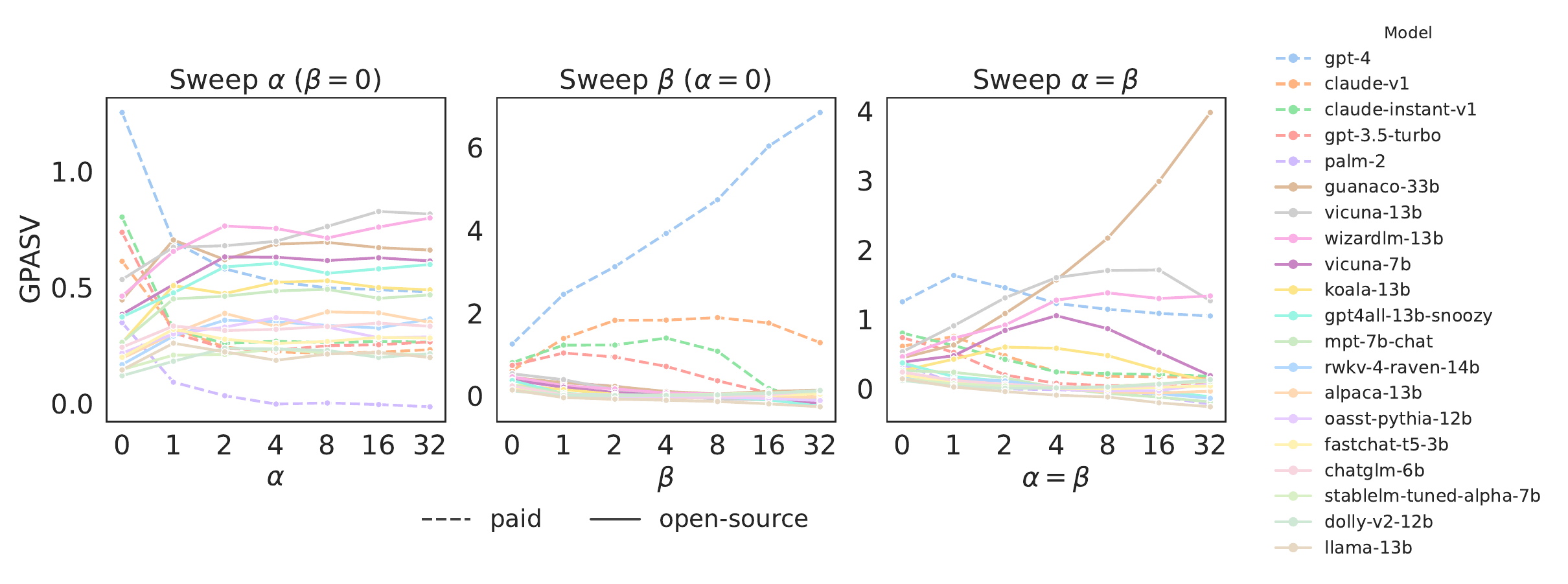}
\caption{Per-player GPASV $\psi_i(U_{\mathrm{ens}})$ along the three sweeps, averaged over the $80$ MT-Bench prompts.
Left: $\alpha$-only sweep ($\beta=0$).
Middle: $\beta$-only sweep ($\alpha=0$).
Right: joint sweep ($\alpha=\beta$).
Solid lines are open-source models; dashed lines are paid (\texttt{gpt-4}, \texttt{gpt-3.5-turbo}, \texttt{claude-v1}, \texttt{claude-instant-v1}, \texttt{palm-2}).}
\label{fig:llm-player-sweep}
\end{figure}

\begin{figure}[h!]
\centering
\includegraphics[width=\textwidth]{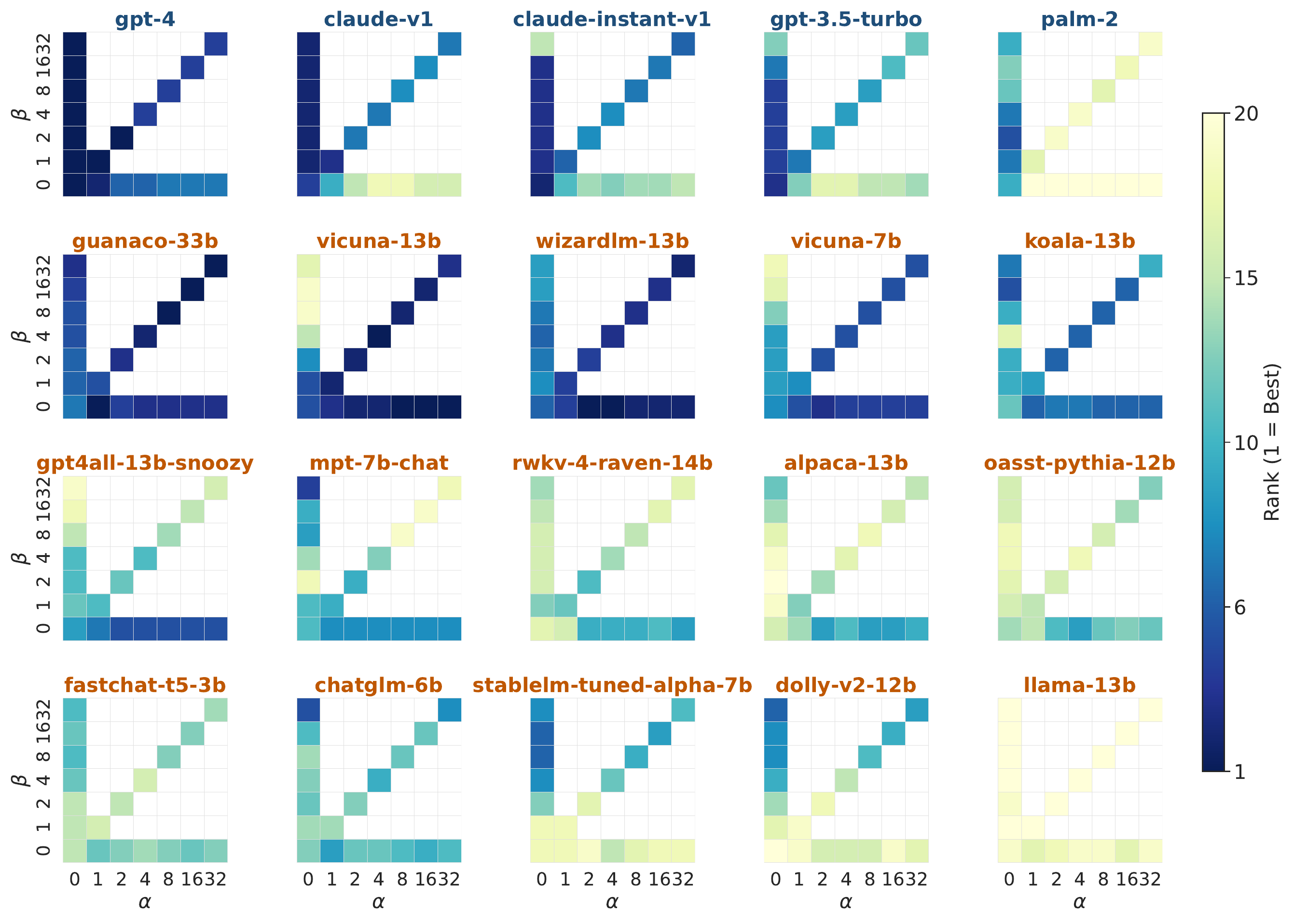}
\caption{Per-player rank heatmaps over the supported $(\alpha,\beta)$ cells.
Each subplot is one player; rows index $\beta$ and columns index $\alpha$ (both on the grid $\{0,1,2,4,8,16,32\}$).
Cells off the three sweep paths are masked.
Per-panel title color encodes paid (blue) vs.\ open-source (orange), matching Figure~\ref{fig:llm-top8-all19}.}
\label{fig:llm-rank-heatmap}
\end{figure}

\begin{figure}[h!]
\centering
\includegraphics[width=\textwidth]{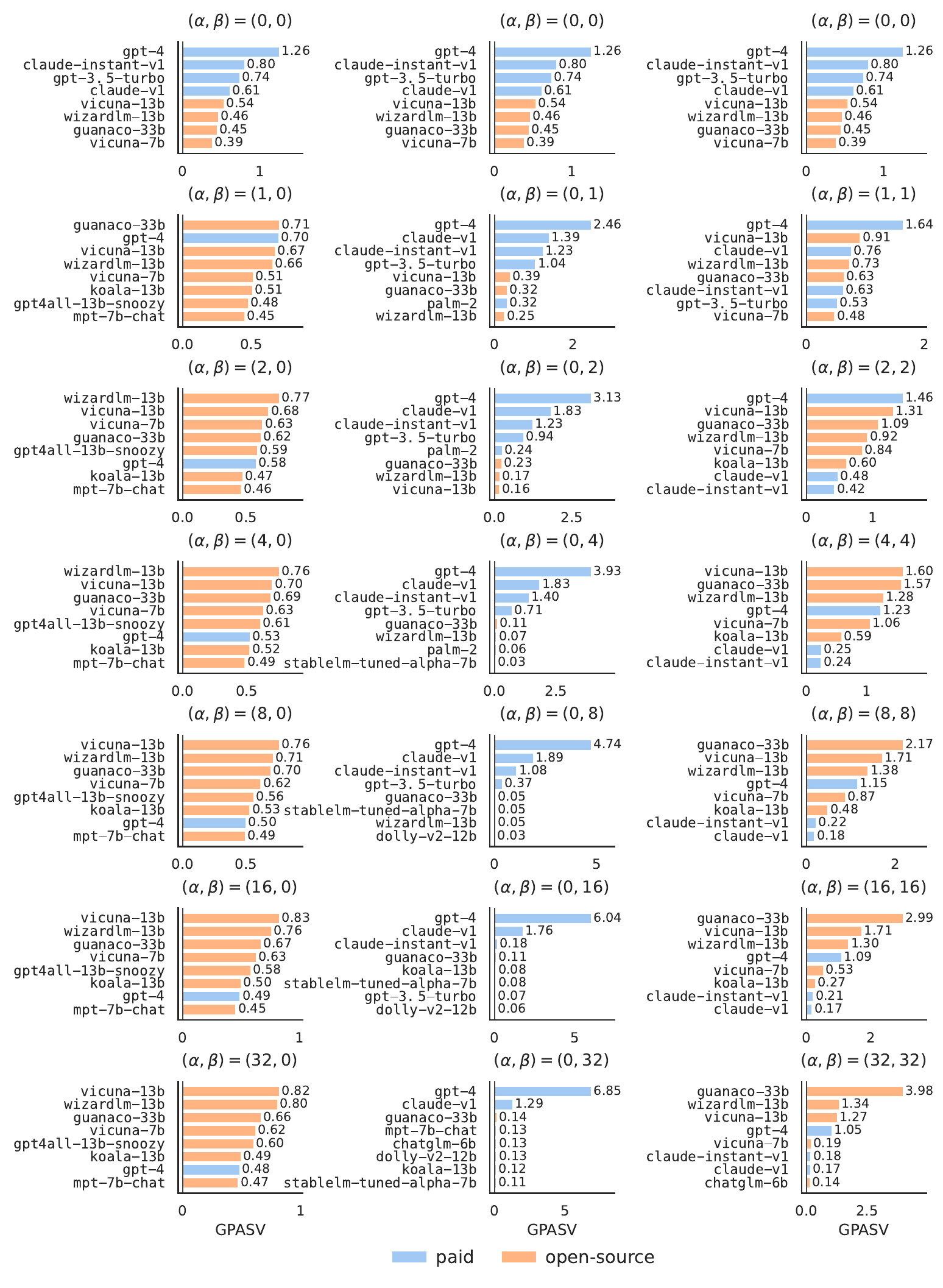}
\caption{Top-$8$-valued models across all $19$ priority regimes.
Rows: temperature values (baseline $(0,0)$ together with $\{1,2,4,8,16,32\}$).
Columns: $\alpha$-only sweep ($\beta=0$), $\beta$-only sweep ($\alpha=0$), joint sweep ($\alpha=\beta$).
Bar color encodes paid vs.\ open-source, matching the color scheme of Figure~\ref{fig:llm-top8} in the main text.}
\label{fig:llm-top8-all19}
\end{figure}

\clearpage

\subsection{Prompt Templates}
\label{app:application-llm-prompts}

\begin{qabox}[Aggregator prompt template: 1st turn]
[Instruction]
You are an impartial aggregator. Given a user prompt and multiple candidate answers, produce one final answer.

Your aggregation is merge-leaning:
- Primarily combine, edit, and reorganize what is already in the candidates.
- Do not add new substantive content or external facts.

Rules:
1) Follow the user's requested format, constraints, and style exactly.
2) Use only information explicitly stated in the candidates. You may add minimal connective wording for readability.
3) If candidates disagree and you cannot resolve it from the candidates, omit the claim or mark it as uncertain.
4) Remove redundancy and irrelevant parts; choose the clearest phrasing among candidates.
5) If JSON/code/strict format is required, keep it valid and do not introduce new APIs/libraries not present in candidates.
6) Two-turn consistency: if the prompt is multi-turn, treat the turn-1 aggregated answer as the assistant's previous message when producing the turn-2 aggregated answer.

After synthesizing, output ONLY the final aggregated answer, with no extra commentary.

[Question]
{x1}

<|The Start of Candidates|>
[The Start of Candidate 1 Answer]
{candidate answer 1}
[The End of Candidate 1 Answer]

...

[The Start of Candidate K Answer]
{candidate answer K}
[The End of Candidate K Answer]
<|The End of Candidates|>
\end{qabox}

\newpage
\begin{qabox}[Aggregator prompt template: 2nd turn]
[Instruction]
You are an impartial aggregator. Given a user prompt and multiple candidate answers, produce one final answer.

Your aggregation is merge-leaning:
- Primarily combine, edit, and reorganize what is already in the candidates.
- Do not add new substantive content or external facts.

Rules:
1) Follow the user's requested format, constraints, and style exactly.
2) Use only information explicitly stated in the candidates. You may add minimal connective wording for readability.
3) If candidates disagree and you cannot resolve it from the candidates, omit the claim or mark it as uncertain.
4) Remove redundancy and irrelevant parts; choose the clearest phrasing among candidates.
5) If JSON/code/strict format is required, keep it valid and do not introduce new APIs/libraries not present in candidates.
6) Two-turn consistency: if the prompt is multi-turn, treat the turn-1 aggregated answer as the assistant's previous message when producing the turn-2 aggregated answer.

After synthesizing, output ONLY the final aggregated answer, with no extra commentary.

<|The Start of Previous Conversation with User|>
### User:
{x1}

### Assistant:
{y1S}

### User:
{x2}
<|The End of Previous Conversation with User|>

<|The Start of Candidates|>
[The Start of Candidate 1 Answer]
{candidate answer 1}
[The End of Candidate 1 Answer]

...

[The Start of Candidate K Answer]
{candidate answer K}
[The End of Candidate K Answer]
<|The End of Candidates|>
\end{qabox}

\newpage
\begin{qabox}[Judge prompt template without reference: 1st turn]
[Instruction]
You are an impartial judge evaluating the quality of an assistant's response.
Evaluate the response based on the following criteria:
- helpfulness
- relevance
- accuracy
- clarity
- completeness

Think carefully about the response before rating it.
Provide at most three short sentences summarizing the main reason for your score.
Then, on a new final line, output the rating strictly in the format "[[rating]]", where rating is an integer from 1 to 10, for example: "[[5]]".
Do not use any other rating format.
Do not output anything after the final rating line.

[Question]
{x1}

[The Start of Assistant's Answer]
{y1S}
[The End of Assistant's Answer]
\end{qabox}

\begin{qabox}[Judge prompt template without reference: 2nd turn]
[Instruction]
You are an impartial judge evaluating the quality of an assistant's response.
Evaluate the response based on the following criteria:
- helpfulness
- relevance
- accuracy
- clarity
- completeness

Think carefully about the response before rating it.
Provide at most three short sentences summarizing the main reason for your score.
Then, on a new final line, output the rating strictly in the format "[[rating]]", where rating is an integer from 1 to 10, for example: "[[5]]".
Do not use any other rating format.
Do not output anything after the final rating line.

<|The Start of Assistant A's Conversation with User|>
### User:
{x1}

### Assistant A:
{y1S}

### User:
{x2}

### Assistant A:
{y2S}
<|The End of Assistant A's Conversation with User|>
\end{qabox}

\newpage
\begin{qabox}[Judge prompt template with reference: 1st turn]
[Instruction]
You are an impartial judge evaluating the quality of an assistant's response using a reference answer.
Evaluate the response by comparing it with the reference answer based on the following criteria:
- correctness
- helpfulness
- relevance
- clarity
- completeness

Think carefully about the response before rating it.
If the assistant's response misses important points or contains mistakes compared with the reference answer, take that into account.
Provide at most three short sentences summarizing the main reason for your score.
Then, on a new final line, output the rating strictly in the format "[[rating]]", where rating is an integer from 1 to 10, for example: "[[5]]".
Do not use any other rating format.
Do not output anything after the final rating line.

[Question]
{x1}

[The Start of Reference Answer]
{r1}
[The End of Reference Answer]

[The Start of Assistant's Answer]
{y1S}
[The End of Assistant's Answer]
\end{qabox}

\newpage
\begin{qabox}[Judge prompt template with reference: 2nd turn]
[Instruction]
You are an impartial judge evaluating the quality of an assistant's response using a reference answer.
Evaluate the response by comparing it with the reference answer based on the following criteria:
- correctness
- helpfulness
- relevance
- clarity
- completeness

Think carefully about the response before rating it.
If the assistant's response misses important points or contains mistakes compared with the reference answer, take that into account.
Provide at most three short sentences summarizing the main reason for your score.
Then, on a new final line, output the rating strictly in the format "[[rating]]", where rating is an integer from 1 to 10, for example: "[[5]]".
Do not use any other rating format.
Do not output anything after the final rating line.

<|The Start of Reference Answer|>
### User:
{x1}

### Reference answer:
{r1}

### User:
{x2}

### Reference answer:
{r2}
<|The End of Reference Answer|>

<|The Start of Assistant A's Conversation with User|>
### User:
{x1}

### Assistant A:
{y1S}

### User:
{x2}

### Assistant A:
{y2S}
<|The End of Assistant A's Conversation with User|>
\end{qabox}

\end{document}